
\documentclass{article}

\usepackage{microtype}
\usepackage{graphicx}
\usepackage{subfigure}
\usepackage{booktabs} 

\usepackage{hyperref}



\usepackage[accepted]{icml2024}

\usepackage{amsmath}
\usepackage{amssymb}
\usepackage{mathtools}
\usepackage{amsthm}

\usepackage[capitalize,noabbrev]{cleveref}

\theoremstyle{plain}
\newtheorem{theorem}{Theorem}[section]

\newtheorem{lemma}[theorem]{Lemma}
\newtheorem{corollary}[theorem]{Corollary}
\theoremstyle{definition}
\newtheorem{definition}[theorem]{Definition}
\newtheorem{assumption}[theorem]{Assumption}
\theoremstyle{remark}

\usepackage[textsize=tiny]{todonotes}

\icmltitlerunning{On the Second-Order Convergence of Biased Policy Gradient Algorithms}

\begin{document}

\twocolumn[
\icmltitle{On the Second-Order Convergence of Biased Policy Gradient Algorithms}



\icmlsetsymbol{equal}{*}

\begin{icmlauthorlist}
\icmlauthor{Siqiao Mu}{yyy}
\icmlauthor{Diego Klabjan}{sch}
\end{icmlauthorlist}

\icmlaffiliation{yyy}{Department of Engineering Sciences and Applied Mathematics, Northwestern University, Evanston, IL}
\icmlaffiliation{sch}{Department of Industrial Engineering and Management Sciences, Northwestern University, Evanston, IL}

\icmlcorrespondingauthor{Siqiao Mu}{siqiaomu2026@u.northwestern.edu}
\icmlcorrespondingauthor{Diego Klabjan}{d-klabjan@northwestern.edu}

\icmlkeywords{Machine Learning, ICML, saddle points, nonconvex optimization, optimization, reinforcement learning, policy gradient}

\vskip 0.3in
]



\printAffiliationsAndNotice{\icmlEqualContribution} 

\begin{abstract}
Since the objective functions of reinforcement learning problems are typically highly nonconvex, it is desirable that policy gradient, the most popular algorithm, escapes saddle points and arrives at second-order stationary points. Existing results only consider vanilla policy gradient algorithms with unbiased gradient estimators, but practical implementations under the infinite-horizon discounted reward setting are biased due to finite-horizon sampling. Moreover, actor-critic methods, whose second-order convergence has not yet been established, are also biased due to the critic approximation of the value function. We provide a novel second-order analysis of biased policy gradient methods, including the vanilla gradient estimator computed from Monte-Carlo sampling of trajectories as well as the double-loop actor-critic algorithm, where in the inner loop the critic improves the approximation of the value function via TD(0) learning. Separately, we also establish the convergence of TD(0) on Markov chains irrespective of initial state distribution.
\end{abstract}

\section{Introduction}

In the standard reinforcement learning framework, an agent interacts with an environment according to some policy, which dictates the best actions to take given the state of the environment. The ultimate goal of the agent is to adopt a policy that maximizes some measure of cumulative reward. To efficiently search for the optimal policy, policy gradient methods optimize a policy parameter $\theta$ through updates that approximate the gradient of the objective function with respect to $\theta$. These algorithms can be fast and flexible, and under mild assumptions, they share convergence properties with mainstream gradient descent algorithms.

The policy gradient can be estimated by way of the policy gradient theorem, which enables the straightforward application of existing techniques in gradient-based optimization towards understanding PG convergence. The theorem provides a formula for the exact gradient of the objective function as the expectation of the state-action value function over the discounted state-action measure \cite{1992SuttonOG}. In practice, the gradient is estimated through two common approaches: the ``vanilla" approach, where the gradient is approximated via Monte-Carlo sampling, or the ``actor-critic" approach, where the policy parameter $\theta$, called the ``actor parameter," is updated simultaneously alongside a ``critic parameter" $w$, which parametrizes the state-action value function. The critic parameter is typically updated via bootstrapping temporal difference methods such as TD(0), although the actor updates may also be bootstrapped \cite{konda, BHATNAGAR20092471NAC}. 

Although it is well-established that policy gradient converges to first-order stationary points where the norm of the gradient is approximately zero \cite{1992SuttonOG, yuan21vanilla,agarwal}, it is of interest whether policy gradient algorithms yield second-order stationary points (local maxima) as opposed to saddle points. This is because the function landscape of RL problems can be highly nonconvex and features suboptimal stationary points even in very simple examples \cite{agarwal, bhandariglobalPGforreal, zhang2020global}. We can utilize seminal results in nonconvex optimization that show that stochastic gradient descent can leverage randomness to escape saddle points either with added noise \cite{gePSGD2015, jinsaddleSGD} or as long as there is a component of noise in the direction of curvature \cite{daneshmand, vlaski2019secondorder}.

However, unlike stochastic gradient descent, practical implementations of policy gradient are frequently biased. For the discounted infinite-horizon reward objective function, the vanilla policy gradient estimator is biased due to truncation of sampled trajectory horizons \cite{yuan21vanilla}. In addition, actor-critic algorithms introduce a second source of bias in the critic's approximation of the value function. This therefore requires novel techniques for controlling and bounding the bias. In comparison, existing works on second-order convergence of policy gradient only consider unbiased gradient estimators; for example, \cite{zhang2020global} require artificially constructing an unbiased gradient estimator via Q-sampling as well as periodic step size enlargement, and \cite{yang2020pgsosp} assume the gradient estimator is unbiased. Both works also only consider the vanilla policy gradient algorithm and not the actor-critic scheme.

In this paper, we address the aforementioned gap by showing that biased policy gradient methods, including both vanilla policy gradient and actor-critic methods, can escape saddle points. Borrowing from the analyses of \cite{vlaski2019secondorder} regarding unbiased stochastic gradient descent, we tackle the biased policy gradient setting by showing that the components of the bias can be sufficiently controlled to yield second-order convergence guarantees.

\textbf{Related Work}

\textit{Escaping Saddle Points.} As referenced earlier, escaping saddle points has become a central research topic in nonconvex optimization in the last few years, with natural extensions to the policy gradient setting. In addition to the seminal works showing the second-order convergence of gradient descent and stochastic gradient descent \cite{gePSGD2015, jinsaddleSGD, Jin2017howtoescape, daneshmand, vlaski2019secondorder}, an additional body of work has focused on second-order methods that use Hessian information to arrive at second-order stationary points faster. Some examples of reinforcement learning approaches in this direction include \cite{2019shenhessianaided, WANG2022109687, khorasani2023efficiently}. However, these works only consider unbiased gradient estimators. For an in-depth comparison of sample complexities, see Appendix \ref{app:secondorder}.

\textit{Global Convergence.} Separate from our line of work, there are several ``global convergence" results for policy gradient and actor-critic algorithms that ensure convergence to a global optimum for specific policy parametrization or function structure. For instance, global convergence is established for the following settings: tabular or tabular softmax policy parametrization with exact gradients \cite{agarwal, bhandari21aglobalconvPG}, objective functions that satisfy the gradient domination property \cite{bhandariglobalPGforreal}, linear quadratic and nearly linear-quadratic control systems \cite{yanglqr, han2022policy}, and overparametrized neural networks \cite{wangzhaoranNeuralPG, FuAC}. The premise of ``global convergence" also requires some assumption that the proposed policy parametrization can approximate the optimal policy with arbitrary precision, i.e., the optimal policy lies within the policy class. In comparison, our second-order convergence guarantees pertain to finding the best policy parameter $\theta$ under a generic policy parametrization, for policy gradient algorithms with noisy and biased updates. For an in-depth discussion of global convergence rates, see Appendix \ref{app:global}.

\textit{Vanilla Policy Gradient.} In this work, we refer to policy gradient algorithms that estimate the gradient via Monte-Carlo sampling of trajectories as ``vanilla policy gradient." Early formulas  include REINFORCE \cite{Williams2004SimpleSG} as well as GPOMDP \cite{bartlettbaxter}, a version of REINFORCE that enjoys reduced variance \cite{PETERS2008682} by employing the "reward-to-go" trick. Our work pertains to the GPOMDP estimator. Both REINFORCE and GPOMDP are unbiased estimators of objectives with deterministic, fixed horizons, but they are biased estimators of the infinite-horizon discounted reward due to truncation \cite{yuan21vanilla}. 

\textit{Actor-critic Algorithms.} The asymptotic convergence of actor-critic algorithms was first established in \cite{konda} for gradient actor updates and bootstrapped critic updates, and in \cite{BHATNAGAR20092471NAC} for bootstrapped actor and critic updates. Since then, finite-time convergence has been established for a variety of actor-critic frameworks, although to the best of our knowledge no second-order convergence result exists. In this work we consider double-loop actor-critic algorithms where the critic parameter undergoes TD(0) updates in the inner loop and the actor parameter undergoes gradient updates in the outer loop. Several works have shown first-order convergence of these algorithms with various additional settings; \cite{yangACiid} consider i.i.d sampling in the actor and the critic, \cite{qiuACzhaoran} consider i.i.d. policy samples and critic sampling from a stationary Markov chain, and \cite{xuAC2020nestedloop} study mini-batch Markovian sampling for controlling the bias error. Separately, \cite{wangzhaoranNeuralPG} establish global convergence for double-loop algorithms where the actor and critic functions are compatible overparametrized neural networks. In comparison, we consider linear critic parametrization and arbitrary actor parametrization with Markovian sampling, and we do not require the Markov chain to be stationary. 

Recently, two-timescale \cite{xuAC, wuACquanquan} and single-timescale \cite{FuAC, olshevsky2023smallgain} actor-critic algorithms have shown a slight performance advantage over double-loop algorithms, although existing works still only analyze their first-order or global convergence under specific function parametrizations, while we focus on second-order convergence. 

\textit{Temporal Difference Algorithms.} Actor-critic algorithms typically feature some bootstrapping element; in particular, we consider actor-critic algorithms where the critic updates via TD(0) learning. The finite-time convergence of TD(0) has been established recently under independent and identically distributed sampling \cite{dalalTD, kumar} as well as under Markovian sampling \cite{liusplitting, bhandariTD}. The latter is more relevant to the RL setting; however, both \cite{bhandariTD} and \cite{liusplitting} assume the Markov chain begins in the stationary distribution, which is unrealistic for practical implementations such as in actor-critic algorithms. 

\textbf{Our Contributions} 

Our contributions are as follows.
\begin{itemize}
    
    \item We provide the first finite-time convergence analysis of vanilla policy gradient with biased gradient estimator to $\epsilon$-second-order stationary points. This results in a sample complexity of $\tilde{O}(\epsilon^{-6.5})$ iterations, where $\tilde{O}(\cdot)$ hides logarithmic dependencies. We note that this is a stronger result than $\tilde{O}(\epsilon^{-9})$ from \cite{zhang2020global} and a weaker result than $\tilde{O}(\epsilon^{-4.5})$ from \cite{yang2020pgsosp}, both of which only analyze unbiased gradient estimators.

    \item We provide the first finite-time convergence analysis of an actor-critic policy gradient algorithm to $\epsilon$-second-order stationary points. We show that our double-loop actor-critic algorithm, where the critic updates via $TD(0)$ and the actor updates via policy gradient, arrives at an $\epsilon$-second-order stationary point in $\tilde{O}(\epsilon^{-6.5})$ outer loop iterations with $\tilde{O}(\epsilon^{-8})$ inner-loop TD(0) steps.  In contrast to existing first-order analyses of actor-critic algorithms, we allow for Markovian sampling in both the actor and the critic. 

    \item Of separate interest, we provide the first finite-time convergence analysis of the classic TD(0) algorithm on nonstationary Markov chains; i.e., we do not assume that the initial state distribution is the stationary distribution of the Markov chain. This allows realistic analyses of the actor-critic setting, where we have no guarantee that after every policy update the new underlying Markov chain is in its stationary distribution. We show that for $K$ constant timesteps $\alpha = \frac{1}{\sqrt{K}}$ and exponential mixing, the algorithm converges at the rate of $O(\frac{1}{\sqrt{K}}) + O(\frac{1}{K})$, as opposed to $O(\frac{1}{\sqrt{K}})$ when starting from the stationary distribution.
    
\end{itemize}

The structure of the paper is as follows. In Section 2, we formalize the problem and introduce notation used in the rest of the paper. In Section 3, we establish second-order convergence for biased policy gradient estimators and apply our results to vanilla policy gradient. In Section 4, we present our new analysis of the TD(0) algorithm, which is incorporated to bound the critic approximation bias and show second-order convergence of the actor-critic algorithm. Finally, in Section 5, we summarize our results and discuss the next steps of our work.

\section{Problem Formulation}

\subsection{Markov Decision Process}

We define the Markov decision process as a quadruple $( \mathcal{S}, \mathcal{A}, \mathcal{P}, \mathcal{R})$, where $\mathcal{S}$ is the state space, $\mathcal{A}$ is the action space, $\mathcal{P}(s' | s, a)$ is the transition probability from state $s$ to state $s'$ by taking action $a$, and $\mathcal{R}(s, a)$ is the reward function for performing action $a$ in state $s$. The agent is trying to learn a stochastic policy $\pi : \mathcal{S} \to \Delta(\mathcal{A})$, where $\Delta (\mathcal{A})$ is the space of probability distributions over $\mathcal{A}$, such that $\pi(a | s)$ is the probability that the agent performs action $a$ given state $s$. As the agent interacts with the environment, it generates a sequence of states, actions, and rewards referred to as a trajectory 
$\tau = \{ s_0, a_0, s_1, a_1, ... \}.$ The trajectory is sampled from the probability distribution $p( \cdot | \pi)$, which describes the probability of a trajectory generated by some policy $\pi$, where $a_k \sim \pi(\cdot | s_k)$ and $s_{k+1} \sim \mathcal{P}(\cdot | s_k, a_k)$. 

We want to learn a policy $\pi$ that maximizes the expected infinite-horizon discounted reward, $J$. For policy gradient algorithms, we parameterize the policy $\pi$ with some parameter $\theta \in \mathbb{R}^M$ so that $J$ is a function of $\theta$ to obtain
\begin{equation*}
\begin{aligned}
    J(\theta) &= \mathbb{E}_{s_0 \sim \rho_0(\cdot), \tau \sim p( \cdot | \pi_\theta )} [\sum_{k = 0}^\infty \gamma^k \mathcal{R}(s_k, a_k) ] 
\end{aligned}
\end{equation*}
where $\gamma \in (0, 1)$ is the discount factor and the expectation is taken with respect to an initial state distribution $s_0 \sim \rho_0(\cdot)$ and a stochastic policy $\pi_\theta$ under which trajectories are sampled $\tau \sim p(\cdot | \pi_\theta)$. The RL problem is to find an optimal policy parameter $\theta^*$ such that $\theta^* = \arg \max_\theta J(\theta)$.

We also define the state value function and state-action value function  as $V^\pi (s)~=~\mathbb{E}_\pi[\sum_{k=0}^\infty \gamma^k \mathcal{R}(s_k, a_k) | s_0~=~s]$  and $Q^\pi(s, a)~=~\mathbb{E}_\pi[\sum_{k=0}^\infty \gamma^k \mathcal{R}(s_k, a_k) | s_0~=~s, a_0~=~a]$ respectively,
such that the objective can be alternatively formulated as
$J(\theta) = \mathbb{E}_{s \sim \rho_0}[V^{\pi_\theta}(s)].$
Finally, we reference the discounted state-weighting measure as 
$d^{\pi_\theta, \rho_0} (s)~=~\sum_{k = 0}^\infty \gamma^k \mathbb{E}_{\rho_0}[Pr(s_k~=~s~| s_0, \pi_\theta)].$

\subsection{Policy Gradient Algorithm}
\begin{algorithm}
\caption{Biased Policy Gradient Algorithm}\label{alg:pseudo}
\begin{algorithmic} 
\STATE {\bfseries Input:} initial policy parameters $\theta_0$
\FOR{$t = 0, 1, 2, ... T-1$}
    \STATE Sample a trajectory $\tau_t$ of length $H$ under $\pi_{\theta_t}$, 
    \begin{equation*}
        \tau_t = \{s_0, a_0, s_1, a_1,... s_{H-1}, a_{H-1} \}
    \end{equation*}
    \STATE where $s_0 \sim \rho_0(s)$ 
    \STATE Compute $\hat{G}(\theta_t; \tau_t)$ via \textbf{Algorithm 2} or \textbf{Algorithm 3}
    
    \STATE Update $\theta_{t+1} = \theta_t + \mu \hat{G}(\theta_t; \tau_t)$
\ENDFOR
\end{algorithmic}
\end{algorithm}
In this work we consider policy gradient algorithms of the form shown in Algorithm \ref{alg:pseudo}. We note that these gradient algorithms can be mini-batched for additional variance reduction, but our analysis does not rely on batching and we omit this notation for simplicity. At each time step $t+1$, a random trajectory $\tau_t$ is sampled and $\hat{G}(\theta_t ; \tau_t)$, a biased estimator of the true policy gradient $\nabla J(\theta_t)$, is computed. Then the policy parameter $\theta$ is updated with step-size $\mu$ as follows
\begin{equation}
\label{eq:generalpg}
    \theta_{t+1} = \theta_t + \mu \hat{G}(\theta_t ; \tau_t).
\end{equation}
Let $G(\theta_t) = \mathbb{E}_{\tau}[\hat{G}(\theta_t ; \tau)]$, where the expectation is taken with respect to the sampled trajectory $\tau$. Then in order to analyze the convergence of Algorithm 1, we decompose the updates in terms of the noise $\xi_{t+1}$ and bias $d_{t+1}$ to obtain
\begin{equation*} \begin{aligned}
     \theta_{t + 1} &= \theta_{t} + \mu \nabla J (\theta_{t}) + \mu \xi_{t + 1} + \mu d_{t + 1},
\end{aligned} \end{equation*}
where $\xi_{t + 1} = \hat{G}(\theta_{t} ; \tau_t) - G(\theta_{t})$ represents a zero-mean noise term induced from sampling the trajectory of length $H$ and $d_{t + 1} =  G(\theta_{t}) - \nabla J (\theta_{t})$ represents the bias induced by the gradient estimator algorithm. Specifically, we denote by $\{\mathcal{F}_t \}_{t \geq 0}$ the filtration generated by the random process $\{ \theta_t \}_{t \geq 0}$ such that $\theta_t$ is measurable with respect to $\mathcal{F}_t$. Then the gradient noise process $\{\xi_t\}_{t \geq 0}$ satisfies 
\begin{equation*}
\mathbb{E}[\xi_{t+1} | \mathcal{F}_{t} ] = \mathbb{E}[\hat{G}(\theta_{t} ; \tau_t) - G(\theta_{t}) | \mathcal{F}_{t} ] = 0.
\end{equation*}
 Although $d_{t+1}$ might be random or deterministic depending on the gradient estimator algorithm, it is generated by a different process than $\xi_{t+1}$.
In the sequel, our analyses rely on assumptions on $\xi_{t+1}$ and $d_{t+1}$.

\subsection{Second-Order Stationary Points}
We define second-order stationary points and approximate second-order stationary points as follows \cite{Jin2017howtoescape, nesterovcubic}. 

\begin{definition}
    For the twice-differentiable function $J(\theta)$, $\theta$ is a second-order stationary point if $\lVert \nabla J(\theta) \lVert = 0$ and $\lambda_{max} (\nabla^2 J(\theta)) \leq 0$.
In addition, if $\nabla^2 J(\theta)$ is $\chi$-Lipschitz, $\theta$ is an $\epsilon $-second order stationary point of $J(\theta)$ if
$\lVert \nabla J(\theta) \lVert ~\leq~\epsilon$ and 
$\lambda_{max} (\nabla^2 J(\theta)) \leq \sqrt{\chi \epsilon}.$
\end{definition}

In line with the \textit{strict-saddle} definition introduced in \cite{gePSGD2015} and later used in \cite{jinsaddleSGD, Jin2017howtoescape, daneshmand}, we focus on escaping saddle points with at least one strictly positive eigenvalue. We divide the parameter space of the objective function $J(\theta)$ into regions where the gradient is large or small with respect to the step-size $\mu$, which we assume to be a small hyperparameter. We define the following sets
$$\mathcal{G} = \left\{ \theta: \lVert \nabla J(\theta) \rVert ^2 \geq  \mu \ell \left( 1 + \frac{1}{\delta} \right) \right\},$$
$$\mathcal{H} = \{ \theta : \theta \in \mathcal{G}^C, \lambda_{max} ( \nabla^2 J(\theta)) \geq \omega \} ,$$
$$\mathcal{M} = \{ \theta : \theta \in \mathcal{G}^C, \lambda_{max}(\nabla^2 J(\theta)) < \omega \}, $$
where $\ell > 0$ is a parameter depending on the problem and $\delta > 0$, $\omega > 0$ are parameters depending on the desired accuracy of the algorithm that we choose later on. The set $\mathcal{G}^C$ represents approximate first-order stationary points, and within $\mathcal{G}^C$, the region $\mathcal{H}$ represents ``strict-saddle" points, whereas the region $\mathcal{M}$ represents approximately second-order stationary points. Specifically, points in $\mathcal{M}$ represent the set of local maxima with respect to first and second-order information.

\section{Second-Order Convergence of Vanilla Policy Gradient}

In this section, we establish the second-order convergence of biased policy gradient in general and apply it to vanilla policy gradient. 
We begin with the original policy gradient theorem \cite{1992SuttonOG}, which states
$$\nabla J(\theta) = \sum_{s \in \mathcal{S}} d^{\pi_\theta, \rho_0}(s) \sum_{a \in \mathcal{A}} \pi_\theta(a | s) \nabla \log \pi_\theta (a | s) Q^{\pi_\theta}(s, a),$$
which is equivalent to the temporal summation
$$\nabla J(\theta) = \mathbb{E}_{\pi_\theta, \rho_0} [\sum_{k = 0}^\infty \sum_{t = k}^\infty \gamma^{t} \nabla \log \pi_\theta(a_k|s_k) \mathcal{R}(s_{t}, a_{t}) ].$$
For further details on the connection between these two formulations, see Appendix \ref{PGT}.

Through Monte-Carlo sampling of trajectories $\tau$ of fixed length $H$, we construct the GPOMDP gradient estimator \cite{bartlettbaxter} as follows
$$\hat{G}^{VPG}(\theta ; \tau) = \sum_{h = 0}^{H-1} \nabla \log \pi_\theta (a_h | s_h) \sum_{i = h}^{H-1} \gamma^i \mathcal{R} ( s_i, a_i) . $$
\begin{algorithm} 
\caption{Vanilla Policy Gradient Estimator}\label{alg:vanilla}
\begin{algorithmic} 
\STATE \textbf{Input: }policy parameter $\theta_t$, trajectory $\tau_t=\{s_0, a_0, s_1, a_1,... s_{H-1}, a_{H-1} \}$
\STATE Compute gradient estimator from $\tau_t$:
$$\hat{G}^{VPG}(\theta_t ; \tau_t) = \sum_{h = 0}^{H-1} \nabla \log \pi_\theta (a_h | s_h) \sum_{i = h}^{H-1} \gamma^i \mathcal{R} ( s_i, a_i)  $$
\STATE \textbf{return} $\hat{G}^{VPG}(\theta_t ; \tau_t)$
\end{algorithmic}
\end{algorithm}

This yields the ``vanilla" policy gradient algorithm as outlined in Algorithm \ref{alg:vanilla}. We define the truncated or finite-horizon objective $J_H(\theta)~=~\mathbb{E}_{\pi_\theta, \rho_0}  [\sum_{t = 0}^{H - 1} \gamma ^t \mathcal{R}(s_t, a_t)]$. As observed in \cite{yuan21vanilla, zhang2020global, wu22sensitivity}, since we cannot practically sample infinite-horizon trajectories, $\hat{G}^{VPG} (\theta ; \tau)$ is a \textit{biased} gradient estimator of $J(\theta)$ and an unbiased gradient estimator of $J_H(\theta)$, such that
$$\mathbb{E}[\hat{G}^{VPG}(\theta ; \tau)] = \nabla J_H(\theta) \neq \nabla J(\theta).$$


\subsection{Assumptions} 
We require the following assumptions for the convergence of biased policy gradient algorithms.  

\begin{assumption}
\label{ass_1boundedreward}
    The following conditions hold for all $(s, a) \in \mathcal{S} \times \mathcal{A}$ and $\theta$:
    \begin{enumerate}
        \item The rewards are bounded such that there exists $\mathcal{R}_{max} > 0$ with $|\mathcal{R}(s, a) | \leq \mathcal{R}_{max}$.
        \item The policy score function $\nabla \log \pi_\theta (a | s)$ exists and its norm is bounded by a constant $G > 0$ such that
$\lVert \nabla \log \pi_\theta (a | s) \rVert \leq G$.
        \item The Jacobian of the score function exists and its norm is bounded by a constant $B > 0$ such that $\lVert  \nabla^2 \log \pi_{\theta} (a | s) \rVert < B$, and it is Lipschitz continuous such that for all $\theta_1, \theta_2$, we have
        $$\lVert \nabla^2 \log \pi_{\theta_1} (a | s) - \nabla^2 \log \pi_{\theta_2} (a | s) \rVert \leq \iota \lVert \theta_1 - \theta_2 \rVert.$$
    \end{enumerate}
\end{assumption}

In Assumption~\ref{ass_1boundedreward}, we assume that the reward and the policy log gradient are bounded, assumptions first put forward in \cite{PAPINI} and since widely adopted in many theoretical analyses of policy gradient \cite{zhuheteroFLrL, zhang2020global, yang2020pgsosp}. Assumption \ref{ass_1boundedreward} is satisfied by commonly used policy parametrizations, including the softmax policy parametrization $\pi_{\theta}(a | s) = \frac{e^{h(s, a, \theta)}}{\sum_b e^{h(s, b, \theta)}}$. The action preferences $h(s, a, \theta)$ can be parametrized via deep neural networks or other functions of the feature vectors $\phi(s, a)$. The assumption is also satisfied by Gaussian policies such as $\pi_\theta(a | s) \sim \mathcal{N}(\phi(s)^\intercal \theta, \sigma^2)$ if the parameter $\theta$ lies in some bounded set and the actions and the feature vectors $\phi(s)$ are bounded \cite{zhang2020global}. In addition, Assumption \ref{ass_1boundedreward} has several important implications as follows.

\begin{lemma}
\label{lemma_lipschitz}
(Lemma 3.2 of \cite{zhang2020global}) 
    The score function $\nabla \log \pi_{\theta}(a | s)$ is $B$-Lipschitz continuous. Moreover, the policy gradient $\nabla J(\theta)$ is Lipschitz continuous such that for all $\theta_1$, $\theta_2$, we have
    $$\lVert \nabla J(\theta_1) - \nabla J (\theta_2) \rVert \leq L \lVert \theta_1 - \theta_2 \rVert $$
    where $L = \frac{\mathcal{R}_{max} B}{(1 - \gamma)^2} + \frac{(1 + \gamma) \mathcal{R}_{max} G^2}{(1 - \gamma)^3}$.
\end{lemma}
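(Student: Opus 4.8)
The first claim is immediate from Assumption~\ref{ass_1boundedreward}: since $\nabla^2 \log \pi_\theta(a|s)$ exists with $\lVert \nabla^2 \log \pi_\theta(a|s)\rVert < B$ for all $\theta$, the map $\theta \mapsto \nabla \log \pi_\theta(a|s)$ has a Jacobian bounded in operator norm by $B$, so by the mean value inequality it is $B$-Lipschitz. For the second claim, rather than bounding $\lVert \nabla J(\theta_1) - \nabla J(\theta_2)\rVert$ directly, the plan is to establish the uniform Hessian bound $\sup_\theta \lVert \nabla^2 J(\theta)\rVert \le L$, which yields the desired $L$-Lipschitz continuity of $\nabla J$. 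First I would verify that $J$ is twice differentiable and that differentiation may be exchanged with the expectation and the infinite summation; this follows from dominated convergence using the uniform bounds $\lvert \mathcal{R}\rvert \le \mathcal{R}_{max}$, $\lVert \nabla \log \pi_\theta\rVert \le G$, and $\lVert \nabla^2 \log \pi_\theta\rVert < B$ together with the geometric weights $\gamma^t$.

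Starting from the trajectory form of the policy gradient theorem stated above, I would first reorder the double sum to write $\nabla J(\theta) = \mathbb{E}_{\tau}[\sum_{t=0}^\infty \gamma^t \mathcal{R}(s_t,a_t) \sum_{k=0}^t \nabla \log \pi_\theta(a_k|s_k)]$. Differentiating in $\theta$ produces two pieces, one from the integrand and one from the sampling density $P_\theta(\tau)$ via the likelihood-ratio identity $\nabla_\theta \log P_\theta(\tau) = \sum_{j=0}^\infty \nabla \log \pi_\theta(a_j|s_j)$. Concretely, $\nabla^2 J(\theta) = T_1 + T_2$ where $T_1 = \mathbb{E}_\tau[\sum_{t=0}^\infty \gamma^t \mathcal{R}(s_t,a_t)\sum_{k=0}^t \nabla^2 \log \pi_\theta(a_k|s_k)]$ and $T_2 = \mathbb{E}_\tau[(\sum_{t=0}^\infty \gamma^t \mathcal{R}(s_t,a_t)\sum_{k=0}^t \nabla \log \pi_\theta(a_k|s_k))(\sum_{j=0}^\infty \nabla \log \pi_\theta(a_j|s_j))^\top]$.

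The term $T_1$ is routine: using $\lVert \nabla^2 \log \pi_\theta\rVert < B$ and $\lvert \mathcal{R}\rvert \le \mathcal{R}_{max}$ gives $\lVert T_1\rVert \le \mathcal{R}_{max} B \sum_{t=0}^\infty (t+1)\gamma^t = \frac{\mathcal{R}_{max} B}{(1-\gamma)^2}$, exactly the first term of $L$. The term $T_2$ is the main obstacle, since the factor $\sum_{j=0}^\infty \nabla \log \pi_\theta(a_j|s_j)$ has norm growing linearly in the (infinite) horizon and does not converge absolutely on its own. The key step is to exploit the zero-mean property $\mathbb{E}_{a\sim\pi_\theta(\cdot|s)}[\nabla \log \pi_\theta(a|s)] = 0$ together with causality: for any fixed reward index $t$ and score index $k \le t$, the quantity $\gamma^t \mathcal{R}(s_t,a_t)\nabla \log \pi_\theta(a_k|s_k)$ is measurable with respect to the trajectory up to time $t$, so conditioning on the history before $a_j$ shows that every cross term with $j > t$ vanishes in expectation. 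This truncates the divergent inner sum to $j \le t$, giving $T_2 = \mathbb{E}_\tau[\sum_{t=0}^\infty \gamma^t \mathcal{R}(s_t,a_t)(\sum_{k=0}^t \nabla \log \pi_\theta(a_k|s_k))(\sum_{j=0}^t \nabla \log \pi_\theta(a_j|s_j))^\top]$.

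With this truncation, $\lVert T_2\rVert \le \mathcal{R}_{max} G^2 \sum_{t=0}^\infty (t+1)^2 \gamma^t = \frac{(1+\gamma)\mathcal{R}_{max} G^2}{(1-\gamma)^3}$, the second term of $L$, using $\sum_{t=0}^\infty (t+1)^2\gamma^t = \frac{1+\gamma}{(1-\gamma)^3}$. Combining yields $\lVert \nabla^2 J(\theta)\rVert \le \lVert T_1\rVert + \lVert T_2\rVert \le L$ uniformly in $\theta$, which proves the claim. Besides the truncation argument, the only other delicate point is the rigorous justification of interchanging differentiation with the expectation and the infinite sum, which I expect to be dispatched cleanly by the dominated-convergence bounds noted above; the zero-mean/causality truncation of $T_2$ is where the real work lies.
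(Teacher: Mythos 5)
Your proof is correct: the paper does not prove this lemma itself but imports it verbatim from Lemma 3.2 of the cited reference, and your argument --- bounding $\lVert \nabla^2 J(\theta)\rVert$ via the decomposition into the $\nabla^2\log\pi_\theta$ term and the score outer-product term, using the zero-mean/causality truncation of the likelihood-ratio factor and the identities $\sum_{t\ge 0}(t+1)\gamma^t = (1-\gamma)^{-2}$ and $\sum_{t\ge 0}(t+1)^2\gamma^t = (1+\gamma)(1-\gamma)^{-3}$ --- reproduces exactly the constant $L$ and is essentially the proof given in that reference. No gaps.
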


\begin{lemma}
\label{lemma_hessian}
    (Lemma 5.4 from \cite{zhang2020global}) The Hessian matrix of $J(\theta)$ is Lipschitz continuous such that for all $\theta_1$, $\theta_2$, we have 
    $$\lVert \nabla^2 J(\theta_1) - \nabla^2 J(\theta_2) \rVert \leq \chi \lVert \theta_1 - \theta_2 \rVert $$
    where $\chi = \frac{\mathcal{R}_{max} G B}{(1 - \gamma)^2} + \frac{\mathcal{R}_{max} G^3 (1 + \gamma)}{(1 - \gamma)^3} + \frac{\mathcal{R}_{max} G}{1 - \gamma} \cdot \max \{ B, \frac{G^2 \gamma}{1 - \gamma}, \frac{\iota}{G}, \frac{B \gamma}{1 - \gamma}, \frac{G^2 (1 + \gamma) + B (1 - \gamma) \gamma}{(1 - \gamma)^2}\}$.
\end{lemma}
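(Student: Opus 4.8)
The plan is to first obtain a closed form for $\nabla^2 J(\theta)$ and then bound the difference $\nabla^2 J(\theta_1) - \nabla^2 J(\theta_2)$ factor by factor. Starting from the likelihood-ratio identity $\nabla J(\theta) = \mathbb{E}_{\tau}[\nabla \log p(\tau \mid \pi_\theta)\, R(\tau)]$ with $R(\tau) = \sum_{t=0}^\infty \gamma^t \mathcal{R}(s_t, a_t)$, I would differentiate once more under the integral sign to obtain
$$\nabla^2 J(\theta) = \mathbb{E}_{\tau}\left[\left(\nabla^2 \log p(\tau \mid \pi_\theta) + \nabla \log p(\tau \mid \pi_\theta)\, \nabla \log p(\tau \mid \pi_\theta)^\intercal\right) R(\tau)\right].$$
Since $\log p(\tau \mid \pi_\theta) = \log \rho_0(s_0) + \sum_k \left(\log \pi_\theta(a_k \mid s_k) + \log \mathcal{P}(s_{k+1}\mid s_k,a_k)\right)$, only the policy factors survive differentiation, so $\nabla \log p = \sum_k \nabla \log \pi_\theta(a_k\mid s_k)$ and $\nabla^2 \log p = \sum_k \nabla^2 \log \pi_\theta(a_k\mid s_k)$. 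I would then invoke causality (the reward-to-go structure) to discard, in expectation, the cross terms pairing $\nabla \log \pi_\theta(a_k\mid s_k)$ with $\mathcal{R}(s_t,a_t)$ for $k > t$, which truncates the otherwise divergent double sum and leaves a discounted, horizon-limited expression whose geometric tails produce the $(1-\gamma)^{-2}$ and $(1-\gamma)^{-3}$ factors.

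With the explicit formula in hand, I would establish the bound by controlling the third derivative via $\nabla^2 J(\theta_1) - \nabla^2 J(\theta_2) = \int_0^1 \nabla^3 J(\theta_2 + s(\theta_1-\theta_2))[\theta_1-\theta_2]\,ds$, so that $\lVert \nabla^2 J(\theta_1) - \nabla^2 J(\theta_2)\rVert \leq \sup_\theta \lVert \nabla^3 J(\theta)\rVert_{\mathrm{op}}\, \lVert \theta_1 - \theta_2\rVert$; equivalently, I would telescope the difference directly, replacing $\theta_1$ by $\theta_2$ in one factor of the Hessian integrand at a time. The integrand is a product of three $\theta$-dependent objects: the trajectory measure $p(\tau\mid\pi_\theta)$, the Jacobian terms $\nabla^2 \log \pi_\theta$, and the score products $\nabla \log \pi_\theta\, \nabla \log \pi_\theta^\intercal$. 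For the Jacobian terms I would use the $\iota$-Lipschitz continuity assumed in Assumption~\ref{ass_1boundedreward}; for the score products I would combine the $B$-Lipschitz continuity of the score (Lemma~\ref{lemma_lipschitz}) with the uniform bound $\lVert \nabla \log \pi_\theta\rVert \leq G$; and the bounded reward $\lvert \mathcal{R}\rvert \leq \mathcal{R}_{max}$ together with the discount tails keeps every resulting sum finite.

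I expect the main obstacle to be the change-of-measure term, where $\theta$ is varied inside $p(\tau\mid\pi_\theta)$ rather than in an explicit score or Jacobian factor. Differentiating the measure reintroduces a score factor $\nabla \log \pi_\theta$ for each time step along the trajectory, and multiplying this by the already score-heavy Hessian integrand and by the discounted reward at time $t$ generates a triple accumulation of factors $\nabla \log \pi_\theta(a_i)\,\nabla \log \pi_\theta(a_j)\,\nabla \log \pi_\theta(a_k)^\intercal$ summed over indices up to $t$. Bounding this accumulation uniformly while extracting the precise geometric constants is the delicate step, and it is exactly this term that feeds the highest-order $(1-\gamma)^{-3}$ contributions (such as the $\mathcal{R}_{max}G^3(1+\gamma)/(1-\gamma)^3$ summand) as well as several entries of the $\max\{\cdots\}$ factor in $\chi$. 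Rigorously, I would first justify interchanging differentiation with the infinite-horizon expectation by dominated convergence, licensed by the bounded reward and bounded score, then perform the $\gamma$-weighted counting of how many score and Jacobian factors multiply each $\gamma^t \mathcal{R}(s_t,a_t)$ term, and finally collect like powers of $(1-\gamma)^{-1}$ to verify that the accumulated constant matches the stated $\chi$.
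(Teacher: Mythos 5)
The paper does not prove this lemma itself; it imports it verbatim as Lemma~5.4 of \cite{zhang2020global}, so the only fair comparison is against that reference's argument. Your overall strategy --- write down an explicit Hessian, justify differentiation under the expectation by dominated convergence, truncate the divergent double sum by causality, and then perturb one $\theta$-dependent factor at a time using the bounds $G$, $B$, $\iota$, $\mathcal{R}_{max}$ and geometric tails --- is the right family of argument, and your identification of the change-of-measure term as the delicate step is accurate. One small caution on the causality step: $\mathbb{E}[\nabla^2\log\pi_\theta(a_k\mid s_k)\mid s_k] = -\mathbb{E}[\nabla\log\pi_\theta\,\nabla\log\pi_\theta^\intercal\mid s_k]\neq 0$ in general, so the $k>t$ terms should be discarded from the combination $\nabla^2 p/p = \nabla^2\log p + \nabla\log p\,\nabla\log p^\intercal$ as a whole (where the truncation is exact), not from each summand separately.

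The genuine gap is quantitative: the trajectory-level accounting you describe cannot reproduce the stated $\chi$. After causality the Hessian integrand at horizon $t$ carries $Z_t Z_t^\intercal$ with $Z_t=\sum_{k\le t}\nabla\log\pi_\theta(a_k\mid s_k)$, and the only uniform bound available at this level is $\lVert Z_t\rVert\le (t+1)G$. Telescoping the difference (or bounding the third derivative) then costs one more factor of $(t+1)$ from the extra score introduced by varying the measure, so the dominant piece is controlled by $\mathcal{R}_{max}G^3\sum_t (t+1)^3\gamma^t = O\bigl(\mathcal{R}_{max}G^3(1-\gamma)^{-4}\bigr)$, one power of $(1-\gamma)^{-1}$ worse than the $\mathcal{R}_{max}G^3(1+\gamma)(1-\gamma)^{-3}$ appearing in $\chi$; the $GB$ and $\iota$ contributions overshoot similarly. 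The stated constant --- note its structure $G\cdot L + \frac{\mathcal{R}_{max}G}{1-\gamma}\max\{\cdots\}$, with $L$ from Lemma~\ref{lemma_lipschitz} --- comes from instead differentiating the visitation-measure form $\sum_s d^{\pi_\theta,\rho_0}(s)\sum_a \pi_\theta(a\mid s)\nabla\log\pi_\theta(a\mid s)\,Q^{\pi_\theta}(s,a)$ by the product rule, perturbing $d^{\pi_\theta,\rho_0}$, $\pi_\theta$, $\nabla\log\pi_\theta$, $\nabla^2\log\pi_\theta$, and $Q^{\pi_\theta}$ one at a time; the geometric sums are then absorbed once into $\sum_s d^{\pi_\theta,\rho_0}(s)=(1-\gamma)^{-1}$ and $\lvert Q^{\pi_\theta}\rvert\le\mathcal{R}_{max}(1-\gamma)^{-1}$ rather than re-counted as powers of $(t+1)$, and the $\max\{\cdots\}$ collects the separate Lipschitz constants of these factors (including that of $Q^{\pi_\theta}$ in $\theta$, which your plan never bounds). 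So your route proves that $\nabla^2 J$ is Lipschitz with \emph{some} explicit constant, but not the lemma as stated; to match $\chi$ you would need to switch to the factored form or exploit the conditional mean-zero structure of the scores rather than the crude $(t+1)G$ bound.
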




Finally, we require assumptions on the noise of the algorithm that allows the iterates to escape saddle points. 

\begin{assumption}
\label{ass_5lipcovariance}
The covariance matrix of the noise $\xi_{t+1}$ generated at iterate $\theta_t$, defined as
    $R_\xi(\theta_{t}) = \mathbb{E}[\xi_{t+1} \xi_{t+1} ^\intercal | \mathcal{F}_{t}]$, is Lipschitz such that
    $$\lVert R_\xi(\theta_1) - R_\xi(\theta_2)\lVert \leq \beta_R \lVert\theta_1 - \theta_2\lVert^\nu$$
    where $0 < \nu \leq 4$ and $\beta_R > 0$.
\end{assumption}

\begin{assumption}
\label{ass_6noisecurv}
Let $\nabla^2 J(\theta) = V_\theta \Lambda_\theta V_\theta^T$ be the eigendecomposition of the Hessian matrix at $\theta$ where the eigenvalues and eigenvectors are ordered as follows
\[
V_\theta = \begin{bmatrix}
V_\theta^{> 0} & V_\theta^{\leq 0} \\
\end{bmatrix},
\qquad
\Lambda_\theta = \begin{bmatrix}
\Lambda_\theta^{> 0} & 0 \\
0 & \Lambda_\theta^{\leq 0} 
\end{bmatrix}\]
where $\Lambda_\theta^{> 0} > 0$ and $\Lambda_\theta^{\leq 0} \leq 0$. Then, we assume that there exists $\sigma_{l}^2 > 0$ such that for any approximate strict-saddle point $\theta_t \in \mathcal{H}$ 
$$\lambda_{min} ((V_{\theta_t}^{>0})^\intercal \mathbb{E}[\xi_{t+1} \xi_{t+1}^\intercal ]V_{\theta_t}^{>0} ) \geq \sigma_{l}^2.$$
\end{assumption}

Assumption \ref{ass_5lipcovariance} states that the covariance matrix is Lipschitz, an assumption proposed in \cite{vlaski2019secondorder, vlaskiSOFL}. Assumption \ref{ass_5lipcovariance} requires that the covariance of noise does not change too much between iterates, which can be ensured by restricting our parameter search space to a compact set, which reflects common practices. Assumption \ref{ass_6noisecurv}, or the condition of ``correlated negative curvature," states that there must be a component of noise in the direction of negative curvature in order for the noise to help the iterates escape the saddle point. It is necessary for most second-order convergence analyses \cite{daneshmand, zhang2020global, yang2020pgsosp}.
Like \cite{vlaski2019secondorder}, we note that although Assumption \ref{ass_6noisecurv} is a technical requirement for convergence, the condition can be achieved by simply adding isotropic noise at each parameter update iteration like in \cite{gePSGD2015, jinsaddleSGD}.

\subsection{Main Result}

We first present the following theorem for general biased policy gradient, which shows that if the gradient estimator and bias are sufficiently bounded, we can conclude second-order convergence. Theorem \ref{theorem:main} adapts Theorem 3 from \cite{vlaski2019secondorder} regarding second-order convergence of unbiased stochastic gradient descent. Our approach follows the framework proposed in \cite{vlaskiSOFL} for showing second-order convergence of federated learning, a form of biased stochastic gradient descent. 

\begin{theorem}
\label{theorem:main}
 For the iterates $\theta_t$ of Algorithm 1, suppose that Assumptions \ref{ass_1boundedreward}-\ref{ass_6noisecurv} hold and for $\sigma > 0$, $D > 0$, the following hold
\begin{equation}
\label{eq:thm1first}
    \lVert \hat{G}(\theta_t ; \tau_t) \rVert \leq \sigma
\end{equation}
\begin{equation}
\label{eq:momentbound4}
    \mathbb{E}[\lVert d_{t+1}\rVert^4| \mathcal{F}_{t} ] \leq D^4 \mu^4
\end{equation} 
\begin{equation}
\label{eq:momentboundboth}
    \mathbb{E}[\lVert d_{t+1}\rVert^2 \lVert \xi_{t+1} \rVert^2 | \mathcal{F}_{t} ] \leq \sigma^2\mathbb{E}[\lVert d_{t+1}\rVert^2 | \mathcal{F}_t].
\end{equation} 
Let $\ell = L \sigma^2 - D^2 \mu$. Then with probability $1 - \delta$, Algorithm 1 yields $\theta_{T} \in \mathcal{M}$ such that $\lVert\nabla J(\theta_{T}) \lVert ^2 \leq \mu \ell (1 + \frac{1}{\delta})$ and $\lambda_{max} (\nabla^2 J(\theta_{T})) \leq  \omega $ in $T$ iterations, where
$$T \leq \frac{4\mathcal{R}_{max}}{\mu^2  (1 - \gamma) (L\sigma^2 + D^2 \mu) \delta } \cdot 
\mathcal{T}$$
$$\mathcal{T} = \frac{\log (2 M \frac{\sigma^2}{\sigma^2_l} + 1)}{\log(1 + 2 \mu \omega)}.$$
\end{theorem}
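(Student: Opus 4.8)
The plan is to run the three-region potential argument of \cite{vlaski2019secondorder}, adapted to the biased setting as in \cite{vlaskiSOFL}, using $J$ itself as a potential that must increase along the ascent trajectory. Since $|\mathcal{R}| \leq \mathcal{R}_{max}$ forces $J$ to be bounded with total range at most $2\mathcal{R}_{max}/(1-\gamma)$, the cumulative expected ascent is finite; hence the iterates cannot linger in the large-gradient region $\mathcal{G}$ nor undergo too many escapes from the strict-saddle region $\mathcal{H}$. Counting these excursions against the fixed ascent budget is what produces the iteration bound $T$, and whatever iterate is not consumed by $\mathcal{G}\cup\mathcal{H}$ must lie in $\mathcal{M}$.

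First I would establish an ascent lemma on $\mathcal{G}$. Starting from the $L$-smoothness of $J$ (Lemma~\ref{lemma_lipschitz}) and the update $\theta_{t+1}=\theta_t+\mu\hat G$, I take the conditional expectation and use $\mathbb{E}[\hat G\mid\mathcal{F}_t]=\nabla J(\theta_t)+d_{t+1}$ to get
\[
\mathbb{E}[J(\theta_{t+1})\mid\mathcal{F}_t] \geq J(\theta_t) + \mu\lVert\nabla J(\theta_t)\rVert^2 + \mu\langle\nabla J(\theta_t),d_{t+1}\rangle - \tfrac{L\mu^2}{2}\sigma^2,
\]
invoking the uniform bound \eqref{eq:thm1first}. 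The cross term is handled by Young's inequality together with $\mathbb{E}\lVert d_{t+1}\rVert^2\leq D^2\mu^2$, which follows from \eqref{eq:momentbound4} by Jensen. On $\mathcal{G}$, where $\lVert\nabla J\rVert^2\geq\mu\ell(1+1/\delta)$ with $\ell=L\sigma^2-D^2\mu$, this leaves a strictly positive per-step expected ascent of order $\mu^2(L\sigma^2+D^2\mu)$, precisely the denominator appearing in the bound on $T$; the shift $-D^2\mu$ in $\ell$ is exactly what absorbs the bias cross term.

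The harder part is the saddle-escape lemma on $\mathcal{H}$. I would center the iterates at the current point, project the linearized recursion onto the positive-curvature subspace $V_\theta^{>0}$, and write $\check\theta_{t+1}=(I+\mu\Lambda_\theta^{>0})\check\theta_t+\mu(V_\theta^{>0})^\intercal(\xi_{t+1}+d_{t+1})+(\text{remainder})$, with the remainder controlled by the Hessian-Lipschitz bound (Lemma~\ref{lemma_hessian}). Tracking the projected second moment and using Assumption~\ref{ass_6noisecurv} to inject a noise floor $\sigma_l^2$ in this subspace, together with $\lambda_{max}\geq\omega$ on $\mathcal{H}$, yields a recursion of the form $\mathbb{E}\lVert\check\theta_{t+1}\rVert^2 \gtrsim (1+2\mu\omega)\mathbb{E}\lVert\check\theta_t\rVert^2+\mu^2\sigma_l^2$. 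Iterating shows geometric growth until the projected second moment saturates at the scale permitted by \eqref{eq:thm1first}, which occurs after $\mathcal{T}=\log(2M\sigma^2/\sigma_l^2+1)/\log(1+2\mu\omega)$ steps, explaining the factor $\mathcal{T}$; over this window a second application of smoothness shows $\mathbb{E}[J]$ strictly increases. I expect this step to be the main obstacle: one must ensure that the bias $d_{t+1}$, accumulated over the full window of length $\mathcal{T}$, does not cancel the noise-driven growth in the unstable subspace. This is precisely where the fourth-moment scaling $D^4\mu^4$ in \eqref{eq:momentbound4} and the domination \eqref{eq:momentboundboth} of the cross term $\mathbb{E}[\lVert d_{t+1}\rVert^2\lVert\xi_{t+1}\rVert^2\mid\mathcal{F}_t]$ by $\sigma^2\,\mathbb{E}[\lVert d_{t+1}\rVert^2\mid\mathcal{F}_t]$ enter, keeping the bias a lower-order perturbation of the escape dynamics.

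Finally I would combine the two lemmas by a probabilistic counting argument. Summing the per-region expected gains and bounding the total by the range $2\mathcal{R}_{max}/(1-\gamma)$ limits the number of steps in $\mathcal{G}$ plus the number of escape windows from $\mathcal{H}$; multiplying the saddle windows by $\mathcal{T}$ gives the product structure of the stated bound. A Markov-type inequality then converts the expected-time statement into the high-probability guarantee, generating the failure probability $\delta$ and the $1/\delta$ inflation of the gradient threshold defining $\mathcal{G}$. Consequently, with probability $1-\delta$ the returned iterate satisfies $\lVert\nabla J(\theta_T)\rVert^2\leq\mu\ell(1+1/\delta)$ and $\lambda_{max}(\nabla^2 J(\theta_T))\leq\omega$, i.e. $\theta_T\in\mathcal{M}$, within the claimed number of iterations.
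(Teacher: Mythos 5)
Your proposal is correct in outline and shares the paper's overall skeleton---sufficient ascent on $\mathcal{G}$, an escape guarantee over $\mathcal{T}$ steps from $\mathcal{H}$, a budget argument against the range $2\mathcal{R}_{max}/(1-\gamma)$ of $J$, and a Markov-type conversion to the $1-\delta$ statement---but it executes the saddle-escape step by a genuinely different mechanism. You propose to analyze the biased recursion directly in the unstable eigenspace $V_\theta^{>0}$, injecting the noise floor $\sigma_l^2$ from Assumption~\ref{ass_6noisecurv} and arguing that the bias is a lower-order perturbation of the geometric growth. The paper never touches the escape dynamics itself: it couples the true iterates $\{\theta_{i+j}\}$ to auxiliary iterates $\{\theta'_{i+j}\}$ performing ascent on the \emph{bias-free} second-order Taylor model driven by the same noise realization, proves the moment bounds $\mathbb{E}\lVert\theta_i-\theta_{i+j}\rVert^2\leq O(\mu)$, $\mathbb{E}\lVert\theta_i-\theta_{i+j}\rVert^4\leq O(\mu^2)$, and $\mathbb{E}\lVert\theta_{i+j}-\theta'_{i+j}\rVert^2\leq O(\mu^2)$ (Lemma~\ref{lemma:bounds}), and then imports Theorem~2 and Corollary~1 of \cite{vlaski2019secondorder} verbatim for the model iterates; all bias handling, and hence the role of \eqref{eq:momentbound4} and \eqref{eq:momentboundboth}, is confined to that coupling lemma. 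Your route buys a self-contained escape argument but forces you to redo the subspace analysis with bias terms, and you would still need coupling-type moment bounds anyway to control the Taylor remainder through the Hessian-Lipschitz constant $\chi$, so the total work is comparable. Two small caveats: your counting argument should also include the bounded-descent estimate on $\mathcal{M}$ (the second half of Lemma~\ref{lemma:suffascent}), since iterates can revisit $\mathcal{M}$ and lose value there before termination; and the per-step ascent computation actually requires the gradient threshold $\mu(L\sigma^2+D^2\mu)(1+\frac{1}{\delta})$, the sign used in the proof of Lemma~\ref{lemma:suffascent}, so absorbing the bias requires \emph{raising} the threshold rather than lowering it as your remark about the shift $-D^2\mu$ in $\ell$ suggests.
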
 

\textit{Proof.} See Appendix \ref{theorem2}.

Therefore, for both the vanilla and actor-critic settings, our key challenge is establishing the conditions (\ref{eq:thm1first}) (\ref{eq:momentbound4}) (\ref{eq:momentboundboth}). By applying Theorem \ref{theorem:main} and choosing $\mu$ to be small enough with respect to $\epsilon$, we can arrive at the following theorem establishing convergence of vanilla policy gradient to $\epsilon$-second order stationary points, specifying the required horizon of each sampled trajectory.

\begin{theorem}
\label{theorem:1epsilon}
 Suppose Assumptions \ref{ass_1boundedreward}-\ref{ass_6noisecurv} hold and let $\epsilon > 0$. For $\mu < \frac{\epsilon^2 \delta}{L\sigma^2 + D^2}$ where $D = \frac{ G \mathcal{R}_{max}}{1 - \gamma}$ and $\sigma = \frac{G \mathcal{R}_{max}}{(1 - \gamma)^2}$, we have with probability $1 - \delta$ that Algorithm 1 with vanilla policy gradient estimator computed via Algorithm \ref{alg:vanilla} and  $H = O(\log(\epsilon^{-2}))$ reaches an $\epsilon$-second order stationary point in $\tilde{O}(\epsilon^{-6.5})$ iterations, where $\tilde{O}(\cdot)$ hides logarithmic dependencies. 
\end{theorem}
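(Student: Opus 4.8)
The plan is to instantiate the general biased-gradient result of Theorem~\ref{theorem:main} for the GPOMDP estimator $\hat{G}^{VPG}$, verifying the three hypotheses \eqref{eq:thm1first}--\eqref{eq:momentboundboth} with exactly the constants $\sigma$ and $D$ named in the statement, and then to tune the free parameters $\mu$, $\omega$ and the horizon $H$ so that the conclusion $\theta_T\in\mathcal{M}$ upgrades to an $\epsilon$-second-order stationary point, finally reading off the iteration count.

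First I would verify \eqref{eq:thm1first}. Using $\lVert\nabla\log\pi_\theta(a_h\mid s_h)\rVert\le G$ and $|\mathcal{R}(s_i,a_i)|\le\mathcal{R}_{max}$ from Assumption~\ref{ass_1boundedreward}, a direct triangle-inequality bound on the GPOMDP formula gives
\begin{equation*}
\lVert\hat{G}^{VPG}(\theta;\tau)\rVert \le G\mathcal{R}_{max}\sum_{h=0}^{H-1}\sum_{i=h}^{H-1}\gamma^i \le G\mathcal{R}_{max}\sum_{h=0}^{\infty}\frac{\gamma^h}{1-\gamma} = \frac{G\mathcal{R}_{max}}{(1-\gamma)^2},
\end{equation*}
so \eqref{eq:thm1first} holds with $\sigma=\frac{G\mathcal{R}_{max}}{(1-\gamma)^2}$, precisely the value in the statement.

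The key observation for the other two conditions is that $G(\theta_t)=\mathbb{E}_\tau[\hat{G}^{VPG}(\theta_t;\tau)]=\nabla J_H(\theta_t)$, so the bias $d_{t+1}=\nabla J_H(\theta_t)-\nabla J(\theta_t)$ is a \emph{deterministic} function of the $\mathcal{F}_t$-measurable iterate $\theta_t$. Consequently \eqref{eq:momentboundboth} collapses to $\mathbb{E}[\lVert\xi_{t+1}\rVert^2\mid\mathcal{F}_t]\le\sigma^2$, which follows since $\xi_{t+1}$ is zero-mean and so its conditional second moment is at most $\mathbb{E}[\lVert\hat{G}^{VPG}\rVert^2\mid\mathcal{F}_t]\le\sigma^2$. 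For \eqref{eq:momentbound4} I would invoke the geometric decay of the truncation bias, $\lVert\nabla J_H(\theta)-\nabla J(\theta)\rVert\le c\,\mathrm{poly}(H)\,\gamma^H$ for a constant $c$ depending on $G$, $\mathcal{R}_{max}$, $1-\gamma$ (a bound of the type in \cite{yuan21vanilla}), and choose $H$ so that this falls below $D\mu$ with $D=\frac{G\mathcal{R}_{max}}{1-\gamma}$; since the right side scales like $\gamma^H$, taking $H=O(\log(1/\mu))=O(\log(\epsilon^{-2}))$ suffices, and determinism turns the fourth-moment requirement into the pointwise bound $\lVert d_{t+1}\rVert\le D\mu$, i.e. $\mathbb{E}[\lVert d_{t+1}\rVert^4\mid\mathcal{F}_t]\le D^4\mu^4$.

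With all three hypotheses verified, I would apply Theorem~\ref{theorem:main}, obtaining $\theta_T\in\mathcal{M}$ with $\lVert\nabla J(\theta_T)\rVert^2\le\mu\ell(1+\tfrac1\delta)$ and $\lambda_{max}(\nabla^2J(\theta_T))\le\omega$. Setting $\omega=\sqrt{\chi\epsilon}$ matches the curvature condition of the $\epsilon$-SOSP definition, while $\mu<\frac{\epsilon^2\delta}{L\sigma^2+D^2}$ (with $\ell=L\sigma^2-D^2\mu$) forces $\lVert\nabla J(\theta_T)\rVert^2\le\epsilon^2$. Substituting $\mu=\Theta(\epsilon^2)$ and $\omega=\Theta(\sqrt\epsilon)$ into the iteration bound, the prefactor $\mu^{-2}(L\sigma^2+D^2\mu)^{-1}=\Theta(\epsilon^{-4})$, and using $\log(1+2\mu\omega)=\Theta(\mu\omega)=\Theta(\epsilon^{2.5})$ the factor $\mathcal{T}=\tilde O(\epsilon^{-2.5})$, so that $T=\tilde O(\epsilon^{-6.5})$.

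The main obstacle is the truncation-bias estimate: proving that $\lVert\nabla J_H-\nabla J\rVert$ decays geometrically in $H$ with constants that make the target $D\mu$ reachable at only logarithmic horizon cost. Everything else is either a direct consequence of the boundedness in Assumption~\ref{ass_1boundedreward} or routine order-of-magnitude bookkeeping in $\epsilon$; the one point needing care is confirming that the chosen $\mu$ keeps $\ell>0$ and, after accounting for the $(1+\tfrac1\delta)$ factor (dominated by $\tfrac1\delta$ in the high-probability regime of small $\delta$), keeps the gradient bound at or below $\epsilon^2$.
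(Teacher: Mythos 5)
Your proposal is correct and follows essentially the same route as the paper: the paper's Appendix~\ref{app:VPG} likewise bounds $\lVert\hat{G}^{VPG}\rVert$ by $\sigma=\frac{G\mathcal{R}_{max}}{(1-\gamma)^2}$ via the triangle inequality (Lemma~\ref{lemma:boundednoise}), imports the geometric truncation-bias bound $\lVert\nabla J-\nabla J_H\rVert\le D(\frac{1}{1-\gamma}+H)^{1/2}\gamma^H$ from \cite{yuan21vanilla} and solves for $H=O(\log(1/\mu))$ so that the deterministic bias falls below $D\mu$ (Lemmas~\ref{lemma:boundedbias}--\ref{lemma:biasmu}), and then applies Theorem~\ref{theorem:main} with $\mu=\Theta(\epsilon^2)$, $\omega=\sqrt{\chi\epsilon}$ to read off $T=\tilde O(\epsilon^{-6.5})$. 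Your observation that determinism of $d_{t+1}$ given $\mathcal{F}_t$ collapses condition \eqref{eq:momentboundboth} to the noise second-moment bound is exactly the (implicit) justification the paper relies on.
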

In Theorem \ref{theorem:1epsilon}, $O(\cdot)$ hides dependency on $\gamma$, and $\tilde{O}(\cdot)$ hides dependencies on $L$, $G$, $\mathcal{R}_{max}$, $\gamma$, $M$, $\sigma_l$, $\chi$, $\delta$.

The detailed proof of Theorem \ref{theorem:1epsilon} is provided in Appendix \ref{app:VPG}. The proof sketch is as follows.
In order to apply Theorem \ref{theorem:main}, we first show  that the gradient estimator is uniformly bounded based on our assumptions (Lemma \ref{lemma:boundednoise}). This also implies that the second and fourth moment of the noise are also bounded. We then establish that the gradient bias due to truncation is deterministically bounded and decays as the trajectory horizon $H$ increases (Lemma \ref{lemma:biasmu}). 
For large enough $H$, the bias error is proportional to $\mu$, allowing us to bound away its effects.

\section{Second-Order Convergence of Actor-Critic Policy Gradient}

Now we consider the second-order convergence of actor-critic policy gradient algorithms. Actor-critic methods can reduce the variance of policy gradient methods by separately learning to approximate the state-action value function $Q^\pi$. With some algebraic manipulation, the policy gradient can be expressed as follows
$$\nabla J (\theta) = \mathbb{E}_{{\pi_\theta}, \rho_0} [\sum_{k = 0}^\infty \gamma^k \nabla \log \pi_\theta(a_k|s_k) Q^{\pi_\theta} (s_k, a_k)].$$
This motivates the construction of the following biased gradient estimator from a trajectory $\tau$ of length $H$
$$\hat{G}^{AC}(\theta ; \tau) = \sum_{k = 0}^H \gamma^k \nabla \log \pi_\theta(a_k|s_k) Q_w (s_k, a_k),$$
where $Q_w(s, a)$ is a function with parameter $w$ that approximates  $Q^{\pi_\theta}(s, a)$. Note that $\hat{G}^{AC}(\theta ; \tau)$ depends on $w$, although we omit this dependency in notation. In contrast to vanilla policy gradient, the actor-critic gradient estimator has two sources of bias: bias from the horizon truncation that depends on $H$, and bias from the critic approximation $Q_w$. Our following analysis addresses both. Let 
\begin{equation*}
    \begin{aligned}
        G(\theta) &= G_H(\theta) = \mathbb{E}_{\tau} [\hat{G}^{AC}(\theta ; \tau)] \\
        &= \mathbb{E}_{\tau} [\sum_{k = 0}^H \gamma^k \nabla \log \pi_{\theta}(a_k|s_k) Q_w (s_k, a_k)]
    \end{aligned}
\end{equation*}
represent the expectation of the gradient estimator with respect to the sampled trajectory $\tau$ of length $H$. Let 
$$G_\infty(\theta) =\mathbb{E}_{\tau} [\sum_{k = 0}^\infty \gamma^k \nabla \log \pi_\theta(a_k|s_k) Q_w (s_k, a_k)] $$ 
represent the expectation of an infinite-horizon gradient estimator with respect to a sampled trajectory of infinite horizon. As before, we have the following noise-bias decomposition of the policy updates
\begin{equation*}
    \theta_{t+1} = \theta_t + \mu \nabla J (\theta_t) + \mu \xi_{t+1} + \mu d_{t+1},
\end{equation*}
where $\xi_{t+1} = \hat{G}^{AC} (\theta_t ; \tau_t) - G_H(\theta_t)$ once again represents a zero-mean noise term. Then we can further decompose the bias term $d_{t+1}$ as follows
\begin{equation*}
    \begin{aligned}
        d_{t+1} &= G_H (\theta_{t}) - \nabla J (\theta_{t}) \\
        &= G_H (\theta_{t}) - G_\infty(\theta_t) + G_\infty(\theta_t) - \nabla J (\theta_{t}) \\
        &= p_{t+1} + q_{t+1}
    \end{aligned}
\end{equation*}
where $p_{t+1} = G_H (\theta_{t}) - G_\infty(\theta_t)$ represents the bias component due to truncation of the infinite horizon and $q_{t+1} = G_\infty(\theta_t) - \nabla J (\theta_{t})$ represents the bias component induced by the critic approximation. In order to apply Theorem \ref{theorem:main}, we need to bound both $p_{t+1}$ and $q_{t+1}$; although the former can be bounded using the approach of the previous section, the latter requires novel techniques.

The structure of our algorithm is as follows. We consider a double-loop actor-critic algorithm with linear function approximation of $Q^{\pi_\theta}$ and an arbitrary policy parametrization. In the inner loop, the critic parameter $w$ is updated via TD(0) and projected onto a convex set $\Theta$, and in the outer loop, the policy parameter $\theta$ is updated via gradient updates. The gradient estimator algorithm is outlined in Algorithm \ref{alg:acgradientestimator}.

\begin{algorithm}
\caption{Actor-Critic Gradient Estimator}\label{alg:acgradientestimator}
\begin{algorithmic} 
\STATE \textbf{Input:} initial critic parameters $w_0$, policy parameter $\theta_t$, trajectory $\tau_t = \{s_0, a_0, s_1, a_1,... s_{H-1}, a_{H-1} \}$
\STATE Sample initial state-action pair $s'_0 \sim \rho_0$ and $a'_0 \sim \pi_{\theta_t}(\cdot | s'_0)$
\FOR{$k = 0, 1, 2, ... K-1$}
    \STATE Sample $s'_{k+1} \sim \mathcal{P}(\cdot | s'_k, a'_k)$ and $a'_{k+1} \sim \pi_{\theta_{t}}  (\cdot | s'_{k+1})$
    \STATE Compute the TD(0) semi-gradient $g_k(w_k)$
    \begin{equation*}
    \begin{aligned}
        g_k(w_k) =& (\mathcal{R}(s'_k, a'_k) + \gamma Q_{w_k}(s'_{k+1}, a'_{k + 1}) \\
        &-  Q_{w_k}(s'_k, a'_k)) \nabla Q_{w_k}(s'_k, a'_k) \\
    \end{aligned}
    \end{equation*}
    \STATE $w_{k+1} = Proj_{\Theta} [ w_{k} + \alpha_k g_k(w_k)]$

    \ENDFOR
    \STATE Calculate the averaged parameter $\bar{w}_{K,t}= \frac{1}{K}\sum_{k = 0}^{K-1} w_k$

\STATE Compute gradient estimator from $\tau_t$:
$$\hat{G}^{AC}(\theta_t ; \tau_t) = \sum_{h = 0}^{H-1} \gamma^h Q_{\bar{w}_{K,t}} (s_h, a_h) \nabla \log \pi_{\theta_{t}} (a_h | s_h)$$
\STATE \textbf{return} $\hat{G}^{AC}(\theta_t ; \tau_t)$
\end{algorithmic}
\end{algorithm}

To control the actor-critic bias $q_{t+1}$ and ensure second-order convergence by Theorem \ref{theorem:main}, we require that for each policy parameter iterate, the inner loop converges to the global optima $w^*(\theta_t)$ and that $Q_{w^*}(\theta_t)$ precisely approximates the true value function $Q^{\pi_{\theta_t}}$. The first requirement is satisfied by linear TD, as shown in Section \ref{sec:TDwoohoo}, and the second is formalized later in Assumption \ref{ass:zeroapprox}.

\subsection{Convergence of TD(0) on Nonstationary Markov Chains}
\label{sec:TDwoohoo}

The inner-loop structure of Algorithm \ref{alg:acgradientestimator} suggests that we should apply existing results regarding the finite-time convergence of TD(0) with Markovian sampling \cite{bhandariTD, liusplitting}. However, these analyses rely on an additional assumption that the Markov chain begins in the stationary distribution, which is unrealistic for the actor-critic setting since there is no guarantee that after each policy update we can begin at the new stationary distribution with respect to the updated policy. As argued in \cite{bhandariTD, liusplitting, dalalTD}, an exponentially mixing Markov chain approximately arrives at its stationary distribution after a logarithmic number of time steps. However, although this explanation justifies the assumption in a practical sense, it is not conducive for finite-time convergence analysis, since we can never reach the exact stationary distribution after a finite number of time steps.

As it turns out, the general convergence of TD(0) does hold without this initial state distribution assumption after some proof adjustments. In this section we reestablish the core results of \cite{bhandariTD} for a nonstationary Markov chain (i.e. a Markov chain that has not reached steady-state). These results, which are also utilized in \cite{qiuACzhaoran, liusplitting}, may be of independent interest.

\subsubsection{Setup}

We consider TD(0) with linear function approximation and a projection step. We define a set of $N$ feature functions $\phi_n : \mathcal{S} \times \mathcal{A} \to \mathbb{R}$, $ 0 < n \leq N$. For each state-action pair $(s, a)$, we define the vector $\phi(s, a) = (\phi_1(s, a), \phi_2(s, a), ..., \phi_N(s, a))^{\intercal}$ as the vector representing the features of $(s, a)$. Then we denote as $\Phi \in \mathbb{R}^{|\mathcal{S} \times \mathcal{A}| \times N}$ the matrix $\Phi = [\phi_1, ... \phi_N]$. Finally, we denote our linear parametrization of $Q^{\pi}$ by the parameter $w\in \mathbb{R}^N$ as  
$$Q_w (s, a) = w^{\intercal} \phi(s, a).$$
Let $g_k$ represent the stochastic semi-gradient at time step $k$ such that
\begin{equation*}
    \begin{aligned}
        g_k(w) =& \mathcal{R}(s_k, a_k)\phi(s_k, a_k) \\
        &+ (\gamma \phi(s_{k+1}, a_{k+1})^\intercal w - \phi (s_k, a_k)^\intercal w) \phi(s_k, a_k).
    \end{aligned}
\end{equation*}

As is common in the TD convergence literature, we consider TD(0) projected onto a convex set $\Theta$ that contains the limit point $w^*$ of the algorithm \cite{bhandariTD}. Therefore, the TD update can be written as
\begin{equation}
    w_{k+1} = Proj_{\Theta} [ w_{k} + \alpha g_k(w_k)].
\end{equation}

\subsubsection{Assumptions}

The following assumptions and Assumption \ref{ass_1boundedreward} are necessary to show the convergence of TD(0). 

\begin{assumption}
\label{ass:41nonzeropi}
    For all $\theta$ and $(s, a) \in \mathcal{S} \times \mathcal{A}$, we have $\pi_{\theta}(s, a)> 0$.
\end{assumption}

Assumption \ref{ass:41nonzeropi} is satisfied by popular policy parametrizations such as the softmax policy parametrization.

\begin{assumption}
\label{ass:ergodic}
For any $\pi > 0$, The Markov chain defined by $P (s, a, s', a') =  \mathcal{P}(s' | s, a) \pi(a'|s')$ is ergodic. 
\end{assumption}

Combined with Assumption \ref{ass:41nonzeropi}, Assumption \ref{ass:ergodic} is satisfied if there is a positive probability of transitioning between any two state-action pairs in a finite number of steps. Assumption \ref{ass:ergodic} is a common assumption in the TD and actor-critic literature \cite{bhandariTD, qiuACzhaoran, liusplitting, wuACquanquan, xuAC, xuAC2020nestedloop} that has a number of implications; the Markov chain is irreducible and aperiodic, it has a unique stationary distribution $\eta_{\pi}(s, a)$, and $\eta_{\pi}(s, a) \neq 0$ for all $(s, a)$. In addition, the Markov chain mixes at a uniform geometric rate, i.e., there exists $m > 0$, $r \in (0, 1)$,   such that for $t \in \mathbb{N}$ we have
\begin{equation}
   \sup_{\substack{s \in \mathcal{S} \\ a \in \mathcal{A}}}d_{TV} (\mathbb{P}(s_t = \cdot, a_t = \cdot| s_0 = s, a_0 
= a ), \eta_{\pi}) \leq m r^t
\end{equation}
and $\tau^{\text{\tiny mix}}(\epsilon) = \min \{ t \in \mathbb{N} \, | \, m r^t \leq \epsilon \}$ is the mixing time.

\begin{assumption}
\label{ass:finite}
    The number of states and actions is finite.
\end{assumption}
As discussed in \cite{bhandariTD}, Assumption \ref{ass:finite} can be relaxed to consider countably infinite state-action pairs.

\begin{assumption}
\label{ass:fullrank}
The feature matrix $\Phi$ has full column rank, i.e., the feature vectors $\{ \phi_1,...\phi_N \}$ are linearly independent. In addition, for all $(s, a) \in \mathcal{S} \times \mathcal{A}$, $\lVert \phi(s, a)\lVert ^2 \leq 1$ .
\end{assumption}

\begin{assumption}
\label{ass:boundR}
There exists $R > 0$ such that  
$diam(\Theta)~\leq~R,$
where $diam$ is the diameter.
    
\end{assumption}

Assumptions \ref{ass:fullrank} and \ref{ass:boundR} are also common in the literature \cite{bhandariTD, liusplitting, tsitsiklisTD, qiuACzhaoran}. See Section 8.2 of \cite{bhandariTD} and Proposition 3 of \cite{qiuACzhaoran} for additional details on defining $R$.

Finally, let $A_\theta$ denote the positive definite matrix $\mathbb{E} _ {(s, a) \sim \eta_\theta, (s', a') \sim P(s, a, \cdot)}[\phi(s, a) (\phi(s, a) - \gamma \phi(s', a'))^{\intercal}]$. Based on Assumptions \ref{ass:ergodic} and \ref{ass:fullrank}, we can conclude that $A_\theta$ is positive definite \cite{tsitsiklisTD}. We further assume the smallest eigenvalue of $A_\theta$ is uniformly bounded away from zero in the following assumption, which is also utilized in \cite{qiuACzhaoran}. 

\begin{assumption}
\label{ass:boundedeigenvalue}
There exists a lower bound $\varsigma > 0$, such that for all $\theta \in \mathbb{R}^M$ we have
$\lambda_{min}(A_\theta + A_\theta^\intercal) \geq \varsigma.$
\end{assumption}

\subsubsection{Finite-Time Convergence of TD(0)}

We first require the following theorem from \cite{tsitsiklisTD}, which establishes the existence and uniqueness of the solution to the projected Bellman equation as the limit point of the TD(0) algorithm.

\begin{theorem} 
\label{theorem:existunique}
\cite{tsitsiklisTD} Denote by $\mathbb{T}^{\pi}$ the Bellman operator under policy $\pi$ such that for a value function $Q : \mathcal{S} \times \mathcal{A} \to \mathbb{R}$ we have
\begin{equation*}
\begin{aligned}
    (\mathbb{T}^\pi Q) (s, a) =& \mathcal{R}(s, a) + \gamma \sum_{\substack{s' \in \mathcal{S} \\ a' \in \mathcal{A}}} \mathcal{P}(s' | s, a) \pi(a' | s') Q(s', a').
\end{aligned}
\end{equation*}

Then given Assumptions \ref{ass_1boundedreward}, \ref{ass:41nonzeropi}-\ref{ass:boundR}, the limit point $w^*$ of the TD(0) algorithm with linear function approximation exists, and it is the unique solution to the projected Bellman equation
$$\Phi w = Proj_{\Phi} (\mathbb{T}^{\pi}\Phi w)$$
where $Proj_{\Phi}$ is the projection operator onto the subspace $\{ \Phi x | x \in \mathbb{R}^N\}$ spanned by the feature vectors $\phi_n$.

\end{theorem}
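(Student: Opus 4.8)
The plan is to recognize this as the classical Tsitsiklis--Van Roy fixed-point argument, adapted to the state-action setting, and to establish existence and uniqueness by exhibiting $Proj_{\Phi} \circ \mathbb{T}^{\pi}$ as a contraction mapping on the feature subspace. The central object is the inner product weighted by the stationary distribution $\eta_\pi$ guaranteed by Assumption \ref{ass:ergodic}, namely $\langle Q_1, Q_2 \rangle_\eta = \sum_{s, a} \eta_\pi(s, a) Q_1(s, a) Q_2(s, a)$ with induced norm $\lVert \cdot \rVert_\eta$. Since the state-action space is finite (Assumption \ref{ass:finite}) and $\eta_\pi(s, a) \neq 0$ everywhere, this is a genuine norm on $\mathbb{R}^{|\mathcal{S} \times \mathcal{A}|}$.

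First I would show that the transition operator $P^{\pi}$ defined by $(P^{\pi} Q)(s, a) = \sum_{s', a'} P(s, a, s', a') Q(s', a')$ is a non-expansion in $\lVert \cdot \rVert_\eta$. Expanding the squared norm and applying Jensen's inequality to the convex map $x \mapsto x^2$ against the probability weights $P(s, a, \cdot, \cdot)$ gives $\lVert P^{\pi} Q \rVert_\eta^2 \leq \sum_{s, a} \eta_\pi(s, a) \sum_{s', a'} P(s, a, s', a') Q(s', a')^2$. Interchanging the order of summation and invoking the defining stationarity identity $\sum_{s, a} \eta_\pi(s, a) P(s, a, s', a') = \eta_\pi(s', a')$ collapses the right-hand side to $\lVert Q \rVert_\eta^2$. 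Since $\mathbb{T}^{\pi} Q = \mathcal{R} + \gamma P^{\pi} Q$ is affine with linear part $\gamma P^{\pi}$, this immediately yields that $\mathbb{T}^{\pi}$ is a $\gamma$-contraction: $\lVert \mathbb{T}^{\pi} Q_1 - \mathbb{T}^{\pi} Q_2 \rVert_\eta = \gamma \lVert P^{\pi}(Q_1 - Q_2) \rVert_\eta \leq \gamma \lVert Q_1 - Q_2 \rVert_\eta$. I expect this step to be the main obstacle, since the contraction fails in the unweighted Euclidean norm; the stationary weighting is precisely what is needed to make Jensen's inequality close the argument.

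The remaining steps are more routine. Because $Proj_{\Phi}$ is the orthogonal projection onto the feature subspace $\{ \Phi x \mid x \in \mathbb{R}^N \}$ with respect to the same $\eta$-weighted inner product, it is a non-expansion by the Pythagorean property, so the composition $Proj_{\Phi} \circ \mathbb{T}^{\pi}$ is a $\gamma$-contraction with $\gamma \in (0, 1)$. Restricting attention to the finite-dimensional, hence closed, subspace, the Banach fixed-point theorem delivers a unique fixed point $\Phi w^*$ solving $\Phi w = Proj_{\Phi}(\mathbb{T}^{\pi} \Phi w)$. The full column rank of $\Phi$ in Assumption \ref{ass:fullrank} then promotes uniqueness of the vector $w^*$ itself, not merely of the function $\Phi w^*$.

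Finally, to identify this fixed point with the limit point of the TD(0) recursion, I would rewrite the projected Bellman equation in its normal-equation form $A_\theta w^* = b$, where $A_\theta$ is the positive definite matrix from the statement and $b = \mathbb{E}_{\eta_\pi}[\mathcal{R}(s, a) \phi(s, a)]$; positive definiteness of $A_\theta$, which follows from Assumptions \ref{ass:ergodic} and \ref{ass:fullrank}, guarantees this system admits a unique solution coinciding with $w^*$. Since the expected TD(0) semi-gradient in the stationary distribution is exactly $b - A_\theta w$, the fixed point of the mean dynamics agrees with the projected Bellman solution, completing the identification.
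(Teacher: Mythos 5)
The paper does not prove this theorem; it is imported verbatim from \cite{tsitsiklisTD}, so there is no internal proof to compare against. Your reconstruction is the standard Tsitsiklis--Van Roy argument and its core is correct: the $\eta_\pi$-weighted norm is well defined because ergodicity gives $\eta_\pi(s,a)>0$ on a finite state-action space, the Jensen-plus-stationarity computation correctly shows $P^\pi$ is a non-expansion in $\lVert\cdot\rVert_{\eta_\pi}$, the affine structure of $\mathbb{T}^\pi$ gives the $\gamma$-contraction, the $\eta_\pi$-orthogonal projection is non-expansive, and Banach plus the full column rank of $\Phi$ (Assumption \ref{ass:fullrank}) yields existence and uniqueness of $w^*$. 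Your identification of the projected Bellman equation with the normal equations $A_\theta w = b$, where $b=\mathbb{E}_{\eta_\pi}[\mathcal{R}(s,a)\phi(s,a)]$, is also the right bridge to the TD(0) dynamics and is consistent with how the paper uses $A_\theta$ (e.g.\ in Lemma \ref{lemma:helperTD}).

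The one clause your argument does not fully deliver is that ``the limit point $w^*$ of the TD(0) algorithm \emph{exists},'' i.e.\ that the stochastic iterates actually converge to the projected Bellman solution. Identifying $w^*$ as the zero of the mean field $\bar{g}(w)=b-A_\theta w$ shows it is the equilibrium of the averaged dynamics, but convergence of the random recursion requires a stochastic-approximation argument (in \cite{tsitsiklisTD}, an ODE/Lyapunov argument exploiting negative definiteness of $-A_\theta$). In the context of this paper that gap is immaterial, since the finite-time convergence of the projected, averaged iterates to $w^*$ is established independently in Theorem \ref{theorem:bhandarialt} and Lemma \ref{lemma:fourthmomentAC}; but as a self-contained proof of the theorem as stated, you should either invoke the stochastic-approximation convergence result or restrict the claim to existence and uniqueness of the solution of the projected Bellman equation.
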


Now we share our main result regarding the convergence of TD(0). Theorem \ref{theorem:bhandarialt} establishes the convergence rate of the Projected TD(0) algorithm after $T$ constant time steps on a nonstationary Markov chain.

\begin{theorem}
\label{theorem:bhandarialt}
Suppose Assumptions \ref{ass_1boundedreward}, \ref{ass:41nonzeropi}-\ref{ass:boundR} hold and $\bar{w}_K=\frac{1}{K}\sum_{k = 0}^{K-1} w_k$ is generated by $K$ steps of the Projected TD(0) algorithm with $w^* \in \Theta$ and $\alpha = \frac{1}{\sqrt{K}}$. Then
\begin{equation*}
\begin{aligned}
    \mathbb{E}[\lVert Q_{w^*} - Q_{\bar{w}_K}\lVert ^2_{\eta_{\pi}}] \leq& \frac{\lVert w^* - w_0\lVert ^2 + F^2 (17 + 12\tau^{\text{\tiny mix}}(\frac{1}{\sqrt{K}}))}{2 (1 - \gamma) \sqrt{K}} \\
    &+ \frac{10 F^2 m}{(1 - r)(1 - \gamma) K},
\end{aligned}
\end{equation*}
where $F = \mathcal{R}_{max} + 2R$ and 
$$\lVert Q_{w^*} - Q_{\bar{w}_K}\lVert ^2_{\eta_{\pi}} = \sum_{\substack{s \in \mathcal{S} \\ a \in \mathcal{A}}} \eta_{\pi}(s, a) (Q_{w^*}(s, a) - Q_{\bar{w}_K}(s, a))^2.$$
\end{theorem}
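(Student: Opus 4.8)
The plan is to adapt the constant-step-size projected TD(0) analysis of \cite{bhandariTD} to a chain that is not initialized at $\eta_{\pi}$, isolating the transient contribution of the nonstationary marginals as a separate geometric sum. Write $\bar{g}(w) = \mathbb{E}_{(s,a)\sim\eta_{\pi},\,(s',a')\sim P}[g(w)] = b - A_\theta w$ for the stationary-mean semi-gradient, so that by Theorem \ref{theorem:existunique} the limit point obeys $\bar{g}(w^*)=0$. First I would obtain the basic one-step recursion: since $w^*\in\Theta$ and projection is nonexpansive,
\begin{equation*}
\lVert w_{k+1}-w^*\rVert^2 \le \lVert w_k-w^*\rVert^2 + 2\alpha\langle g_k(w_k), w_k-w^*\rangle + \alpha^2\lVert g_k(w_k)\rVert^2.
\end{equation*}
Assumption \ref{ass_1boundedreward} (bounded reward), Assumption \ref{ass:fullrank} ($\lVert\phi\rVert^2\le1$) and Assumption \ref{ass:boundR} ($diam(\Theta)\le R$) give the uniform bound $\lVert g_k(w)\rVert\le F = \mathcal{R}_{max}+2R$, controlling the last term by $\alpha^2F^2$. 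Splitting $\langle g_k(w_k),w_k-w^*\rangle = \langle \bar g(w_k), w_k-w^*\rangle + \langle g_k(w_k)-\bar g(w_k), w_k-w^*\rangle$ and invoking the negative-drift lemma of \cite{bhandariTD,tsitsiklisTD} (which follows from positive definiteness of $A_\theta$ and the $\gamma$-contraction underlying Theorem \ref{theorem:existunique}),
\begin{equation*}
\langle \bar g(w_k), w_k-w^*\rangle \le -(1-\gamma)\lVert Q_{w_k}-Q_{w^*}\rVert^2_{\eta_{\pi}},
\end{equation*}
turns the recursion into a drift inequality whose drift is exactly the quantity we want to control.

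Next I would telescope. Summing the rearranged recursion over $k=0,\dots,K-1$, taking expectations, and dividing by $2\alpha(1-\gamma)K$ gives
\begin{equation*}
\frac{1}{K}\sum_{k=0}^{K-1}\mathbb{E}\lVert Q_{w_k}-Q_{w^*}\rVert^2_{\eta_{\pi}} \le \frac{\lVert w_0-w^*\rVert^2 + \alpha^2 K F^2 + 2\alpha\sum_{k}\mathbb{E}[\zeta_k]}{2\alpha(1-\gamma)K},
\end{equation*}
where $\zeta_k = \langle g_k(w_k)-\bar g(w_k), w_k-w^*\rangle$ is the Markovian bias. Because $Q_w=w^{\intercal}\phi$ is linear in $w$, we have $Q_{\bar w_K}=\frac1K\sum_k Q_{w_k}$, so Jensen's inequality yields $\lVert Q_{w^*}-Q_{\bar w_K}\rVert^2_{\eta_{\pi}}\le\frac1K\sum_k\lVert Q_{w^*}-Q_{w_k}\rVert^2_{\eta_{\pi}}$, passing from the averaged iterate to the average of the errors. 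Substituting $\alpha=1/\sqrt K$ makes $\alpha^2KF^2=F^2$ and $\lVert w_0-w^*\rVert^2/(2\alpha(1-\gamma)K)=\lVert w_0-w^*\rVert^2/(2(1-\gamma)\sqrt K)$; everything then hinges on bounding $\sum_k\mathbb{E}[\zeta_k]$.

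The main obstacle is controlling $\sum_k\mathbb{E}[\zeta_k]$ without the stationarity shortcut. I would decompose $\zeta_k$ through the \emph{time-$k$ marginal} $\eta_k$ of the observation $(s_k,a_k,s_{k+1},a_{k+1})$: writing $\bar g_k(w)=\mathbb{E}_{\eta_k}[g(w)]$,
\begin{equation*}
\mathbb{E}[\zeta_k] = \mathbb{E}[\langle g_k(w_k)-\bar g_k(w_k), w_k-w^*\rangle] + \mathbb{E}[\langle \bar g_k(w_k)-\bar g(w_k), w_k-w^*\rangle].
\end{equation*}
The \emph{correlation} term (first) is handled exactly as in \cite{bhandariTD}: looking back $\tau = \tau^{\text{\tiny mix}}(1/\sqrt K)$ steps, bounding the drift $\lVert w_k-w_{k-\tau}\rVert\le\alpha\tau F$ by Lipschitz continuity of $g$ and $\bar g_k$, and using the geometric mixing of Assumption \ref{ass:ergodic} to decorrelate $w_{k-\tau}$ from $(s_k,a_k)$; this contributes the $O(1/\sqrt K)$ terms proportional to $F^2$ and $F^2\tau^{\text{\tiny mix}}(1/\sqrt K)$. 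The \emph{marginal-deviation} term (second) is the genuinely new piece: since $\lVert g\rVert\le F$ and $\lVert w_k-w^*\rVert\le R$, the mixing bound gives $\lVert\bar g_k(w)-\bar g(w)\rVert\le 2F\,d_{TV}(\eta_k,\eta_{\pi})\le 2Fm r^k$, hence $\lvert\mathbb{E}[\langle\bar g_k(w_k)-\bar g(w_k),w_k-w^*\rangle]\rvert\le 2F^2 m r^k$. Summing over $k$ is where nonstationarity surfaces: $\sum_{k\ge0} m r^k\le m/(1-r)$, a \emph{finite} geometric series, so after multiplying by $2\alpha$ and dividing by $2\alpha(1-\gamma)K$ it produces precisely the extra $O(1/K)$ term $\tfrac{10F^2m}{(1-r)(1-\gamma)K}$. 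Collecting both bounds into the telescoped inequality and tracking constants gives the stated result. The delicate point is ensuring the look-back decorrelation still closes when the reference mean is the moving marginal $\bar g_k$ rather than $\bar g$; comparing $\eta_k$ to $\eta_{\pi}$ and the $\tau$-step conditional to $\eta_{\pi}$ via the triangle inequality keeps the correlation term at $O(m r^{\tau})=O(\alpha)$ per step and prevents the transient error from leaking into the $O(1/\sqrt K)$ rate.
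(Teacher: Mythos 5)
Your proposal is correct and follows essentially the same route as the paper: the standard projected-TD recursion, telescoping, and Jensen, with the nonstationarity handled by (a) a look-back decorrelation step whose total-variation bound goes through $\eta_{\pi}$ via the triangle inequality (the paper's Lemma \ref{lemma:lemma9bhandari}, with constant $4$ in place of $2$) and (b) a separate geometric-series bound of order $m r^k$ on the gap between the time-$k$ marginal and $\eta_{\pi}$, which yields the extra $O(1/K)$ term (the paper's Lemma \ref{lemma:modified11}, where this gap appears as the now-nonzero expectation of the independent-copies term). Your explicit split of $\zeta_k$ through the time-$k$ marginal $\bar g_k$ is just a reorganization of the same two ingredients, and only the bookkeeping of the constants $(17, 12, 10)$ is left implicit.
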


\textit{Proof.} See Appendix \ref{appendix:td(0)}.

We can compare this result with Theorem 3 from \cite{bhandariTD} which shows
\begin{equation}
\mathbb{E}[\lVert Q_{w^*} - Q_{\bar{w}_K}\lVert ^2_{\eta_{\pi}}] \leq \frac{\lVert w^* - w_0\lVert ^2 + F^2 (9 + 12\tau^{\text{\tiny mix}}(\frac{1}{\sqrt{K}}))}{2 (1 - \gamma) \sqrt{K}} 
\end{equation}
for TD(0) on a stationary Markov chain. We note that the nonstationary result involves slightly different constants and an additional term that decays at the rate of $O(1/K)$.

\subsection{Main Result}

Returning to the actor-critic setting, the following assumption combined with the existence and uniqueness theorem (Theorem \ref{theorem:existunique}) implies for the limit point of the TD(0) algorithm $w^*$, the resulting function $Q_{w^*} = {w^*}^T \phi(s, a)$ approximates the true state-action value function $Q^{\pi_\theta}$ with arbitrarily close precision.

\begin{assumption}
\label{ass:zeroapprox}
     The value function lies in the linear function class such that
\begin{equation*}
    \inf_{w \in \Theta} \mathbb{E}_{(s, a) \sim \eta_{\pi_\theta}}[(\mathbb{T}^{\pi_\theta} (w^{\intercal}\phi(s,a)) - w^{\intercal}\phi(s,a))^2] = 0.
\end{equation*}
\end{assumption}

Assumption \ref{ass:zeroapprox} is a linear realizability assumption that states that the value function can be sufficiently represented by a linear model of the feature vectors. This can be satisfied via an appropriate choice of feature vectors, such as radial basis functions, Fourier basis functions, or neural networks \cite{ziweiapproximation}. Other actor-critic works \cite{xuAC, kumar, FuAC} also require similar assumptions.


By characterizing the convergence of TD(0), we show in the following lemma that the norm of the bias term $q_{t+1}$ decays with respect to the number of inner-loop iterations. Lemma \ref{lemma:FAbiasboundAC} features diminishing step sizes because it requires a stronger fourth moment bound showing the direct convergence of $\bar{w}_K$ to $w^*$, as opposed to the weaker result obtained in Theorem \ref{theorem:bhandarialt}. The details of the proof are in Appendix \ref{sec:FAbiasboundAC}.

\begin{lemma}
\label{lemma:FAbiasboundAC}
 For $K = O(\frac{\log^2(\mu^{-4})}{\mu^4})$ as $\mu \to 0$, for which $O(\cdot)$ does not hide dependencies on other constants, and
$$D_q = G \left(\frac{192 F^2 R^2}{\varsigma^2 \log^2(r^{-1})} + O(\frac{1}{\log \mu^{-4}}) \right)^{-1/4}$$
for which $O(\cdot)$ hides dependencies on $F$, $r$, $\varsigma$, $m$, the expected function approximation bias is bounded such that
$$\mathbb{E}[\lVert q_{t+1}\lVert ^4 | \mathcal{F}_t] \leq G^4\mathbb{E}[ \lVert \bar{w}_{K,t} - w^*\lVert ^4 | \mathcal{F}_t] \leq D_q^4\mu^4 .$$
\end{lemma}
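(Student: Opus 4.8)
The plan is to break the stated chain into its two inequalities and to reduce the whole lemma to a single fourth-moment convergence rate for the inner-loop critic. For the first inequality I would begin from Assumption~\ref{ass:zeroapprox} together with Theorem~\ref{theorem:existunique}: exact linear realizability forces the projected-Bellman fixed point to satisfy $Q_{w^*}(s,a)=Q^{\pi_{\theta_t}}(s,a)$ for every $(s,a)$. Since $G_\infty(\theta_t)$ and $\nabla J(\theta_t)$ are the same discounted expectation differing only through $Q_{\bar w_{K,t}}$ versus $Q^{\pi_{\theta_t}}=Q_{w^*}$, I can write
\begin{equation*}
q_{t+1} = \mathbb{E}_\tau\Big[\textstyle\sum_{k=0}^\infty \gamma^k \nabla\log\pi_{\theta_t}(a_k\mid s_k)\,(\bar w_{K,t}-w^*)^\intercal\phi(s_k,a_k)\Big].
\end{equation*}
Then Jensen, the triangle inequality, the score bound $\lVert\nabla\log\pi_{\theta_t}\rVert\leq G$ (Assumption~\ref{ass_1boundedreward}), the feature bound $\lVert\phi\rVert\leq 1$ (Assumption~\ref{ass:fullrank}), and Cauchy--Schwarz on $(\bar w_{K,t}-w^*)^\intercal\phi(s_k,a_k)$ give a pointwise bound proportional to $G\lVert\bar w_{K,t}-w^*\rVert$ (the geometric sum $\sum_k\gamma^k$ contributes a discount factor that I would absorb into the constant). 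Raising to the fourth power and conditioning on $\mathcal{F}_t$ yields the first inequality and reduces the lemma to proving $\mathbb{E}[\lVert\bar w_{K,t}-w^*\rVert^4\mid\mathcal{F}_t]\leq (D_q/G)^4\mu^4$.

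For this remaining bound I would establish a \emph{fourth-moment} guarantee for the projected TD(0) iterates run conditionally on the $\mathcal{F}_t$-measurable policy $\theta_t$. The structural facts I would lean on are: the expected TD update is affine with unique root $w^*$, $\bar g_{\theta}(w)=-A_\theta(w-w^*)$; Assumption~\ref{ass:boundedeigenvalue} supplies strong monotonicity $(w-w^*)^\intercal A_\theta(w-w^*)\geq \tfrac{\varsigma}{2}\lVert w-w^*\rVert^2$; the per-step semi-gradient is uniformly bounded, $\lVert g_k(w)\rVert\leq F=\mathcal{R}_{max}+2R$ on $\Theta$ (Assumptions~\ref{ass:boundR},~\ref{ass:fullrank}); and $Proj_\Theta$ is nonexpansive with $w^*\in\Theta$. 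Starting from $w_{k+1}=Proj_\Theta[w_k+\alpha_k g_k(w_k)]$, I would expand $\lVert w_{k+1}-w^*\rVert^2$ and square again to obtain a recursion for $\mathbb{E}[\lVert w_{k+1}-w^*\rVert^4\mid\mathcal{F}_t]$, where strong monotonicity provides the contractive drift and the bounded semi-gradient controls the $\alpha_k^2$ and $\alpha_k^4$ remainders.

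The hard part will be controlling the correlated-noise terms $g_k(w_k)-\bar g_\theta(w_k)$, which do not vanish in conditional expectation because the inner loop draws a single Markovian trajectory started from $\rho_0$ rather than from the stationary distribution $\eta_{\pi_{\theta_t}}$. In the fourth-moment recursion these appear as cross terms coupling $\lVert w_k-w^*\rVert^2$ (and $\lVert w_k-w^*\rVert^3$) with the Markovian noise, which are substantially harder to tame than the second-moment terms behind Theorem~\ref{theorem:bhandarialt}. I would control them by the same mechanism used there: geometric mixing (Assumption~\ref{ass:ergodic}) lets me compare the noise at step $k$ with its stationary counterpart up to an error governed by $\tau^{\text{mix}}(\alpha_k)$, at a telescoping step-size-difference cost, and the nonstationary start contributes the extra additive term decaying one order faster (the origin of the $O(1/K)$ correction in Theorem~\ref{theorem:bhandarialt}). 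Because $\varsigma$, $m$, $r$, $F$, and $R$ are uniform in $\theta$, the resulting bound holds uniformly over $\theta_t$ and hence conditionally on $\mathcal{F}_t$.

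Finally I would unroll the recursion with the diminishing step sizes (and exploit the averaging $\bar w_{K,t}=\tfrac1K\sum_k w_k$, e.g.\ via $\lVert\bar w_{K,t}-w^*\rVert^4\leq\tfrac1K\sum_k\lVert w_k-w^*\rVert^4$ by Jensen) to produce a fourth-moment rate of order $(\tau^{\text{mix}})^2/K$, whose leading constant is built from $F$, $R$, $\varsigma$, and the mixing rate through $\log^2(r^{-1})$, matching the constant $\tfrac{192F^2R^2}{\varsigma^2\log^2(r^{-1})}$ that defines $D_q$ (after substituting $\tau^{\text{mix}}(\alpha)\approx\log(\cdot)/\log(r^{-1})$). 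Choosing $K=O(\log^2(\mu^{-4})/\mu^4)$ balances the squared mixing factor against the $\mu^4$ target and gives $\mathbb{E}[\lVert\bar w_{K,t}-w^*\rVert^4\mid\mathcal{F}_t]\leq (D_q/G)^4\mu^4$, closing the second inequality. I expect the fourth-moment Markovian recursion on a nonstationary chain---specifically, bounding the higher-order correlated-noise cross terms uniformly over policies---to be the principal technical obstacle; the strong-monotonicity constant $\varsigma$ and the projection diameter $R$ are precisely what allow the recursion to close.
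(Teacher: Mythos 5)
Your proposal follows essentially the same route as the paper: the first inequality is obtained exactly as in Appendix \ref{sec:FAbiasboundAC} (linear realizability plus Theorem \ref{theorem:existunique} to identify $Q^{\pi_{\theta_t}}$ with $Q_{w^*}$, then the score bound $G$, the feature bound $\lVert\phi\rVert\leq 1$, and Jensen over the discounted state--action measure), and the second is the paper's Lemmas \ref{lemma:fourthmomentAC}--\ref{lemma:finalfourthmomentACmu}, i.e.\ a squared projected-TD recursion closed by the strong monotonicity $\varsigma$ from Assumption \ref{ass:boundedeigenvalue}, the uniform bound $F$ on the semi-gradient, the Markovian-bias control of Lemma \ref{lemma:modified11} with the nonstationary $O(r^t)$ correction, telescoping under $\alpha_t = \frac{1}{(t+1)\varsigma}$, and the choice $K = O(\log^2(\mu^{-4})/\mu^4)$ balancing $\log^2 K/K$ against $\mu^4$. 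The only divergence is bookkeeping of the discount factor in the first step --- you absorb the geometric sum into the constant, which yields $G^4/(1-\gamma)^4$ rather than the paper's stated $G^4$ --- but this does not affect the structure or validity of the argument.
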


Finally, by combining Theorem \ref{theorem:main} with Lemma \ref{lemma:FAbiasboundAC} and bounds on the truncation bias $p_{t+1}$ (Lemma \ref{lemma:truncboundAC}) and noise $\xi_{t+1}$ (Lemma \ref{lemma:ACboundedbiasnoise}), we arrive at Theorem \ref{theorem:acepsilon}. 

\begin{theorem}
\label{theorem:acepsilon}
 Suppose Assumptions \ref{ass_1boundedreward}-\ref{ass_6noisecurv}, \ref{ass:41nonzeropi}-\ref{ass:boundedeigenvalue}, \ref{ass:zeroapprox} hold and let $\epsilon > 0$. For $\mu~<~\frac{\epsilon^2 \delta}{L\sigma^2 + D^2}$ where $\sigma = \frac{GR}{1 - \gamma}$,  $D = 2 (D_p^4 + D_q^4)^{1/4}$ and $D_p = \frac{GR}{1 - \gamma}$, we have with probability $1 - \delta$ that Algorithm 1 with actor-critic policy gradient estimator computed via Algorithm \ref{alg:acgradientestimator} with $H = O(\log(\epsilon^{-2}))$ and $K = \tilde{O}(\epsilon^{-8})$
reaches an $\epsilon$-second order stationary point in $\tilde{O}(\epsilon^{-6.5})$ iterations.
\end{theorem}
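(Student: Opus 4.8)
The plan is to reduce the claim to a verification of the three hypotheses of Theorem~\ref{theorem:main}, apply that theorem with the actor-critic estimator $\hat{G}^{AC}$, and then convert the resulting guarantee $\theta_T \in \mathcal{M}$ into the desired $\epsilon$-second-order stationary point by choosing the free parameters $\mu$, $\omega$, $H$, and $K$ as functions of $\epsilon$. Concretely, I would set $\omega = \sqrt{\chi \epsilon}$, so that membership in $\mathcal{M}$ forces $\lambda_{\max}(\nabla^2 J(\theta_T)) < \sqrt{\chi\epsilon}$ as required by the $\epsilon$-second-order definition, and take $\mu = \Theta(\epsilon^2)$ (the stated $\mu < \frac{\epsilon^2\delta}{L\sigma^2 + D^2}$), which drives the first-order guarantee $\lVert \nabla J(\theta_T)\rVert^2 \leq \mu\ell(1 + 1/\delta)$ of Theorem~\ref{theorem:main} down to order $\epsilon^2$, meeting $\lVert \nabla J(\theta_T)\rVert \leq \epsilon$.

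The first step is to check condition (\ref{eq:thm1first}). Since $Q_{\bar{w}}(s,a) = \bar{w}^\intercal\phi(s,a)$ with $\lVert\phi\rVert\leq 1$ (Assumption~\ref{ass:fullrank}) and $\bar{w}\in\Theta$ of diameter at most $R$ (Assumption~\ref{ass:boundR}), together with $\lVert\nabla\log\pi_\theta\rVert\leq G$ (Assumption~\ref{ass_1boundedreward}), the geometric sum $\sum_{h\geq 0}\gamma^h$ yields $\lVert\hat{G}^{AC}(\theta_t;\tau_t)\rVert \leq \frac{GR}{1-\gamma} = \sigma$, which is the content of Lemma~\ref{lemma:ACboundedbiasnoise}. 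Next I would handle the bias through the decomposition $d_{t+1} = p_{t+1} + q_{t+1}$ already set up in the text: the truncation component is controlled by Lemma~\ref{lemma:truncboundAC}, giving $\mathbb{E}[\lVert p_{t+1}\rVert^4\mid\mathcal{F}_t]\leq D_p^4\mu^4$ once $H = O(\log(\epsilon^{-2}))$ makes the geometric tail $O(\mu)$, and the critic component by Lemma~\ref{lemma:FAbiasboundAC}, giving $\mathbb{E}[\lVert q_{t+1}\rVert^4\mid\mathcal{F}_t]\leq D_q^4\mu^4$ once $K = O(\log^2(\mu^{-4})/\mu^4)$. The power-mean inequality $\lVert p+q\rVert^4\leq 8(\lVert p\rVert^4 + \lVert q\rVert^4)$ then gives $\mathbb{E}[\lVert d_{t+1}\rVert^4\mid\mathcal{F}_t]\leq 8(D_p^4+D_q^4)\mu^4 \leq D^4\mu^4$ with $D = 2(D_p^4+D_q^4)^{1/4}$, establishing (\ref{eq:momentbound4}). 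For the joint moment (\ref{eq:momentboundboth}), I would condition on the critic parameter $\bar{w}_{K,t}$: the bias $d_{t+1}$ is a deterministic function of $(\theta_t,\bar{w}_{K,t})$, while $\xi_{t+1}$ is the zero-mean trajectory-sampling noise with $\mathbb{E}[\lVert\xi_{t+1}\rVert^2\mid\mathcal{F}_t,\bar{w}_{K,t}]\leq\sigma^2$, so pulling $\lVert d_{t+1}\rVert^2$ out of the inner expectation and invoking the tower property yields (\ref{eq:momentboundboth}) with the same $\sigma$.

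With the three hypotheses verified, Theorem~\ref{theorem:main} applies and gives $\theta_T\in\mathcal{M}$ within $T\leq \frac{4\mathcal{R}_{max}}{\mu^2(1-\gamma)(L\sigma^2+D^2\mu)\delta}\cdot\mathcal{T}$ iterations. The remaining work is the complexity accounting: for small $\mu$ we have $\log(1+2\mu\omega)=\Theta(\mu\omega)$, so $\mathcal{T}=\tilde{O}((\mu\omega)^{-1})$ and $T = \tilde{O}((\mu^3\omega)^{-1})$. Substituting $\mu=\Theta(\epsilon^2)$ and $\omega=\sqrt{\chi\epsilon}=\Theta(\epsilon^{1/2})$ gives $T = \tilde{O}(\epsilon^{-6}\cdot\epsilon^{-1/2}) = \tilde{O}(\epsilon^{-6.5})$, with the $\log(2M\sigma^2/\sigma_l^2+1)$ factor and the $H$-dependence absorbed into $\tilde{O}$; likewise the inner-loop count is $K = O(\log^2(\mu^{-4})/\mu^4) = \tilde{O}(\epsilon^{-8})$ and $H = O(\log(\mu^{-1})) = O(\log(\epsilon^{-2}))$.

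The main obstacle lies not in the theorem's own bookkeeping but in the critic-bias estimate it invokes, Lemma~\ref{lemma:FAbiasboundAC}: bounding $\mathbb{E}[\lVert q_{t+1}\rVert^4\mid\mathcal{F}_t]$ by $D_q^4\mu^4$ requires a fourth-moment convergence of $\bar{w}_{K,t}\to w^*$ that is strictly stronger than the mean-square rate of Theorem~\ref{theorem:bhandarialt} and must hold uniformly over the outer iterates on a \emph{nonstationary} Markov chain, where Assumption~\ref{ass:zeroapprox} is what lets $Q_{w^*}$ stand in for $Q^{\pi_\theta}$. This is precisely what forces the inner-loop accuracy to scale as $K=\tilde{O}(\mu^{-4})=\tilde{O}(\epsilon^{-8})$ and is the source of the interlocking dependence among $\mu$, $H$, and $K$; the remaining care is to confirm that all four hyperparameters can be chosen simultaneously so that the combined bias is $O(\mu)$ in fourth moment while preserving the single constant $\sigma=\frac{GR}{1-\gamma}$ across both (\ref{eq:thm1first}) and (\ref{eq:momentboundboth}).
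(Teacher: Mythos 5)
Your proposal is correct and follows essentially the same route as the paper: verifying the three hypotheses of Theorem~\ref{theorem:main} via Lemma~\ref{lemma:ACboundedbiasnoise} (boundedness and the conditioning-on-$\bar{w}_{K,t}$ argument for the joint moment), the $d_{t+1}=p_{t+1}+q_{t+1}$ split handled by Lemmas~\ref{lemma:truncboundAC} and~\ref{lemma:FAbiasboundAC}, and the same $\mu=\Theta(\epsilon^2)$, $\omega=\sqrt{\chi\epsilon}$ accounting yielding $\tilde{O}(\epsilon^{-6.5})$ outer and $\tilde{O}(\epsilon^{-8})$ inner iterations. You also correctly identify that the fourth-moment TD(0) convergence under diminishing step sizes (not the mean-square rate of Theorem~\ref{theorem:bhandarialt}) is the load-bearing ingredient, which matches the paper's reasoning.
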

\textit{Proof. }See Appendix \ref{sec:appendACnoisebias}.

In Theorem \ref{theorem:acepsilon}, $O(\cdot)$ hides dependency on $\gamma$ and $\tilde{O}(\cdot)$ hides dependencies on $L$, $G$, $R$, $\gamma$, $M$, $\sigma_l$, $\chi$, $\delta$, $F$, $r$, $m$, $\varsigma$. 

\section{Conclusion}

In this work, we provide a novel analysis on the convergence of biased policy gradient methods to second-order stationary points. Our work applies to general policy parametrization and Markovian sampling. We also show the convergence of TD(0) on nonstationary Markov chains, which pertains to realistic actor-critic implementations.

Future directions may involve extending this work to two-timescale or single-timescale actor-critic algorithms, which may provide some performance improvement. In addition, instead of assuming Assumption \ref{ass:zeroapprox}, we may want to show second-order convergence of actor-critic algorithms with respect to some irremoveable approximation error $\epsilon_{app}$ representing the imperfect critic approximation, similar to several first-order analyses \cite{wuACquanquan, qiuACzhaoran}.

\section*{Impact Statement}

This paper advances the field of reinforcement learning by characterizing the solutions achieved by policy gradient algorithms. There are many potential societal consequences of our work, none which we feel must be specifically highlighted here.

\bibliography{example_paper}
\bibliographystyle{icml2024}

\newpage
\appendix
\onecolumn

\section{Additional Discussion of Related Work}
\label{app:morediscussion}

\subsection{Global Convergence}
\label{app:global}
We first present some of the many global convergence results available in terms of $\epsilon$-optimality. In reinforcement learning, global convergence guarantees for policy gradient algorithms tend to arise from some underlying structure of the optimization problem, due to specific policy parametrization or algorithm.

For first-order algorithms, the following papers achieve global optimality.
\begin{itemize}
    \item In \cite{agarwal}, tabular parametrization with exact gradients, $O(\epsilon^{-2})$ iterations
    \item In \cite{bhandariglobalPGforreal}, objective functions that satisfy a gradient dominance property with exact gradients, $O(\epsilon^{-2})$ iterations
    \item In \cite{pmlr-v119-mei20b}, tabular softmax parametrization with exact gradients, $O(\epsilon^{-1})$ iterations
    \item In \cite{wangzhaoranNeuralPG}, neural policy gradient with extremely wide shallow neural networks, $O(\epsilon^{-2})$ iterations
\end{itemize}

More powerful global convergence results are achieved for quasi-second-order methods like natural policy gradient and mirror descent policy gradient. These results can go beyond the tabular setting.
\begin{itemize}
    \item In \cite{agarwal}, natural policy gradient with tabular softmax parametrization with exact gradients, $O(\epsilon^{-1})$ iterations
    \item In \cite{xuAC2020nestedloop}, natural actor-critic with function approximation where the feature vectors vary in each iteration, $O(\epsilon^{-4})$ outer-loop iterations
    \item In \cite{cayci2022finitetime}, natural actor-critic policy gradient where the actor and critic are parametrized with extremely wide neural networks, $O(\epsilon^{-2})$ iterations
    \item In \cite{yuan2023linear}, natural policy gradient for log-linear policies with compatible function approximation, linear convergence in terms of outer loop iterations
    \item In \cite{alfano2024mirrordescent}, policy mirror descent with general parametrization, linear convergence in terms of outer loop iterations
\end{itemize}
In comparison, we achieve an $\epsilon$-second-order stationary point in $\tilde{O}(\epsilon^{-6.5})$ iterations. Our work focuses on first-order algorithms and general policy parametrization which are simpler to implement and more widely used in practice. We also do not use oracles or exact gradients in our analysis. Finally, our work also has no dependence on the distribution mismatch coefficient, which is widely used in global convergence results and roughly quantifies how well the initial state distribution matches the optimal state distribution.

\subsection{Second-Order Convergence}
\label{app:secondorder}
Our work inherits the sample complexity of $\tilde{O}(\epsilon^{-6.5})$ from \cite{vlaski2019secondorder}, which is weaker than the best sample complexity of $\tilde{O}(\epsilon^{-4})$ obtained by \cite{jinsaddleSGD}, both of which analyze the second-order convergence of vanilla stochastic gradient descent. Faster second-order convergence is available for algorithms with exact gradients, such as perturbed gradient descent which converges in $\tilde{O}(\epsilon^{-2})$ iterations  \cite{Jin2017howtoescape}. However, exact gradient computations are intractable for realistic policy gradient implementations. Second-order or quasi-second-order algorithms that utilize Hessian information can also converge faster. SPIDER-SFO from \cite{fangspider} obtains a sample complexity of $\tilde{O}(\epsilon^{-3})$ via a negative curvature search method. Similar techniques are extended to the policy gradient setting in \cite{khorasani2023efficiently} with complexity $\tilde{O}(\epsilon^{-3})$ and \cite{WANG2022109687} with complexity $\tilde{O}(\epsilon^{-3.5})$. All of these aforementioned works only deal with unbiased gradient estimators. To the best of our knowledge only \cite{vlaskiSOFL} show second-order convergence with biased gradient estimators in the form of federated learning, requiring $\tilde{O}(\epsilon^{-6.5})$ global iterations. This matches our sample complexity, as expected.

\section{Proof of Theorem \ref{theorem:main}}
\label{theorem2}
\subsection{Key Lemmas and Proof Sketch}
Our approach for proving that Algorithm 1 arrives at an $\epsilon$-second order stationary point relies on bounding the bias and noise of the gradient estimator and applying the techniques developed in \cite{vlaski2019secondorder}. Broadly speaking, \cite{vlaski2019secondorder} show second-order convergence for unbiased stochastic gradient descent by first showing the iterates on the second-order Taylor expansion of the objective function escape saddle points, and then showing that the iterates and those on its Taylor approximation are sufficiently close. Therefore, if we also show that the policy gradient iterates are close to the iterates on the Taylor expansion, we can conveniently apply the convergence results of \cite{vlaski2019secondorder} to our setting.

We begin with establishing fourth moment bounds on the noise term $\xi_{t+1}$ by the following lemma.

\begin{lemma}
\label{lemma:sigma}
 Suppose for some random variable $X$ we have $\mathbb{E}[X] = \mu$ and $\lVert X \rVert \leq \sigma$. Then 
\begin{equation*}
    \mathbb{E}[\lVert X - \mu\lVert ^2] \leq \sigma^2, \\
\end{equation*}
\begin{equation*}
    \mathbb{E}[\lVert X - \mu\lVert ^4] \leq 4 \sigma^2.
\end{equation*}
\end{lemma}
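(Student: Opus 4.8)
The plan is to exploit the almost-sure bound $\lVert X \rVert \leq \sigma$ together with Jensen's inequality, reducing both moment estimates to elementary manipulations. First I would observe that the mean inherits the same bound: by Jensen's inequality (equivalently, the triangle inequality applied to the integral defining the expectation),
\begin{equation*}
\lVert \mu \rVert = \lVert \mathbb{E}[X] \rVert \leq \mathbb{E}[\lVert X \rVert] \leq \sigma.
\end{equation*}
Consequently the centered variable admits the deterministic bound $\lVert X - \mu \rVert \leq \lVert X \rVert + \lVert \mu \rVert \leq 2\sigma$, which will be the key ingredient for the fourth-moment estimate.

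For the second moment, I would expand the square and note that the cross term collapses once we take expectations:
\begin{equation*}
\mathbb{E}[\lVert X - \mu \rVert^2] = \mathbb{E}[\lVert X \rVert^2] - 2\langle \mathbb{E}[X], \mu \rangle + \lVert \mu \rVert^2 = \mathbb{E}[\lVert X \rVert^2] - \lVert \mu \rVert^2 \leq \mathbb{E}[\lVert X \rVert^2] \leq \sigma^2,
\end{equation*}
where the last inequality uses $\lVert X \rVert \leq \sigma$ almost surely. For the fourth moment, I would factor $\lVert X - \mu \rVert^4 = \lVert X - \mu \rVert^2 \cdot \lVert X - \mu \rVert^2$, bound one factor by the deterministic estimate $\lVert X - \mu \rVert^2 \leq (2\sigma)^2 = 4\sigma^2$, and apply the second-moment bound to the remaining expectation:
\begin{equation*}
\mathbb{E}[\lVert X - \mu \rVert^4] \leq 4\sigma^2 \, \mathbb{E}[\lVert X - \mu \rVert^2] \leq 4\sigma^2 \cdot \sigma^2 = 4\sigma^4.
\end{equation*}

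Since each step is an elementary inequality, there is no genuine obstacle here; the lemma is essentially a bookkeeping device to convert the uniform bound on $\hat{G}$ into moment bounds on the noise $\xi_{t+1}$. The only point requiring care is the constant in the fourth-moment estimate: the factorization argument yields $4\sigma^4$, and one must resist the looser route of bounding both factors by $4\sigma^2$, which would give the weaker $16\sigma^4$. I would also note that the target exponent should read $4\sigma^4$ rather than the $4\sigma^2$ printed in the statement, since the left-hand side is a fourth moment; this is a typo that does not affect the downstream application, where the bound is invoked with the quartic scaling.
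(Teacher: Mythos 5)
Your proof is correct and follows essentially the same route as the paper's: the same expansion for the second moment and the same factorization $\lVert X-\mu\rVert^4 \leq 4\sigma^2\lVert X-\mu\rVert^2$ for the fourth moment, with the added (and welcome) explicit justification that $\lVert\mu\rVert\leq\sigma$ via Jensen. You are also right that the bound stated in the lemma should read $4\sigma^4$ rather than $4\sigma^2$; the paper's own proof and its later use in the noise bound confirm this is a typo.
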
 

\begin{proof}
\begin{equation*}
        \mathbb{E}[\lVert X - \mu\lVert ^2] = \mathbb{E}[\lVert X\lVert ^2] - \mu^2 \leq \mathbb{E}[\lVert X\lVert ^2] \leq \sigma^2
\end{equation*}
\begin{equation*}
\begin{aligned}
    \lVert X - \mu\lVert ^4 \leq \lVert X - \mu\lVert ^2 \cdot 4 \sigma^2 \\
    \mathbb{E}[\lVert X - \mu\lVert ^4 ] \leq 4 \sigma^4
\end{aligned}
\end{equation*}
\end{proof}
So by Lemma \ref{lemma:sigma}, we have
\begin{equation}
\label{eq:noisebound2}
    \mathbb{E}[\lVert \xi_{t+1}\lVert ^2 | \mathcal{F}_t] \leq \sigma^2,
\end{equation}
\begin{equation}
\label{eq:noisebound4}
    \mathbb{E}[\lVert \xi_{t+1}\lVert ^4 | \mathcal{F}_t] \leq 4\sigma^4.
\end{equation}
In addition, by Jensen's inequality, the fourth-moment bound 
$$\mathbb{E}[\lVert d_{t+1}\lVert ^4 | \mathcal{F}_t] \leq D^4 \mu^4$$
also implies the following second-moment bound 
\begin{equation}
\label{eq:momentbound2}
    \mathbb{E}[\lVert d_{t+1}\lVert ^2 | \mathcal{F}_t] \leq D^2 \mu^2.
\end{equation}

To proceed with the proof, we first show that in the large gradient regime, we observe a large ascent in function value, whereas around local maxima, the possible descent is bounded. Then we construct a pair of coupled sequences $\{\theta_{i+j}\}$ and $\{\theta'_{i+j}\}$, where $\{\theta_{i+j}\}$ represents the gradient iterates on the original objective function and $\{\theta'_{i+j}\}$ represents gradient ascent iterates on the second-order Taylor approximation of the function centered at $\theta_i$ with the same noise term. Through moment bounds, we show that the difference between the coupled sequences is sufficiently small. These results allow us to leverage Theorems 2 and 3 in \cite{vlaski2019secondorder}.

The following lemma establishes that for small enough step sizes, we have sufficient ascent starting in the large gradient regime $\theta_i \in \mathcal{G}$, and descent is bounded starting near a local maxima $\theta_i \in \mathcal{M}$. 

\begin{lemma}
\label{lemma:suffascent}
For $\mu < \frac{1}{L}$, we have after one iteration of Algorithm \ref{alg:pseudo},
$$\mathbb{E}[J(\theta_{i+1}) | \theta_i \in \mathcal{G}] \geq \mathbb{E}[J(\theta_i) | \theta_i \in \mathcal{G}] + \frac{\mu^2 (L\sigma^2 + D^2 \mu)}{2 \delta} $$
$$\mathbb{E}[J(\theta_{i+1}) | \theta_i \in \mathcal{M}] \geq \mathbb{E}[J(\theta_i) | \theta_i \in \mathcal{M}]  - \frac{\mu^2 (L\sigma^2 + D^2 \mu)}{2} .$$
\end{lemma}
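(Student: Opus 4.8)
The plan is to run the standard smoothness (ascent-lemma) argument on a single step of Algorithm~\ref{alg:pseudo}, take the conditional expectation given $\mathcal{F}_i$ to annihilate the zero-mean noise, and then treat the two regimes $\mathcal{G}$ and $\mathcal{M}$ separately. Since $\nabla J$ is $L$-Lipschitz by Lemma~\ref{lemma_lipschitz}, the quadratic lower bound
\[
J(\theta_{i+1}) \ge J(\theta_i) + \mu\langle \nabla J(\theta_i), \hat{G}(\theta_i;\tau_i)\rangle - \tfrac{L\mu^2}{2}\lVert \hat{G}(\theta_i;\tau_i)\rVert^2
\]
holds. I would substitute $\hat{G} = \nabla J(\theta_i) + \xi_{i+1} + d_{i+1}$, take $\mathbb{E}[\,\cdot\mid\mathcal{F}_i]$ so the term linear in $\xi_{i+1}$ vanishes, and bound the quadratic term from below by $-\tfrac{L\mu^2\sigma^2}{2}$ using the uniform bound~\eqref{eq:thm1first}. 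This leaves
\[
\mathbb{E}[J(\theta_{i+1})\mid\mathcal{F}_i] \ge J(\theta_i) + \mu\lVert\nabla J(\theta_i)\rVert^2 + \mu\langle \nabla J(\theta_i), \mathbb{E}[d_{i+1}\mid\mathcal{F}_i]\rangle - \tfrac{L\mu^2\sigma^2}{2},
\]
where Jensen's inequality together with the second-moment bound~\eqref{eq:momentbound2} gives $\lVert\mathbb{E}[d_{i+1}\mid\mathcal{F}_i]\rVert \le D\mu$.

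For the bounded-descent claim on $\mathcal{M}\subseteq\mathcal{G}^C$, I would absorb the bias cross term into the gradient term by Young's inequality with unit weight, namely $\mu\langle\nabla J(\theta_i),\mathbb{E}[d_{i+1}\mid\mathcal{F}_i]\rangle \ge -\tfrac{\mu}{2}\lVert\nabla J(\theta_i)\rVert^2 - \tfrac{D^2\mu^3}{2}$. The surviving $\tfrac{\mu}{2}\lVert\nabla J(\theta_i)\rVert^2$ is nonnegative and may be dropped, yielding exactly $\mathbb{E}[J(\theta_{i+1})\mid\mathcal{F}_i] \ge J(\theta_i) - \tfrac{L\mu^2\sigma^2}{2} - \tfrac{D^2\mu^3}{2} = J(\theta_i) - \tfrac{\mu^2(L\sigma^2+D^2\mu)}{2}$. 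Taking the outer expectation over the event $\{\theta_i\in\mathcal{M}\}$ via the tower rule gives the second inequality.

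For the ascent claim on $\mathcal{G}$, the unit-weight split is too lossy: it retains only half of $\lVert\nabla J(\theta_i)\rVert^2$, which (because the threshold uses $\ell = L\sigma^2 - D^2\mu$ while the target uses $L\sigma^2 + D^2\mu$) is not enough. Instead I would keep essentially the full gradient term and treat the cross term as lower order. With Young's inequality at a small weight $c$,
\[
\mathbb{E}[J(\theta_{i+1})\mid\mathcal{F}_i] - J(\theta_i) \ge \mu\bigl(1-\tfrac{c}{2}\bigr)\lVert\nabla J(\theta_i)\rVert^2 - \tfrac{D^2\mu^3}{2c} - \tfrac{L\mu^2\sigma^2}{2}.
\]
On $\mathcal{G}$ we have $\lVert\nabla J(\theta_i)\rVert^2 \ge \mu\ell\bigl(1+\tfrac{1}{\delta}\bigr)$, so the first term is at least $\mu^2(1-\tfrac c2)\ell(1+\tfrac1\delta)$; choosing $c$ to vanish slowly as $\mu\to 0$ (balancing the $D^2\mu^3/(2c)$ loss) makes the leading contribution $\mu^2 L\sigma^2(1+\tfrac1\delta)$, which dominates the $\tfrac{L\mu^2\sigma^2}{2}$ loss with room to spare and leaves at least $\tfrac{\mu^2(L\sigma^2+D^2\mu)}{2\delta}$ once $\mu$ is small (consistent with $\mu<\tfrac1L$). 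The tower rule over $\{\theta_i\in\mathcal{G}\}$ then finishes the first inequality.

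The main obstacle is precisely the gradient–bias cross term $\mu\langle\nabla J(\theta_i),\mathbb{E}[d_{i+1}\mid\mathcal{F}_i]\rangle$: it must be handled asymmetrically in the two regions—traded fully against $\lVert\nabla J\rVert^2$ near maxima to produce the clean $-\tfrac{\mu^2(L\sigma^2+D^2\mu)}{2}$ descent, but kept subdominant in the large-gradient region so the ascent does not forfeit the margin that the $L\sigma^2-D^2\mu$ threshold cannot afford. Everything else is routine bookkeeping of powers of $\mu$ using the uniform gradient bound and the second-moment bias bound already in hand.
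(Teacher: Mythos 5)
Your proposal is correct in substance and follows the same descent-lemma skeleton as the paper: $L$-smoothness of $J$, conditional expectation to annihilate the noise, a Young-type treatment of the gradient--bias cross term, and the tower rule over the region. For $\mathcal{M}$ your argument is essentially identical to the paper's --- your intermediate bound $J(\theta_i)+\tfrac{\mu}{2}\lVert\nabla J(\theta_i)\rVert^2-\tfrac{D^2\mu^3}{2}-\tfrac{L\mu^2\sigma^2}{2}$ is exactly what the paper derives, obtained there by expanding $\lVert\nabla J(\theta_i)+d_{i+1}\rVert^2$ so that the bias cross terms cancel after invoking $\mu<\tfrac1L$; your shortcut of bounding the entire quadratic term by $\sigma^2$ via \eqref{eq:thm1first} is cleaner and does not even need $\mu<\tfrac1L$ at that step. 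The one place you diverge is the $\mathcal{G}$ case, and the divergence stems from a genuine inconsistency in the paper: Theorem \ref{theorem:main} sets $\ell=L\sigma^2-D^2\mu$, but the paper's own proof of this lemma invokes $\lVert\nabla J(\theta_i)\rVert^2>\mu(L\sigma^2+D^2\mu)\bigl(1+\tfrac1\delta\bigr)$ on $\mathcal{G}$, i.e., it silently uses $\ell=L\sigma^2+D^2\mu$. With the plus sign, the same unit-weight split you use on $\mathcal{M}$ closes the $\mathcal{G}$ case exactly, since $\tfrac{\mu}{2}\cdot\mu(L\sigma^2+D^2\mu)\bigl(1+\tfrac1\delta\bigr)-\tfrac{\mu^2(L\sigma^2+D^2\mu)}{2}=\tfrac{\mu^2(L\sigma^2+D^2\mu)}{2\delta}$, so your worry that the unit split is ``too lossy'' evaporates. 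Your small-$c$ workaround for the literal minus sign is sound as far as it goes, but note that it only delivers the stated constant for $\mu$ sufficiently small, not for every $\mu<\tfrac1L$ as the lemma claims; you should either adopt the $+$ sign (matching what the paper's proof actually uses) or record the additional smallness condition on $\mu$ explicitly.
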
 

\textit{Proof of Lemma \ref{lemma:suffascent}: } see Appendix \ref{lemma4}.


Beginning from $\theta_i \in \mathcal{H}$, we define $\{\theta'_{i + j} \}$ as the gradient ascent iterates on the second-order Taylor approximation of $J(\theta)$ plus the noise term $\xi_{i + j + 1}$ from the original sequence.
Denote the Taylor expansion around $\theta_i$ as $\hat{J}$ as follows
    $$\hat{J}(\theta) = J(\theta_i) + \nabla J(\theta_i)^T (\theta - \theta_i) + \frac{1}{2} (\theta - \theta_i)^\intercal \nabla^2 J(\theta_i) (\theta - \theta_i)$$
    $$\nabla \hat{J}(\theta) =  \nabla J(\theta_i) + \nabla^2 J(\theta_i) (\theta - \theta_i).$$
    So we have $\{\theta'_{i + j} \}$ and our original sequence iterates $\{\theta_i\}$ defined as follows
    $$\theta'_{i + j + 1} = \theta'_{i + j} + \mu \nabla J(\theta_{i}) + \mu \nabla^2 J(\theta_i) (\theta'_{i + j} - \theta_i) + \mu \xi_{i + j + 1}$$
    $$\theta_{i+j+1} = \theta_{i+j} + \mu \nabla J (\theta_{i+j}) + \mu \xi_{i+j+1} + \mu d_{i+j+1}.$$
Then we can conclude in the following lemma that in the vicinity of a saddle point, the distance between $\theta_i$ and $\theta_{i+j+1}$ is bounded, and the distance between $\theta_i$ and  $\theta'_{i}$ is bounded.

\begin{lemma}
\label{lemma:bounds}
For $\{\theta_i\}$ and $\{\theta'_i\}$ defined above, and $j \leq \frac{C}{\mu}$, where $C$ is a constant independent of $\mu$, we have
\begin{equation}
\label{eq:mainbound1}
\mathbb{E}[\lVert\theta_i - \theta_{i+j+1}\lVert^2 | \theta_i \in \mathcal{H}] \leq O(\mu)
\end{equation}
\begin{equation}
\label{eq:mainbound2}
\mathbb{E}[\lVert\theta_i - \theta_{i+j+1}\lVert^4 | \theta_i \in \mathcal{H}] \leq O(\mu^2)
\end{equation}
\begin{equation}
\label{eq:mainbound3}
\mathbb{E}[\lVert\theta'_{i+j+1} - \theta_{i+j+1}\lVert^2 | \theta_i \in \mathcal{H}] \leq O(\mu^2).
\end{equation}
\end{lemma}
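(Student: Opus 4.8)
The plan is to prove the three estimates \eqref{eq:mainbound1}, \eqref{eq:mainbound2}, \eqref{eq:mainbound3} \emph{in order}, each by deriving a one-step recursion of Gr\"onwall type $u_{j+1} \le (1+c_1\mu)u_j + c_2\mu^{p}$ and summing it over the at most $j \le C/\mu$ steps, using $(1+c_1\mu)^{C/\mu} \le e^{c_1 C} = O(1)$ to absorb the geometric growth into a constant. Writing $\Delta_m := \theta_{i+m}-\theta_i$ (so $\Delta_0 = 0$), the outer recursion reads $\Delta_{m+1} = \Delta_m + \mu\nabla J(\theta_{i+m}) + \mu\xi_{i+m+1} + \mu d_{i+m+1}$, and I would rely throughout on two facts: (i) $\theta_i \in \mathcal{H}\subseteq\mathcal{G}^C$ forces $\lVert\nabla J(\theta_i)\rVert^2 < \mu\ell(1+1/\delta) = O(\mu)$; and (ii) the per-step bias is genuinely of order $\mu$, since \eqref{eq:momentbound2} gives $\mathbb{E}[\lVert d_{i+m+1}\rVert^2\mid\mathcal{F}_{i+m}]\le D^2\mu^2$. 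For \eqref{eq:mainbound1} I would square the recursion and condition on $\mathcal{F}_{i+m}$: the term linear in $\xi_{i+m+1}$ drops by $\mathbb{E}[\xi_{i+m+1}\mid\mathcal{F}_{i+m}]=0$, leaving $\lVert\Delta_m+\mu\nabla J(\theta_{i+m})+\mu d_{i+m+1}\rVert^2 + \mu^2\mathbb{E}[\lVert\xi_{i+m+1}\rVert^2\mid\mathcal{F}_{i+m}]$. Bounding $\lVert\nabla J(\theta_{i+m})\rVert\le\lVert\nabla J(\theta_i)\rVert+L\lVert\Delta_m\rVert$ (Lemma~\ref{lemma_lipschitz}), applying Young's inequality with parameter $\mu$ to peel off the leading $\lVert\Delta_m\rVert^2$, and inserting \eqref{eq:noisebound2} and \eqref{eq:momentbound2}, I get $u_{j+1}\le(1+c_1\mu)u_j + c_2\mu^2$ with $u_j=\mathbb{E}[\lVert\Delta_j\rVert^2\mid\theta_i\in\mathcal{H}]$, which sums from $u_0=0$ to $O(\mu)$.

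Bound \eqref{eq:mainbound2} follows the same template at fourth order, expanding $\lVert\Delta_{m+1}\rVert^4 = \lVert a + \mu\xi_{i+m+1}\rVert^4$ with $a := \Delta_m+\mu\nabla J(\theta_{i+m})+\mu d_{i+m+1}$. The terms odd in $\xi$ vanish under conditioning (or, where $d$ is not $\mathcal{F}_{i+m}$-measurable, are controlled by \eqref{eq:momentboundboth}), while $\mu^2\lVert a\rVert^2\lVert\xi\rVert^2$ is $O(\mu^3)$ precisely because $\mathbb{E}[\lVert a\rVert^2]=O(\mu)$ by the already-established \eqref{eq:mainbound1}, and $\mu^4\lVert\xi\rVert^4$ is $O(\mu^4)$ by \eqref{eq:noisebound4}. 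This yields $v_{j+1}\le(1+c_3\mu)v_j + c_4\mu^3$, summing to $O(\mu^2)$; note that \eqref{eq:mainbound2} consumes \eqref{eq:mainbound1}, so the order of proof is essential.

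For the coupling bound \eqref{eq:mainbound3} I would subtract the two recursions. Because both sequences are driven by the \emph{same} noise $\xi_{i+m+1}$, it cancels exactly, and setting $\tilde\Delta_m := \theta'_{i+m}-\theta_{i+m}$ gives $\tilde\Delta_{m+1} = \tilde\Delta_m + \mu E_m - \mu d_{i+m+1}$, where $E_m := \nabla J(\theta_i)+\nabla^2 J(\theta_i)(\theta'_{i+m}-\theta_i) - \nabla J(\theta_{i+m})$ is the mismatch between the linearized and the true gradient. Splitting $E_m$ into $\nabla^2 J(\theta_i)\tilde\Delta_m$ plus the second-order Taylor remainder of $\nabla J$ at $\theta_{i+m}$, the first piece is bounded by $L\lVert\tilde\Delta_m\rVert$ (as $\lVert\nabla^2 J\rVert\le L$) and the second by $\tfrac{\chi}{2}\lVert\Delta_m\rVert^2$ via the $\chi$-Lipschitzness of the Hessian (Lemma~\ref{lemma_hessian}). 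Squaring, applying Young's inequality, and inserting $\mathbb{E}[\lVert\Delta_m\rVert^4]=O(\mu^2)$ from \eqref{eq:mainbound2} together with $\mathbb{E}[\lVert d\rVert^2]\le D^2\mu^2$ gives $w_{j+1}\le(1+c_5\mu)w_j + c_6\mu^3$ with $w_0=0$, which sums to $O(\mu^2)$.

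The main obstacle is the bookkeeping in the fourth-moment recursion \eqref{eq:mainbound2}: isolating a clean leading coefficient $1+O(\mu)$ while certifying that every remaining term is $O(\mu^3)$ demands careful Young's-inequality splittings with $\mu$-dependent parameters and, where $d_{i+m+1}$ fails to be $\mathcal{F}_{i+m}$-measurable (as in the actor-critic estimator), a delicate appeal to the cross-moment bound \eqref{eq:momentboundboth} to neutralize the $\langle d,\xi\rangle$ interactions that a purely martingale argument would eliminate for free. This is the step where the bias, rather than the noise, is the genuine novelty relative to the unbiased analysis of \cite{vlaski2019secondorder}.
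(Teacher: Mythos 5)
Your proposal is correct and follows essentially the same route as the paper's proof: the same ordering (second moment first, then fourth moment consuming it, then the coupling bound consuming the fourth moment), the same use of conditioning to kill the martingale terms, the same Young/Jensen splittings with $\mu$-dependent weights, the same use of $\lVert\nabla J(\theta_i)\rVert^2 = O(\mu)$ on $\mathcal{H}$ and of the Hessian Lipschitz constant $\chi$ for the Taylor remainder in the coupling step, and the same geometric summation over $j \le C/\mu$ steps (the paper packages the $(1+O(\mu))^{C/\mu} = O(1)$ step as Lemma~\ref{lemma:minilemma} rather than via $e^{c_1 C}$, which is equivalent). Your explicit remark that the $\langle d,\xi\rangle$ cross terms require the cross-moment bound \eqref{eq:momentboundboth} when $d$ is not $\mathcal{F}_{i+m}$-measurable is, if anything, slightly more careful than the paper's phrasing at that point.
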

\begin{corollary}
From the results of Lemma \ref{lemma:bounds}, we can conclude
\begin{equation}
    \mathbb{E}[\lVert\theta_i - \theta_{i+j+1}\lVert^3 | \theta_i \in \mathcal{H}] \leq O(\mu^{3/2})
\end{equation}
\begin{equation}
    \mathbb{E}[\lVert\theta_i - \theta'_{i+j+1}\lVert^2 | \theta_i \in \mathcal{H}] \leq O(\mu)
\end{equation}
\begin{equation}
    \mathbb{E}[\lVert\theta_i - \theta'_{i+j+1}\lVert^3 | \theta_i \in \mathcal{H}] \leq O(\mu^{3/2}).
\end{equation}
The first inequality follows from Jensen's inequality, and the second and third follow from the bounds on $\lVert\theta_i - \theta_{i+j+1}\lVert$ and $\lVert\theta'_{i+j+1} - \theta_{i+j+1}\lVert$.
\end{corollary}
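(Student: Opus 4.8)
The plan is to derive all three displayed bounds mechanically from the three moment estimates of Lemma \ref{lemma:bounds} — namely $\mathbb{E}[\lVert\theta_i - \theta_{i+j+1}\rVert^2 \mid \theta_i \in \mathcal{H}] \le O(\mu)$ from \eqref{eq:mainbound1}, $\mathbb{E}[\lVert\theta_i - \theta_{i+j+1}\rVert^4 \mid \theta_i \in \mathcal{H}] \le O(\mu^2)$ from \eqref{eq:mainbound2}, and $\mathbb{E}[\lVert\theta'_{i+j+1} - \theta_{i+j+1}\rVert^2 \mid \theta_i \in \mathcal{H}] \le O(\mu^2)$ from \eqref{eq:mainbound3} — combined with the power-mean (Jensen) inequality, the triangle inequality, and a uniform deterministic displacement bound over the escape window $j \le C/\mu$. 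All expectations below are implicitly conditioned on $\theta_i \in \mathcal{H}$.

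For the first bound I would apply Jensen's inequality to the convex map $x \mapsto x^{4/3}$ with $X = \lVert\theta_i - \theta_{i+j+1}\rVert^3 \ge 0$, giving $(\mathbb{E}[\lVert\theta_i - \theta_{i+j+1}\rVert^3])^{4/3} \le \mathbb{E}[\lVert\theta_i - \theta_{i+j+1}\rVert^4] \le O(\mu^2)$, hence $\mathbb{E}[\lVert\theta_i - \theta_{i+j+1}\rVert^3] \le O(\mu^{3/2})$. For the second bound I would insert $\theta_{i+j+1}$, apply the triangle inequality, and use $(a+b)^2 \le 2a^2 + 2b^2$:
\begin{equation*}
\mathbb{E}[\lVert\theta_i - \theta'_{i+j+1}\rVert^2] \le 2\mathbb{E}[\lVert\theta_i - \theta_{i+j+1}\rVert^2] + 2\mathbb{E}[\lVert\theta_{i+j+1} - \theta'_{i+j+1}\rVert^2] \le O(\mu) + O(\mu^2) = O(\mu),
\end{equation*}
invoking \eqref{eq:mainbound1} and \eqref{eq:mainbound3}.

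The third bound is the one delicate point. Cubing the triangle inequality via $(a+b)^3 \le 4a^3 + 4b^3$ yields $\mathbb{E}[\lVert\theta_i - \theta'_{i+j+1}\rVert^3] \le 4\mathbb{E}[\lVert\theta_i - \theta_{i+j+1}\rVert^3] + 4\mathbb{E}[\lVert\theta_{i+j+1} - \theta'_{i+j+1}\rVert^3]$; the first term is $O(\mu^{3/2})$ by the estimate just established, but the \textbf{main obstacle} is that Lemma \ref{lemma:bounds} supplies only a second moment, not a third, for the coupling term $\lVert\theta_{i+j+1} - \theta'_{i+j+1}\rVert$, and a third moment cannot in general be controlled by a second moment. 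I would close this gap with a uniform, $\mu$-independent deterministic bound on the displacements over the window $j \le C/\mu$: since $\lVert\hat{G}\rVert \le \sigma$ by \eqref{eq:thm1first}, each update moves the iterate by at most $\mu\sigma$, so over the $j+1 \le C/\mu + 1$ steps one has $\lVert\theta_i - \theta_{i+j+1}\rVert \le (j+1)\mu\sigma = O(1)$, while the Taylor sequence obeys a linear recursion whose transition operator $I + \mu\nabla^2 J(\theta_i)$ has norm at most $1 + \mu L$ and is iterated at most $C/\mu$ times, contributing a bounded factor $e^{LC}$; together with the bounded gradient and noise this gives $\lVert\theta_{i+j+1} - \theta'_{i+j+1}\rVert \le B'$ for a constant $B'$ independent of $\mu$. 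Then $\lVert\theta_{i+j+1} - \theta'_{i+j+1}\rVert^3 \le B'\lVert\theta_{i+j+1} - \theta'_{i+j+1}\rVert^2$, so $\mathbb{E}[\lVert\theta_{i+j+1} - \theta'_{i+j+1}\rVert^3] \le B'\cdot O(\mu^2) = O(\mu^2)$, which is dominated by $O(\mu^{3/2})$ as $\mu \to 0$. Combining the two terms gives $\mathbb{E}[\lVert\theta_i - \theta'_{i+j+1}\rVert^3] \le O(\mu^{3/2})$, completing the corollary. Equivalently, the same deterministic bound upgrades \eqref{eq:mainbound3} to a fourth-moment estimate of order $O(\mu^2)$, after which a single application of Jensen as in the first bound yields the third inequality directly.
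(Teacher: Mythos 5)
Your proposal is correct, and for the first two inequalities it coincides with the paper's own (one-line) argument: Jensen's inequality applied to the convex map $x \mapsto x^{4/3}$ together with \eqref{eq:mainbound2} gives the third-moment bound on $\lVert\theta_i - \theta_{i+j+1}\rVert$, and the triangle inequality with \eqref{eq:mainbound1} and \eqref{eq:mainbound3} gives the second claim. Where you genuinely depart from the paper is on the third inequality, and your instinct there is right: the paper asserts it ``follows from the bounds on $\lVert\theta_i - \theta_{i+j+1}\rVert$ and $\lVert\theta'_{i+j+1} - \theta_{i+j+1}\rVert$,'' but Lemma \ref{lemma:bounds} only supplies a \emph{second} moment for the coupling error $\lVert\theta_{i+j+1} - \theta'_{i+j+1}\rVert$, and a second moment cannot in general dominate a third moment, so the corollary as justified in the paper has a small gap. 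Your patch closes it with ingredients already present in the hypotheses of Theorem \ref{theorem:main}: since $\lVert \hat{G}(\theta_t;\tau_t)\rVert \le \sigma$ by \eqref{eq:thm1first}, each actor step moves $\theta$ by at most $\mu\sigma$, and since $\lVert \nabla^2 J(\theta_i)\rVert \le L$, $\lVert \xi_{i+j+1}\rVert \le 2\sigma$ almost surely, and $(1+\mu L)^{C/\mu} \le e^{LC}$, both sequences remain in an $O(1)$ ball around $\theta_i$ over the window $j \le C/\mu$; hence $X^3 \le B' X^2$ for the coupling error and $\mathbb{E}[X^3] \le B'\,O(\mu^2) = o(\mu^{3/2})$. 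The alternative route (presumably what the authors, following \cite{vlaski2019secondorder}, had in mind) is to rerun the recursion behind \eqref{eq:mainbound3} at fourth order and then interpolate via Cauchy--Schwarz, $\mathbb{E}[X^3] \le (\mathbb{E}[X^2])^{1/2}(\mathbb{E}[X^4])^{1/2}$; your argument is more elementary, requires no new recursion, and yields the same $O(\mu^{3/2})$ rate, at the negligible cost of invoking the uniform bound on $\hat{G}$, which is anyway assumed in the theorem.
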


\textit{Proof of Lemma \ref{lemma:bounds}.} See Appendix \ref{lemma5}.

\subsection{Proof of Theorem \ref{theorem:main}}
\begin{proof}
For the sequences $\{\theta_i\}$ and $\{\theta'_i\}$ defined above, suppose the moment bounds in Lemma \ref{lemma:bounds} hold. Then from Corollary 1 in \cite{vlaski2019secondorder},  beginning at $\theta_i \in \mathcal{H}$ for the finite horizon $j \leq \frac{C}{\mu}$ we have
$$\mathbb{E}[ J(\theta_{i+j}) | \theta_i \in \mathcal{H} ] \geq \mathbb{E}[J(\theta'_{i+j}) | \theta_i \in \mathcal{H} ] - O(\mu^{3/2}), $$
which basically states that the function values on $\{\theta_i \}$ stay close to the function values on $\{\theta'_i\}$. This allows us to conclude that sufficient ascent occurs on the Taylor approximation as well as the original function by way of Theorem 2 from \cite{vlaski2019secondorder}. Beginning at a strict saddle point $\theta_i \in \mathcal{H}$, gradient ascent iterates on the short-term model for $\mathcal{T}$ iterations after $i$ with 
$$\mathcal{T} = \frac{\log (2 M \frac{\sigma^2}{\sigma^2_l} + 1)}{\log(1 + 2 \mu \omega)} \leq O(\frac{1}{\mu \omega})$$
guarantees 
$$\mathbb{E}[J(\theta'_{i + \mathcal{T}}) | \theta_i \in \mathcal{H} ] \geq \mathbb{E} [ J(\theta_i) | \theta_i \in \mathcal{H} ] + \frac{\mu}{2} M \sigma^2 - o(\mu).$$
Combined with the bounds on the iterates from Lemma \ref{lemma:bounds}, this implies
$$\mathbb{E}[J(\theta_{i + \mathcal{T}}) | \theta_i \in \mathcal{H} ] \geq \mathbb{E} [ J(\theta_i) | \theta_i \in \mathcal{H} ] + \frac{\mu}{2} M \sigma^2 - o(\mu).$$
This result, in combination with Lemma \ref{lemma:suffascent}, and the observation that $|J(\theta)| \leq \frac{\mathcal{R}_{\max}}{1 - \gamma}$ for all $\theta$ allows us to apply Theorem 3 from \cite{vlaski2019secondorder},
yielding our final result.
\end{proof}

\subsection{Proof of Lemma \ref{lemma:suffascent}}
\label{lemma4}
\begin{proof}
Our iterates are
$$\theta_{i+1} = \theta_i + \mu \nabla J (\theta_i) + \mu \xi_{i+1} + \mu d_{i+1}.$$
Because $J$ is Lipschitz smooth by Lemma \ref{lemma_lipschitz}, we have
\begin{equation*}
    \begin{aligned}
        J(\theta_{i+1}) \geq& J(\theta_i) + \nabla J (\theta_i)^T (\theta_{i+1} - \theta_i) - \frac{L}{2} \lVert\theta_{i+1} - \theta_i\lVert^2 \\
        \geq& J(\theta_i) + \mu \nabla J (\theta_i)^T (\nabla J(\theta_i) + \xi_{i+1} + d_{i+1}) - \frac{L \mu^2}{2} \lVert\nabla J(\theta_i) + \xi_{i+1} + d_{i+1}\lVert^2 \\
        \geq& J(\theta_i) + \mu \lVert\nabla J (\theta_i)\lVert^2 + \mu \nabla J (\theta_i)^T \xi_{i+1} + \mu \nabla J (\theta_i)^T d_{i+1} \\
        &- \frac{L \mu^2}{2} (\lVert\nabla J(\theta_i) + d_{i+1}\lVert^2 + \lVert \xi_{i+1}\lVert^2 + 2(\nabla J(\theta_i) + d_{i+1})^T \xi_{i+1}).
    \end{aligned}
\end{equation*}
We can take expectation with respect to the filtration $\mathcal{F}_i$ on either side to remove the cross terms with the noise term $\xi_{i+1}$, and then we have by (\ref{eq:noisebound2})
\begin{equation*}
    \begin{aligned}
        \mathbb{E}[J(\theta_{i+1}) | \mathcal{F}_i] &\geq J(\theta_i) + \mu \lVert\nabla J (\theta_i)\lVert^2  + \mu \mathbb{E}[\nabla J (\theta_i)^T d_{i+1} | \mathcal{F}_i] - \frac{L \mu^2}{2} \mathbb{E} [\lVert\nabla J(\theta_i) + d_{i+1}\rVert^2 | \mathcal{F}_i] - \frac{L \mu^2}{2} \mathbb{E}[\lVert \xi_{i+1}\lVert^2 | \mathcal{F}_i] \\
        &\geq J(\theta_i) + \mu \lVert\nabla J (\theta_i)\lVert^2  + \mu \mathbb{E}[\nabla J (\theta_i)^T d_{i+1}  | \mathcal{F}_i] - \frac{L \mu^2}{2} \mathbb{E}[\lVert\nabla J(\theta_i) + d_{i+1}\rVert^2  | \mathcal{F}_i] - \frac{L \mu^2 \sigma^2 }{2} .
    \end{aligned}
\end{equation*}
We assume that $\mu < \frac{1}{L}$ to obtain
$$\mathbb{E}[J(\theta_{i+1}) | \mathcal{F}_i] \geq J(\theta_i) + \mu \lVert\nabla J (\theta_i)\lVert^2  + \mu \mathbb{E}[\nabla J (\theta_i)^T d_{i+1} | \mathcal{F}_i]  - \frac{\mu}{2} \mathbb{E}[\lVert\nabla J(\theta_i) + d_{i+1}\rVert^2  | \mathcal{F}_i] - \frac{L \mu^2 \sigma^2 }{2} .$$
Then we use the fact that $\lVert a + b\lVert^2 = \lVert a\lVert^2 + 2 a^T b + \lVert b\lVert^2$ and (\ref{eq:momentbound2}) to obtain
\begin{equation*}
    \begin{aligned}
        \mathbb{E}[J(\theta_{i+1}) | \mathcal{F}_i] \geq& J(\theta_i) + \mu \lVert\nabla J (\theta_i)\lVert^2  + 
        \mu \mathbb{E}[\nabla J (\theta_i)^T d_{i+1} | \mathcal{F}_i]  - \frac{\mu}{2} \lVert\nabla J(\theta_i)\lVert^2 - \frac{\mu}{2} \mathbb{E}[\lVert d_{i+1}\lVert^2  | \mathcal{F}_i]  \\
        &- \mu \mathbb{E}[ \nabla J(\theta_i)^T d_{i+1} | \mathcal{F}_i] - \frac{L \mu^2 \sigma^2 }{2}  \\
        =& J(\theta_i) + \frac{\mu}{2} \lVert \nabla J (\theta_i)\lVert^2 - \frac{\mu}{2} \mathbb{E}[\lVert d_{i+1}\lVert^2 | \mathcal{F}_i] - \frac{L \mu^2 \sigma^2 }{2} \\
        \geq& J(\theta_i) + \frac{\mu}{2} \lVert\nabla J (\theta_i)\lVert^2 - \frac{D^2 \mu^3}{2} - \frac{L \mu^2 \sigma^2 }{2} .
    \end{aligned}
\end{equation*}
Now we want to apply the law of total expectation and condition on where $w_i$ is located in the parameter space. We first condition on $\theta_i \in \mathcal{G}$, where we have $\lVert\nabla J(\theta_i)\lVert^2 > \mu (L \sigma^2 + D^2 \mu) (1 + \frac{1}{\delta})$ to arrive at
$$\mathbb{E}[J(\theta_{i+1}) | \theta_i \in \mathcal{G}] \geq \mathbb{E}[J(\theta_i) | \theta_i \in \mathcal{G}] + \frac{\mu}{2} \mu (L \sigma^2 + D^2 \mu) (1 + \frac{1}{\delta}) - \frac{L \mu^2 \sigma^2 }{2} - \frac{D^2 \mu^3}{2} $$
$$\mathbb{E}[J(\theta_{i+1}) | \theta_i \in \mathcal{G}] \geq \mathbb{E}[J(\theta_i) | \theta_i \in \mathcal{G}] + \frac{\mu^2 (L\sigma^2 + D^2 \mu)}{2 \delta} .$$
If we instead condition on $\theta_i \in \mathcal{M}$, we have that 
$$\mathbb{E}[J(\theta_{i+1}) | \theta_i \in \mathcal{M}] \geq \mathbb{E}[J(\theta_i) | \theta_i \in \mathcal{M}]  - \frac{\mu^2 (L\sigma^2 + D^2 \mu)}{2} .$$
\end{proof}

\subsection{Proof of Lemma \ref{lemma:bounds}}
\label{lemma5}

Before we proceed with the proof of Lemma \ref{lemma:bounds}, we require a preliminary lemma from \cite{vlaski2019secondorder} that will help us show that our product does not blow up for small $\mu$.

\begin{lemma}
\label{lemma:minilemma}
For $C, \mu, L > 0$ and $k \in \mathbb{Z}_{+}$ with $\mu < \frac{1}{L}$
$$\lim_{\mu \to 0} \left(\frac{( 1 + \mu L)^k + O(\mu^2)}{(1 - \mu L)^{k-1}} \right) ^{C/\mu} = O(1). $$
\end{lemma}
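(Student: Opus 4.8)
The plan is to treat this as a $1^\infty$ indeterminate form and resolve it by passing to logarithms. First I would denote the base by
$$B(\mu) = \frac{(1 + \mu L)^k + O(\mu^2)}{(1 - \mu L)^{k-1}}$$
and observe that the entire quantity equals $\exp\!\left( \frac{C}{\mu} \log B(\mu) \right)$. Since $B(\mu) \to 1$ as $\mu \to 0$, the base is strictly positive for all sufficiently small $\mu$, so the logarithm is well defined; by continuity of $\exp$ it then suffices to show that $\frac{C}{\mu} \log B(\mu)$ converges to a finite constant.

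Next I would expand numerator and denominator via the binomial theorem for the fixed integer $k$ and small $\mu$. The numerator becomes $1 + k L \mu + O(\mu^2)$, where the externally supplied $O(\mu^2)$ is absorbed into the remainder. Since $\mu < 1/L$, the denominator satisfies $(1 - \mu L)^{k-1} = 1 - (k-1) L \mu + O(\mu^2)$, so by the geometric expansion $1/(1-x) = 1 + x + O(x^2)$ we get $1/(1-\mu L)^{k-1} = 1 + (k-1) L \mu + O(\mu^2)$. Multiplying the two factors and collecting terms yields $B(\mu) = 1 + (2k-1) L \mu + O(\mu^2)$.

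Then I would apply $\log(1 + x) = x + O(x^2)$ with $x = (2k-1) L \mu + O(\mu^2)$ to obtain $\log B(\mu) = (2k-1) L \mu + O(\mu^2)$, whence
$$\frac{C}{\mu} \log B(\mu) = C (2k-1) L + O(\mu) \;\xrightarrow{\;\mu \to 0\;}\; C(2k-1) L.$$
Exponentiating, the limit equals $e^{C(2k-1)L}$, a finite constant depending only on $C$, $k$, $L$, which is precisely the claim that the limit is $O(1)$.

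The main obstacle, though largely bookkeeping, is to verify that every error term genuinely remains $O(\mu^2)$ through the division and multiplication, so that after dividing by $\mu$ it contributes only a vanishing $O(\mu)$ correction rather than perturbing the constant $(2k-1)L$. In particular I would confirm that the supplied $O(\mu^2)$ in the numerator is of strictly higher order than the linear term, and that the geometric-series remainder in $1/(1-\mu L)^{k-1}$ is controlled uniformly for $\mu$ bounded away from $1/L$, which is guaranteed by the hypothesis $\mu < 1/L$. Once these order estimates are pinned down, the $1^\infty$ form resolves cleanly and the limit is seen to be a fixed constant.
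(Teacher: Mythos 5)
Your proof is correct and is the standard argument: recognizing the $1^\infty$ form, expanding the base as $1 + (2k-1)L\mu + O(\mu^2)$, and passing to logarithms to obtain the finite limit $e^{C(2k-1)L}$. The paper does not prove this lemma itself but defers to the cited reference, whose argument is essentially the same expansion, so your proposal fills in exactly the intended reasoning.
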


\textit{Proof of Lemma \ref{lemma:minilemma}.} See \cite{vlaski2019secondorder}.
\begin{proof}
First we want to show (\ref{eq:mainbound1}), restated below
$$\mathbb{E}[\lVert\theta_i - \theta_{i+j+1}\lVert^2 | \theta_i \in \mathcal{H}] \leq O(\mu).$$
We have by (\ref{eq:noisebound2})
\begin{equation*}
    \begin{aligned}
        \lVert\theta_i - \theta_{i+j+1}\rVert^2  &= \lVert\theta_i - \theta_{i+j} - \mu \nabla J(\theta_{i + j}) - \mu \xi_{i+j+1} - \mu d_{i+j+1}\lVert^2 \\
        \mathbb{E}[\lVert\theta_i - \theta_{i+j+1}\lVert^2 | \mathcal{F}_{i+j}]  &= \mathbb{E}[\lVert\theta_i - \theta_{i+j} - \mu \nabla J(\theta_{i + j}) - \mu d_{i+j+1}\lVert^2 | \mathcal{F}_{i+j}] + \mu^2 \mathbb{E}[\lVert \xi_{i+j+1}\lVert^2 | \mathcal{F}_{i+j}] \\
        &\leq \mathbb{E}[\lVert\theta_i - \theta_{i+j} - \mu \nabla J(\theta_{i + j}) - \mu d_{i+j+1}\rVert^2 | \mathcal{F}_{i+j}]  + \mu^2 \sigma^2 \\
        &= \mathbb{E}[\lVert\theta_i - \theta_{i+j} - \mu \nabla J(\theta_{i + j}) + \mu \nabla J(\theta_i) - \mu \nabla J(\theta_i) - \mu d_{i+j+1}\rVert^2 | \mathcal{F}_{i+j}] + \mu^2 \sigma^2.
    \end{aligned}
\end{equation*}
By Jensen's inequality, we have for $0 < \alpha < 1$,
$$\lVert a + b\lVert^2 \leq \frac{1}{\alpha} \lVert a\lVert^2 + \frac{1}{1 - \alpha} \lVert b\lVert^2.$$
So we have
\begin{equation}
\label{eq:bound1}
    \mathbb{E}[\lVert\theta_i - \theta_{i+j+1}\lVert^2 | \mathcal{F}_{i+j}] \leq \frac{1}{1 - \mu L}\lVert\theta_i - \theta_{i+j} - \mu \nabla J(\theta_{i + j}) + \mu \nabla J(\theta_i)\lVert^2 + \frac{\mu^2}{\mu L} \mathbb{E}[ \lVert\nabla J(\theta_i) + d_{i+j+1}\lVert^2  | \mathcal{F}_{i+j}] + \mu^2 \sigma^2.
\end{equation}
We consider the first term on the right hand side of (\ref{eq:bound1}) and expand it to obtain
$$\lVert\theta_i - \theta_{i+j} - \mu \nabla J(\theta_{i + j}) + \mu \nabla J(\theta_i)\lVert^2 $$
$$\leq \lVert\theta_i - \theta_{i+j}\lVert^2 + 2\mu \lVert\theta_i - \theta_{i+j}\lVert\cdot\lVert\nabla J(\theta_{i + j}) - \nabla J(\theta_i)\lVert +  \mu^2 \lVert\nabla J(\theta_{i + j}) - \nabla J(\theta_i)\lVert^2.$$
By Lipschitz smoothness, we have
\begin{equation*}
\begin{aligned}
    \lVert\theta_i - \theta_{i+j} - \mu \nabla J(\theta_{i + j}) + \mu \nabla J(\theta_i)\lVert^2 &\leq \lVert\theta_i - \theta_{i+j}\lVert^2 + 2\mu L \lVert\theta_i - \theta_{i+j}\lVert\cdot\lVert\theta_{i + j} - \theta_i\lVert +  \mu^2 L^2 \lVert\theta_{i + j} - \theta_i\lVert^2 \\
    &= (1 + 2 \mu L + \mu^2 L^2)\lVert\theta_i - \theta_{i+j}\lVert^2 \\
    &= (1 + \mu L)^2\lVert\theta_i - \theta_{i+j}\lVert^2.
\end{aligned}
\end{equation*}
We plug this into our original expression (\ref{eq:bound1}) and use (\ref{eq:momentbound2}) to obtain  
\begin{equation*}
    \begin{aligned}
        \mathbb{E}[\lVert\theta_i - \theta_{i+j+1}\lVert^2 | \mathcal{F}_{i+j}] 
 &\leq \frac{(1 + \mu L)^2}{1 - \mu L}\lVert\theta_i - \theta_{i+j} \lVert^2 + \frac{\mu}{ L}\mathbb{E}[ \lVert\nabla J(\theta_i) + d_{i+j+1}\lVert^2 | \mathcal{F}_{i+j}] + \mu^2 \sigma^2 \\
 &\leq \frac{(1 + \mu L)^2}{1 - \mu L}\lVert\theta_i - \theta_{i+j} \lVert^2 + \frac{2\mu}{L} \mathbb{E}[\lVert d_{i+j+1}\lVert^2 | \mathcal{F}_{i+j}]  + \frac{2\mu}{L} \lVert\nabla J(\theta_i)\lVert^2 + \mu^2 \sigma^2 \\
 &\leq \frac{(1 + \mu L)^2}{1 - \mu L}\lVert\theta_i - \theta_{i+j} \lVert^2 + \frac{2D^2\mu^3}{L}  + \frac{2\mu}{L} \lVert\nabla J(\theta_i)\lVert^2 + \mu^2 \sigma^2. \\
    \end{aligned}
\end{equation*}
Now we want to condition on $\theta_i \in \mathcal{H}$ to obtain
\begin{equation*}
\begin{aligned}
    \mathbb{E}[\lVert\theta_i - \theta_{i+j+1}\lVert^2 | \theta_i \in \mathcal{H}] 
 &\leq \mathbb{E}[\frac{(1 + \mu L)^2}{1 - \mu L}\lVert\theta_i - \theta_{i+j} \lVert^2 | \theta_i \in \mathcal{H}] + \frac{2D^2\mu^3}{L}  + \frac{2\mu^2 }{ L}\cdot \mu (L \sigma^2 + D^2 \mu)(1 + \frac{1}{\delta})  + \mu^2 \sigma^2 \\
 &\leq \frac{(1 + \mu L)^2}{1 - \mu L} \mathbb{E}[\lVert\theta_i - \theta_{i+j} \lVert^2 | \theta_i \in \mathcal{H}] + O(\mu^2).
\end{aligned}
\end{equation*}
 Then we can evaluate this recursive formula starting at $j = 0$, since $\mathbb{E}[\lVert\theta_i - \theta_i\lVert^2] = 0$, to arrive at
 \begin{equation*}
     \begin{aligned}
         \mathbb{E}[\lVert\theta_i - \theta_{i+j+1}\lVert^2 | \theta_i \in \mathcal{H}] 
 &\leq  O(\mu^2) \sum_{n = 0}^{j-1} \Big( \frac{(1 + \mu L)^2}{1 - \mu L} \Big)^n \\
 &\leq  O(\mu^2) \frac{1 - (\frac{(1 + \mu L)^2}{1 - \mu L})^j }{1 - \frac{(1 + \mu L)^2}{1 - \mu L}} \\
 &=  O(\mu^2) \frac{(1 - \mu L)(( \frac{(1 + \mu L)^2}{1 - \mu L})^j -1)}{1 + 2 \mu L + \mu^2 L^2 - 1 + \mu L} \\
 &=  O(\mu) \frac{(1 - \mu L)(( \frac{1 + 2 \mu L + \mu^2 L^2}{1 - \mu L})^j -1)}{3 L + \mu L^2 } \\
 &\leq O(\mu) \frac{( \frac{(1 + \mu L)^2}{1 - \mu L})^j}{3 L } \leq O(\mu) \frac{( \frac{(1 + \mu L)^2}{1 - \mu L})^{\frac{C}{\mu}}}{3 L} .
     \end{aligned}
 \end{equation*}
By Lemma \ref{lemma:minilemma}, this gives us
$$\mathbb{E}[\lVert\theta_i - \theta_{i+j+1}\lVert^2 | \theta_i \in \mathcal{H}]  \leq O(\mu).$$

Now we want to show the fourth moment bound (\ref{eq:mainbound2}), restated below
$$\mathbb{E}[\lVert\theta_i - \theta_{i +  j + 1}\lVert^4 | \theta_i \in \mathcal{H}] \leq O(\mu^2).$$
We use the inequality $\lVert a + b\lVert^4 \leq \lVert a \lVert^4 + 3 \lVert b\lVert^4 + 8 \lVert a \lVert^2 \lVert b\lVert^2 + 4 \lVert a\lVert^2 (a^T b)$ to expand the expression as follows
\begin{equation}
\label{eq:bound2}
    \begin{aligned}
        \lVert\theta_i - \theta_{i +  j + 1}\lVert^4 =& \lVert\theta_i - \theta_{i+j} - \mu \nabla J(\theta_{i + j}) - \mu \xi_{i+j+1} - \mu d_{i+j+1}\lVert^4 \\
        \leq& \lVert \theta_i - \theta_{i+j} - \mu \nabla J(\theta_{i + j}) - \mu d_{i+j+1}\lVert^4 + 3 \mu^4\lVert \xi_{i+j+1}\lVert^4 \\
        &+ 8 \mu^2 \lVert\theta_i - \theta_{i+j} - \mu \nabla J(\theta_{i + j}) - \mu d_{i+j+1}\lVert^2 \cdot  \lVert \xi_{i+j+1}\rVert^2 \\
        &+ 4 \lVert\theta_i - \theta_{i+j} - \mu \nabla J(\theta_{i + j}) - \mu d_{i+j+1}\lVert^2 (\theta_i - \theta_{i+j} - \mu \nabla J(\theta_{i + j}) - \mu d_{i+j+1})^T \xi_{i+j+1}.
    \end{aligned}
\end{equation}
We first consider the first term on the right hand side of (\ref{eq:bound2}) and decompose it via Jensen's inequality:
\begin{equation}
\label{eq:bound2-1}
    \begin{aligned}
        \lVert\theta_i - \theta_{i+j} - \mu \nabla J(\theta_{i + j}) - \mu d_{i+j+1}\lVert^4 \leq& \frac{1}{(1 - \mu L)^3} \lVert\theta_i - \theta_{i + j} - \mu \nabla J(\theta_{i+j}) + \mu \nabla J(\theta_i)\lVert^4\\
        &+  \frac{\mu}{ L^3} \lVert\nabla J(\theta_i) + d_{i+j+1} \lVert^4\\
        \leq& \frac{(1 + \mu L)^4}{(1 - \mu L)^3} \lVert\theta_i - \theta_{i + j} \lVert^4 +  \frac{8\mu}{ L^3} \lVert\nabla J(\theta_i)\lVert^4 + \frac{8\mu}{ L^3}\lVert d_{i+j+1} \lVert^4.\\
    \end{aligned}
\end{equation}
We then consider the third term on the right hand side of (\ref{eq:bound2}). From the analysis above, we have
\begin{equation}
\label{eq:bound2-2}
    \lVert\theta_i - \theta_{i+j} - \mu \nabla J(\theta_{i + j}) - \mu d_{i+j+1}\lVert^2 \leq \frac{(1 + \mu L)^2 }{1 - \mu L}\lVert\theta_i - \theta_{i+j}\lVert^2 + \frac{2\mu}{ L} \lVert\nabla J(\theta_i)\lVert^2 + \frac{2\mu}{ L}\lVert d_{i+j+1}\lVert^2 .
\end{equation}
Now we can plug (\ref{eq:bound2-1}) and (\ref{eq:bound2-2}) into (\ref{eq:bound2}) to obtain 
\begin{equation*}
    \begin{aligned}
        \lVert \theta_i - \theta_{i+j+1} \rVert^4 \leq& \frac{(1 + \mu L)^4}{(1 - \mu L)^3} \lVert\theta_i - \theta_{i + j} \lVert^4 +  \frac{8\mu}{ L^3} \lVert\nabla J(\theta_i)\lVert^4 + \frac{8\mu}{ L^3}\lVert d_{i+j+1} \lVert^4  + 3 \mu^4 \lVert \xi_{i+j+1} \rVert^4 \\
        &+ 8 \mu^2 \lVert \xi_{i + j+ 1} \rVert ^2 \left( \frac{(1 + \mu L)^2 }{1 - \mu L}\lVert\theta_i - \theta_{i+j}\lVert^2 + \frac{2\mu}{ L} \lVert\nabla J(\theta_i)\lVert^2 + \frac{2\mu}{ L}\lVert d_{i+j+1}\lVert^2 \right) \\
        &+ 4 \lVert\theta_i - \theta_{i+j} - \mu \nabla J(\theta_{i + j}) - \mu d_{i+j+1}\lVert^2 (\theta_i - \theta_{i+j} - \mu \nabla J(\theta_{i + j}) - \mu d_{i+j+1})^T \xi_{i+j+1}
    \end{aligned}
\end{equation*}
When we take the expectation on both sides, the cross term with $\xi_{i+j+1}$ disappears, and we have by (\ref{eq:noisebound2}), (\ref{eq:noisebound4}), (\ref{eq:momentbound4}) and (\ref{eq:momentboundboth})
\begin{equation*}
    \begin{aligned}
        \mathbb{E}[\lVert\theta_i - \theta_{i +  j + 1}\rVert^4 | \mathcal{F}_{i + j}] \leq& \frac{(1 + \mu L)^4}{(1 - \mu L)^3} \lVert\theta_i - \theta_{i + j} \lVert^4 +  \frac{8\mu}{ L^3} \lVert\nabla J(\theta_i)\lVert^4 + \frac{8\mu}{ L^3}\mathbb{E}[\lVert d_{i+j+1} \lVert^4 | \mathcal{F}_{i + j}]  + 12 \mu^4 \sigma^4  \\
        &+  \frac{8 \mu^2 \sigma^2 (1 + \mu L)^2 }{1 - \mu L}\lVert\theta_i - \theta_{i+j}\lVert^2 + \frac{16 \mu^3 \sigma^2}{ L} \lVert\nabla J(\theta_i)\lVert^2 \\
        &+ \frac{16 \mu^3}{ L} \mathbb{E}[\lVert \xi_{i + j + 1} \rVert^2 \cdot \lVert d_{i+j+1}\lVert^2 | \mathcal{F}_{i + j}]   \\
    \end{aligned}
\end{equation*}
\begin{equation*}
    \begin{aligned}
        \mathbb{E}[\lVert\theta_i - \theta_{i +  j + 1}\rVert^4 | \mathcal{F}_{i + j}] \leq& \frac{(1 + \mu L)^4}{(1 - \mu L)^3} \lVert\theta_i - \theta_{i + j} \lVert^4 +  \frac{8\mu}{ L^3} \lVert\nabla J(\theta_i)\lVert^4 + \frac{8 D^4 \mu^5 }{ L^3}  + 12 \mu^4 \sigma^4  \\
        &+  \frac{8 \mu^2 \sigma^2 (1 + \mu L)^2 }{1 - \mu L}\lVert\theta_i - \theta_{i+j}\lVert^2 + \frac{16 \mu^3 \sigma^2}{ L} \lVert\nabla J(\theta_i)\lVert^2 + \frac{16 \sigma^2 D^2 \mu^5}{ L}  \\
    \end{aligned}
\end{equation*}
Now we take expectation conditioned on $\theta_i \in \mathcal{H}$, allowing us to use the bound (\ref{eq:mainbound1}) derived before on $\lVert\theta_i - \theta_{i+j}\lVert^2$ to obtain
\begin{equation*}
    \begin{aligned}
        \mathbb{E}[\lVert\theta_i - \theta_{i +  j + 1}\lVert^4  | \theta_i \in \mathcal{H}] \leq& \frac{(1 + \mu L)^4}{(1 - \mu L)^3} \mathbb{E}[\lVert\theta_i - \theta_{i + j} \lVert^4 | \theta_i \in \mathcal{H}] +  \frac{8\mu}{ L^3} \mathbb{E}[\lVert\nabla J(\theta_i)\lVert^4 | \theta_i \in \mathcal{H}] \\
        &+ \frac{8\mu^2 \sigma^2 (1 + \mu L )^2}{1 - \mu L}\mathbb{E}[\lVert\theta_i - \theta_{i+j}\lVert^2 | \theta_i \in \mathcal{H}] + \frac{16 \mu^3 \sigma^2}{ L} \mathbb{E}[ \lVert\nabla J(\theta_i)\lVert^2 | \theta_i \in \mathcal{H}] + O(\mu^4) \\
        \leq& \frac{(1 + \mu L)^4}{(1 - \mu L)^3} \mathbb{E}[\lVert\theta_i - \theta_{i + j} \lVert^4 | \theta_i \in \mathcal{H}] + O(\mu^3)
    \end{aligned}
\end{equation*}
Then we can evaluate this recursive expression as follows
\begin{equation*}
    \begin{aligned}
        \mathbb{E}[\lVert\theta_i - \theta_{i +  j + 1}\lVert^4  | \theta_i \in \mathcal{H}] &\leq O(\mu^3) \sum_{n = 0}^{j-1} (\frac{(1 + \mu L)^4}{(1 - \mu L)^3})^n \\
        &= O(\mu^3) \frac{1 - (\frac{(1 + \mu L)^4}{(1 - \mu L)^3})^j}{1 - \frac{(1 + \mu L)^4}{(1 - \mu L)^3}} \\
        &= O(\mu^3) \frac{((\frac{(1 + \mu L)^4}{(1 - \mu L)^3})^j - 1)(1 - \mu L)^3}{(1 + \mu L)^4 - (1 - \mu L)^3} \leq O(\mu^2) \frac{(\frac{(1 + \mu L)^4}{(1 - \mu L)^3)})^j}{7 L + 3 \mu L^2 + 5 \mu^2 L^3 + \mu^3 L^4}
    \end{aligned}
\end{equation*}
By Lemma \ref{lemma:minilemma}, this gives us
$$\mathbb{E}[\lVert\theta_i - \theta_{i +  j + 1}\lVert^4  | \theta_i \in \mathcal{H}] \leq  O(\mu^2).$$

Finally, we want to bound (\ref{eq:mainbound3}), restated below
\begin{equation*}
    \mathbb{E}[\lVert\theta_{i+j} - \theta'_{i+j}\lVert^2 | \theta_i \in \mathcal{H}] \leq O(\mu).
\end{equation*}
First we expand the expression as follows using the definition of $\theta$ and $\theta'$
\begin{equation*}
    \begin{aligned}
        \lVert\theta_{i+j+1} &- \theta'_{i+j+1}\lVert^2 = \lVert\theta_{i+j} - \theta'_{i+j} + \mu \nabla J(\theta_{i+j}) + \mu d_{i+j+1}  - \mu \nabla J(\theta_i) - \mu \nabla^2 J(\theta_i) (\theta'_{i+j} - \theta_i) \lVert^2 \\
        &= \lVert(I + \mu \nabla^2 J(\theta_i))(\theta_{i+j} - \theta'_{i+j}) + \mu \nabla^2 J(\theta_i) (\theta_i - \theta_{i+j}) + \mu \nabla J(\theta_{i+j}) - \mu \nabla J(\theta_i) + \mu d_{i+j+1} \lVert^2
    \end{aligned}
\end{equation*}
Define $H_{i+j} = \int_0^1 \nabla^2 J ((1 - t) \theta_{i+j} + t \theta_i) dt$, then we can plug this into the expression and expand via Jensens's inequality to obtain
\begin{equation*}
    \begin{aligned}
        \lVert\theta_{i+j+1} &- \theta'_{i+j+1}\lVert^2 = \lVert (I + \mu \nabla^2 J(\theta_i)) (\theta_{i+j} - \theta'_{i+j}) + \mu (\nabla^2 J(\theta_i) - H_{i+j} ) (\theta_i - \theta_{i+j} ) + \mu d_{i+j+1} \lVert ^2 \\
        &\leq \frac{1}{1 - \mu L} \lVert (I + \mu \nabla^2 J(\theta_i)) (\theta_{i+j} - \theta'_{i+j})\lVert ^2 + \frac{\mu}{L} \lVert  (\nabla^2 J(\theta_i) - H_{i+j} ) (\theta_i - \theta_{i+j} ) +  d_{i+j+1} \lVert ^2 \\
        &\leq \frac{1}{1 - \mu L} \lVert (I + \mu \nabla^2 J(\theta_i)) (\theta_{i+j} - \theta'_{i+j})\lVert ^2 + \frac{2\mu}{L} \lVert  (\nabla^2 J(\theta_i) - H_{i+j} ) (\theta_i - \theta_{i+j} )\lVert ^2 +  \frac{2 \mu}{L} \lVert d_{i+j+1} \lVert ^2
    \end{aligned}
\end{equation*}
As observed in  \cite{vlaski2019secondorder}, we have
\begin{equation}
\label{eq:bound3yup}
    \begin{aligned}
        \lVert \nabla^2 J(\theta_i) - H_{i+j}\lVert  &= \lVert \nabla^2 J(\theta_i) - \int_{0}^1 \nabla^2 J((1 - t) \theta_{i+j} + t \theta_i) dt\lVert \\
        &= \lVert \int_{0}^1 \nabla^2 J(\theta_i) -  \nabla^2 J((1 - t) \theta_{i+j} + t \theta_i) dt \lVert \\
        &\leq \int_{0}^1 \lVert \nabla^2 J(\theta_i) -  \nabla^2 J((1 - t) \theta_{i+j} + t \theta_i) \lVert  dt \\
        &\leq \chi \int_0^1 \lVert (1 - t) \theta_{i} - (1 - t)\theta_{i + j} \lVert  dt \leq \frac{\chi}{2} \lVert  \theta_i - \theta_{i + j}\lVert ,
    \end{aligned}
\end{equation}
which implies
\begin{equation}
\label{eq:bound3aux}
    \lVert  (\nabla^2 J(\theta_i) - H_{i+j} ) (\theta_i - \theta_{i+j} )\lVert ^2 \leq \frac{\chi}{2} \lVert \theta_i - \theta_{i+j}\lVert ^4.
\end{equation}
We can plug (\ref{eq:bound3aux}) back into (\ref{eq:bound3yup}) and take expectation of both sides, conditioned on $\theta_i \in \mathcal{H}$. When we apply the fourth moment bound from Lemma \ref{lemma:bounds}, we obtain
$$\mathbb{E}[\lVert \theta_{i+j+1} - \theta'_{i+j+1}\lVert ^2 | \theta_i \in \mathcal{H}] \leq \frac{(1 + \mu L)^2 }{(1 - \mu L)} \mathbb{E}[\lVert \theta_{i+j} - \theta'_{i+j}\lVert ^2 | \theta_i \in \mathcal{H}] + O(\mu^3).$$
This is the same recursion as in the proof of (\ref{eq:mainbound1}), so again from Lemma \ref{lemma:minilemma} we have
$$\mathbb{E}[\lVert \theta_{i+j+1} - \theta'_{i+j+1}\lVert ^2 |\theta_i \in \mathcal{H}] \leq O(\mu^2).$$
\end{proof}

\section{Noise and Bias Bounds for Vanilla Policy Gradient}
\label{app:VPG}

To apply Theorem \ref{theorem:main}, we first show in Lemma \ref{lemma:boundednoise} that the gradient estimator and the second and fourth moment of the noise are bounded. Then we show that the  deterministic bias term $d_{t+1}$ is bounded via Lemma \ref{lemma:biasmu}. This allows us to directly conclude the results of Theorem \ref{theorem:1epsilon}.

\begin{lemma} 
\label{lemma:boundednoise}
The gradient noise process $\{\xi_t\}_{t \geq 0}$ satisfies 
$$\mathbb{E}[\xi_{t+1} | \mathcal{F}_{t} ] = \mathbb{E}[\hat{G}^{VPG}(\theta_{t} ; \tau_t) - \nabla J_H (\theta_t) | \mathcal{F}_{t} ] = 0.$$
In addition, let $\sigma = \frac{G \mathcal{R}_{max}}{(1 - \gamma)^2}$. Then we have the following bounds
$$\lVert\hat{G}^{VPG}(\theta_{t} ; \tau_t)\rVert \leq \sigma ,$$
$$\mathbb{E}[\lVert \xi_{t+1} \lVert^2 | \mathcal{F}_{t}] \leq \sigma^2  ,$$
$$\mathbb{E}[\lVert \xi_{t+1} \lVert^4 | \mathcal{F}_{t}] \leq 4\sigma^4 . $$
    
\end{lemma}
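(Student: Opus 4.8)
The plan is to establish the four claims in sequence, with the uniform norm bound on the estimator doing the essential work and the two moment bounds following as immediate corollaries of Lemma~\ref{lemma:sigma}.

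First I would dispatch the zero-mean property directly from the identity $\mathbb{E}_\tau[\hat{G}^{VPG}(\theta_t ; \tau)] = \nabla J_H(\theta_t)$ recorded in Section 3. Since for the vanilla estimator we have $G(\theta_t) = \nabla J_H(\theta_t)$ and $\xi_{t+1} = \hat{G}^{VPG}(\theta_t ; \tau_t) - G(\theta_t)$ by construction, taking the conditional expectation given $\mathcal{F}_t$ yields zero. Next, for the uniform bound I would start from the GPOMDP formula, apply the triangle inequality, and invoke the bounds $\lVert \nabla \log \pi_\theta(a_h | s_h) \rVert \leq G$ and $|\mathcal{R}(s_i, a_i)| \leq \mathcal{R}_{max}$ from Assumption~\ref{ass_1boundedreward} to obtain
\begin{equation*}
\lVert \hat{G}^{VPG}(\theta ; \tau) \rVert \leq \sum_{h=0}^{H-1} \lVert \nabla \log \pi_\theta(a_h|s_h) \rVert \sum_{i=h}^{H-1} \gamma^i |\mathcal{R}(s_i, a_i)| \leq G \mathcal{R}_{max} \sum_{h=0}^{H-1} \sum_{i=h}^{H-1} \gamma^i.
\end{equation*}

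The only nontrivial step is evaluating the double sum. Swapping the order of summation gives $\sum_{i=0}^{H-1}(i+1)\gamma^i$, since for each fixed $i$ the index $h$ ranges over $i+1$ values; I would then upper bound this finite sum by the infinite series $\sum_{i=0}^{\infty}(i+1)\gamma^i = (1-\gamma)^{-2}$, yielding the claimed constant $\sigma = G\mathcal{R}_{max}/(1-\gamma)^2$. Finally, having shown that $\hat{G}^{VPG}(\theta_t ; \tau_t)$ is almost surely bounded by $\sigma$ with conditional mean $\nabla J_H(\theta_t)$, the second- and fourth-moment bounds on $\xi_{t+1} = \hat{G}^{VPG}(\theta_t ; \tau_t) - \nabla J_H(\theta_t)$ follow at once by applying Lemma~\ref{lemma:sigma} conditionally on $\mathcal{F}_t$, with $X = \hat{G}^{VPG}(\theta_t ; \tau_t)$.

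The argument is essentially mechanical, and the main point requiring care is the double-sum manipulation: one must swap the summation order correctly and recognize the resulting series as a derivative of the geometric series so as to recover the sharp $(1-\gamma)^{-2}$ factor rather than settling for a looser $(1-\gamma)^{-1}$ bound, which would propagate the wrong $\sigma$ into the downstream application of Theorem~\ref{theorem:main}.
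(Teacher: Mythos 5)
Your proposal is correct and follows essentially the same route as the paper: triangle inequality plus the bounds $G$ and $\mathcal{R}_{max}$ from Assumption~\ref{ass_1boundedreward}, with the double sum bounded by $(1-\gamma)^{-2}$ and the moment bounds then read off from Lemma~\ref{lemma:sigma}. The only cosmetic difference is that you swap the order of summation and sum $\sum_i (i+1)\gamma^i$, whereas the paper factors $\gamma^h$ out of the inner sum and bounds each geometric series separately; both yield the identical constant $\sigma = G\mathcal{R}_{max}/(1-\gamma)^2$.
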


\begin{proof}
\begin{equation*} \begin{aligned}
\lVert \hat{G}^{VPG}(\theta ; \tau) \rVert &= \lVert \sum_{h = 0}^{H-1} \nabla_\theta \log \pi_\theta (a_h | s_h) \sum_{t = h}^{H-1} \gamma^t \mathcal{R} ( s_t, a_t) \lVert\\
&\leq \sum_{h = 0}^{H-1} \lVert \nabla_\theta \log \pi_\theta (a_h | s_h) \sum_{t = h}^{H-1} \gamma^t \mathcal{R} ( s_t, a_t) \lVert\\
 &\leq  \sum_{h = 0}^{H-1} \lVert\nabla_\theta \log \pi_\theta (a_h | s_h)  \rVert \gamma^h \sum_{t = h}^{H-1}  \gamma^{t- h} \mathcal{R}_{max} \\
 &\leq  \sum_{h = 0}^{H-1} \lVert \nabla_\theta \log \pi_\theta (a_h | s_h) \rVert \gamma^h \frac{\mathcal{R}_{max}}{1 - \gamma }\\
 &\leq \frac{G \mathcal{R}_{max}}{(1 - \gamma)^2}
\end{aligned} \end{equation*}
Then the rest of the bounds follow from Lemma \ref{lemma:sigma}. Thanks to reviewer feedback, we note that the bound on the noise variance $\mathbb{E}[ \lVert \xi_{t+1} \rVert^2]$ can be tightened by a factor of $\frac{1}{1 - \gamma}$ as shown in Lemma 4.2 of \cite{yuan21vanilla}.
\end{proof}

Before we can prove Lemma \ref{lemma:biasmu}, we require the following lemma from \cite{yuan21vanilla}.

\begin{lemma} (Lemma 4.5 from \cite{yuan21vanilla})
\label{lemma:boundedbias}
    For $D = \frac{G \mathcal{R}_{max}}{1 - \gamma}$, we have that the bias term $d_{i+1}$ is bounded such that 
    \begin{equation*}
        \lVert d_{i+1}\lVert = \lVert\nabla J(\theta_i) - \nabla J_H (\theta_i)\lVert \leq D (\frac{1}{1 - \gamma} + H)^{1/2} \gamma^H.
    \end{equation*}
\end{lemma}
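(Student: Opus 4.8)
\textit{Proof proposal.} The plan is to write the bias as the tail of the temporal-summation formula and then exploit the zero-mean property of the score function to obtain the square-root dependence on $H$. Both $\nabla J(\theta_i)$ and $\nabla J_H(\theta_i) = \mathbb{E}[\hat{G}^{VPG}(\theta_i;\tau)]$ are double sums over index pairs $(k,t)$ with $0\le k\le t$; the truncated gradient keeps only the pairs with $t\le H-1$. Subtracting, the surviving terms are exactly those with $t\ge H$, so that
\begin{equation*}
    d_{i+1} = \nabla J(\theta_i) - \nabla J_H(\theta_i) = \mathbb{E}\Big[\sum_{t=H}^{\infty}\gamma^t\,\mathcal{R}(s_t,a_t)\sum_{k=0}^{t}\nabla\log\pi_{\theta_i}(a_k|s_k)\Big].
\end{equation*}
Taking norms, applying the triangle inequality (justified by absolute convergence of the series), and using $|\mathcal{R}(s_t,a_t)|\le\mathcal{R}_{max}$ gives
\begin{equation*}
    \lVert d_{i+1}\rVert \le \mathcal{R}_{max}\sum_{t=H}^{\infty}\gamma^t\,\mathbb{E}\Big[\,\Big\lVert\textstyle\sum_{k=0}^{t}\nabla\log\pi_{\theta_i}(a_k|s_k)\Big\rVert\,\Big].
\end{equation*}

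The key step is to bound the inner expectation by $G\sqrt{t+1}$ rather than the naive $(t+1)G$. I would introduce the filtration $\mathcal{G}_j=\sigma(s_0,a_0,\dots,s_j)$ and observe that, since $a_j\sim\pi_{\theta_i}(\cdot|s_j)$, the identity $\mathbb{E}[\nabla\log\pi_{\theta_i}(a_j|s_j)\,|\,\mathcal{G}_j]=\sum_a\nabla\pi_{\theta_i}(a|s_j)=\nabla 1 = 0$ shows the score terms form a martingale-difference sequence. Hence all cross terms vanish and, by Assumption \ref{ass_1boundedreward},
\begin{equation*}
    \mathbb{E}\Big[\,\Big\lVert\textstyle\sum_{k=0}^{t}\nabla\log\pi_{\theta_i}(a_k|s_k)\Big\rVert^2\Big] = \sum_{k=0}^{t}\mathbb{E}\big[\lVert\nabla\log\pi_{\theta_i}(a_k|s_k)\rVert^2\big] \le (t+1)G^2,
\end{equation*}
so that Jensen's inequality yields $\mathbb{E}[\lVert\sum_{k=0}^{t}\nabla\log\pi_{\theta_i}(a_k|s_k)\rVert]\le G\sqrt{t+1}$.

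It then remains to bound $\sum_{t=H}^{\infty}\gamma^t\sqrt{t+1}$, for which I would apply the Cauchy--Schwarz inequality to the series with the splitting $\gamma^t\sqrt{t+1}=\gamma^{t/2}\cdot\gamma^{t/2}\sqrt{t+1}$, giving
\begin{equation*}
    \sum_{t=H}^{\infty}\gamma^t\sqrt{t+1} \le \Big(\sum_{t=H}^{\infty}\gamma^t\Big)^{1/2}\Big(\sum_{t=H}^{\infty}(t+1)\gamma^t\Big)^{1/2} = \frac{\gamma^H}{1-\gamma}\Big(\frac{1}{1-\gamma}+H\Big)^{1/2},
\end{equation*}
where the two sums evaluate to $\gamma^H/(1-\gamma)$ and $\gamma^H\big(\frac{\gamma}{(1-\gamma)^2}+\frac{H+1}{1-\gamma}\big)$, and the product simplifies using the algebraic identity $\frac{\gamma}{1-\gamma}+H+1 = \frac{1}{1-\gamma}+H$. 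Multiplying through by $\mathcal{R}_{max}G$ and recalling $D=\frac{G\mathcal{R}_{max}}{1-\gamma}$ then gives the claimed bound. The main obstacle is the middle step: without recognizing the martingale-difference structure of the score, one only obtains $\mathbb{E}[\lVert\sum_{k}\nabla\log\pi_{\theta_i}\rVert]\le(t+1)G$ and hence a bound that is \emph{linear} in $H$; the zero-mean property is precisely what converts $t+1$ into $\sqrt{t+1}$ and, together with the Cauchy--Schwarz step, produces the stated $\big(\frac{1}{1-\gamma}+H\big)^{1/2}$ factor.
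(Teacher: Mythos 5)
Your proof is correct. Note that the paper does not prove this lemma itself---it imports it verbatim as Lemma 4.5 of \cite{yuan21vanilla}---and your argument reconstructs exactly the proof given there: the bias is the tail $\{(k,t): k\le t,\ t\ge H\}$ of the temporal double sum, the zero-mean (martingale-difference) property of the score converts the naive $(t+1)G$ bound into $G\sqrt{t+1}$, and Cauchy--Schwarz on $\sum_{t\ge H}\gamma^t\sqrt{t+1}$ produces the factor $\frac{\gamma^H}{1-\gamma}\bigl(\frac{1}{1-\gamma}+H\bigr)^{1/2}$, which combined with $\mathcal{R}_{max}G$ gives the stated constant $D=\frac{G\mathcal{R}_{max}}{1-\gamma}$. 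All intermediate computations (the tail sums and the identity $\frac{\gamma}{1-\gamma}+H+1=\frac{1}{1-\gamma}+H$) check out.
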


Now we can proceed with the proof of Lemma \ref{lemma:biasmu}.

\begin{lemma}
\label{lemma:biasmu}
 For $H =\frac{1}{\log \frac{1}{\gamma}} \cdot O (\log(\frac{1}{\mu}))$ where $\mu \to 0$, we have that the gradient bias is deterministically bounded as follows
\begin{equation*}
   \lVert d_{t+1} \rVert = \lVert\nabla J(\theta_t) - \nabla J_H (\theta_t) \rVert \leq D (\frac{1}{1-\gamma} + H)^{1/2} \gamma^H \leq D\mu 
\end{equation*}
where $D = \frac{G \mathcal{R}_{max}}{1 - \gamma}$.
\end{lemma}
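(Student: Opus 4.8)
The plan is to derive both inequalities in the displayed chain directly. The first inequality, $\lVert d_{t+1}\rVert \leq D(\frac{1}{1-\gamma}+H)^{1/2}\gamma^H$, is exactly the content of Lemma \ref{lemma:boundedbias} applied at the iterate $\theta_t$ (recall $d_{t+1} = \nabla J(\theta_t) - \nabla J_H(\theta_t)$), so no additional work is needed there. All of the effort goes into the second inequality, namely choosing the horizon $H$ large enough as a function of $\mu$ that $D(\frac{1}{1-\gamma}+H)^{1/2}\gamma^H \leq D\mu$. Dividing through by $D$, this is equivalent to showing $(\frac{1}{1-\gamma}+H)^{1/2}\gamma^H \leq \mu$.

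To establish this, I would take logarithms of both sides. Writing $\log\gamma = -\log\frac{1}{\gamma} < 0$, the target becomes $\frac{1}{2}\log(\frac{1}{1-\gamma}+H) - H\log\frac{1}{\gamma} \leq -\log\frac{1}{\mu}$, i.e. $H\log\frac{1}{\gamma} \geq \log\frac{1}{\mu} + \frac{1}{2}\log(\frac{1}{1-\gamma}+H)$. The right-hand side is $\log\frac{1}{\mu}$ plus a term that grows only logarithmically in $H$, whereas the left-hand side grows linearly in $H$; hence the inequality is satisfiable for $H$ of order $\log\frac{1}{\mu}$.

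Concretely, I would set $H = \frac{c}{\log\frac{1}{\gamma}}\log\frac{1}{\mu}$ for a constant $c > 1$, which gives $\gamma^H = \mu^c$, so that the left-hand product equals $(\frac{1}{1-\gamma}+H)^{1/2}\mu^c$. It then suffices to verify $(\frac{1}{1-\gamma}+H)^{1/2}\mu^{c-1} \leq 1$. Since $H = \Theta(\log\frac{1}{\mu})$, the first factor is of order $\sqrt{\log\frac{1}{\mu}}$ while $\mu^{c-1}\to 0$ polynomially as $\mu\to 0$; because any positive power of $\mu$ decays faster than any power of $\log\frac{1}{\mu}$ grows, the product tends to $0$ and in particular is $\leq 1$ for all sufficiently small $\mu$. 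Absorbing $c$ into the $O(\cdot)$ notation yields the stated choice $H = \frac{1}{\log\frac{1}{\gamma}}\cdot O(\log\frac{1}{\mu})$.

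The computation is essentially routine; the only point requiring care—the main, if minor, obstacle—is the polynomially growing factor $(\frac{1}{1-\gamma}+H)^{1/2}$, which defeats the naive choice $c=1$ (there $\gamma^H = \mu$ but $\sqrt{H}\,\mu > \mu$). Taking $c$ strictly greater than $1$ buys the extra factor $\mu^{c-1}$ that dominates the $\sqrt{H}$ growth, and this is precisely what makes the argument go through.
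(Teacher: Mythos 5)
Your proposal is correct and follows essentially the same route as the paper: both invoke Lemma \ref{lemma:boundedbias} for the first inequality and then choose $H$ of order $\log(1/\mu)/\log(1/\gamma)$ so that $\gamma^H$ overcomes the $(\frac{1}{1-\gamma}+H)^{1/2}$ factor. The only cosmetic difference is that you absorb that factor with a multiplicative constant $c>1$ (so $\gamma^H=\mu^c$ and $\mu^{c-1}$ beats $\sqrt{\log(1/\mu)}$), whereas the paper finds the same scaling via a two-term dominant-balance expansion with an additive $O(\log\log(1/\mu))$ correction; both land in the stated $H=\frac{1}{\log(1/\gamma)}\cdot O(\log(1/\mu))$.
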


\begin{proof}
We have the bound on the bias in terms of $H$ from Lemma  \ref{lemma:boundedbias}. We want to choose $H$ large enough so that 
    $$D(\frac{1}{1-\gamma } + H)^{1/2} \gamma^H \leq D \mu$$
    $$(\frac{1}{1-\gamma } + H)^{1/2} \gamma^H \leq \mu.$$
We begin by finding the approximate solution to the following equation using asymptotic expansion
    $$(\frac{1}{1-\gamma } + H)^{1/2} \gamma^H = \mu$$
    $$\frac{1}{2}\log (\frac{1}{1-\gamma } + H) + H \log \gamma = \log \mu$$
    $$\frac{1}{2}\log (\frac{1}{1-\gamma } + H) - H \log \frac{1}{\gamma} = -\log \frac{1}{\mu}$$
    $$ H \log \frac{1}{\gamma} - \frac{1}{2}\log (\frac{1}{1-\gamma } + H) = \log \frac{1}{\mu}.$$
    Now we use the method of dominant balance, treating $\mu$ as a small parameter. We have that $\log \frac{1}{\mu}$ and $H \log \frac{1}{\gamma}$ must balance each other out, so 
    $$H \sim \frac{\log \frac{1}{\mu}}{\log \frac{1}{\gamma}} = \frac{\log \mu}{\log \gamma}.$$
    We consider the next term in our asymptotic expansion of $H$, assuming that it is much smaller than the first term
    $$H \sim \frac{\log \mu}{\log \gamma} + x_1.$$
    We substitute this into the inequality to obtain
    $$(\frac{\log \mu}{\log \gamma} + x_1) \log \frac{1}{\gamma} - \frac{1}{2}\log(\frac{1}{1 - \gamma} + \frac{\log \mu}{\log \gamma} + x_1) = \log\frac{1}{\mu}$$
    $$x_1 \log \frac{1}{\gamma} - \frac{1}{2} \log(\frac{1}{1 - \gamma} + \frac{\log \mu}{\log \gamma} + x_1) = 0$$
    $$x_1 = \frac{\log(\frac{1}{1 - \gamma} + \frac{\log \mu}{\log \gamma} + x_1)}{2\log \frac{1}{\gamma}}.$$
    Since we assume $x_1$ is much smaller than $\frac{\log \mu}{\log \gamma}$ we have 
    $$x_1 \sim \frac{\log(\frac{1}{1 - \gamma} + \frac{\log \mu}{\log \gamma})}{2\log \frac{1}{\gamma}}.$$
    Our asymptotic expansion is $H = \frac{\log \mu}{\log \gamma} + O(\log (\frac{\log \mu}{\log \gamma}))$. So we can pick $H = O(\frac{\log \mu}{ \log \gamma})$ to achieve our inequality.    
\end{proof}

\section{Proof of Theorem \ref{theorem:bhandarialt}}
\label{appendix:td(0)}
\subsection{Key Lemmas and Proof Sketch}
In Theorem \ref{theorem:bhandarialt}, we establish the convergence of $Q_{\bar{w}_K}$ to $Q_{w^*}$ under constant time steps.
Our approach will mirror that of \cite{bhandariTD} by establishing a recurrence relation for the iterates and then bounding the bias induced by Markovian sampling. The key challenge is characterizing the distance between the initial state distribution and the stationary distribution in terms of the mixing rate.

We define $\bar{g}(w)$ as the expectation of of the semigradient $g_t(w)$ with respect to the stationary distribution of the Markov chain as follows
\begin{equation*}
    \begin{aligned}
        \bar{g}_(w) &= \mathbb{E}_{\eta_{\pi}} [g_t(w)] = \sum_{s, s', a, a'} \eta_{\pi}(s, a) \mathcal{P}(s, a, s', a') (\mathcal{R}(s, a) + \gamma \phi(s', a')^\intercal w - \phi (s, a)^\intercal w) \phi(s, a).
    \end{aligned}
\end{equation*}
We also define $\zeta_t$ to represent the bias from Markovian sampling as follows
$$\zeta_t(w) = (g_t(w) - \bar{g}(w))^\intercal(w - w^*).$$

First, The following lemma, which is Lemma 6 and 10 from \cite{bhandariTD}, uniformly bounds the norm of the semi-gradient and Markov bias term $\zeta_t$.

\begin{lemma}
\label{lemma:6and10}
Let $F = \mathcal{R}_{max} + 2R$. Then $R \leq \frac{F}{2}$ and for all $t \geq 0$ we have
$$\lVert g_t(w)\lVert _2 \leq \mathcal{R}_{\max} + 2 \lVert w\lVert _2 \leq F$$
In addition, for all $w \in \Theta$, the gradient bias is bounded such that 
$$|\zeta_t(w)| \leq 2 F^2$$
$$|\zeta_t(w) - \zeta_t(w')| \leq 6 F \lVert w - w'\lVert _2.$$

\end{lemma}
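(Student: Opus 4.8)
The plan is to dispatch the four claims in sequence, from the purely algebraic estimates to the Lipschitz bound, which is the only nontrivial piece. The inequality $R \le F/2$ is immediate from $F = \mathcal{R}_{max} + 2R$ together with $\mathcal{R}_{max} > 0$. For the semi-gradient bound I would apply the triangle inequality to the two summands of $g_t(w)$, then Cauchy--Schwarz with the feature bound $\lVert \phi(s,a)\rVert \le 1$ from Assumption \ref{ass:fullrank} and the reward bound $|\mathcal{R}(s,a)| \le \mathcal{R}_{max}$ from Assumption \ref{ass_1boundedreward}. The first summand contributes at most $\mathcal{R}_{max}$, while the scalar $\gamma\phi(s_{t+1},a_{t+1})^\intercal w - \phi(s_t,a_t)^\intercal w$ is at most $(\gamma + 1)\lVert w\rVert \le 2\lVert w\rVert$ in absolute value using $\gamma < 1$, giving $\lVert g_t(w)\rVert \le \mathcal{R}_{max} + 2\lVert w\rVert$; the final bound by $F$ then follows from $\lVert w\rVert \le R$ for $w \in \Theta$ (Assumption \ref{ass:boundR}).

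For the pointwise bound on $\zeta_t(w)$, I would apply Cauchy--Schwarz to the inner product defining $\zeta_t$, estimating $\lVert g_t(w) - \bar g(w)\rVert \le \lVert g_t(w)\rVert + \lVert \bar g(w)\rVert \le 2F$, where $\lVert \bar g(w)\rVert \le F$ follows from Jensen's inequality since $\bar g(w) = \mathbb{E}_{\eta_\pi}[g_t(w)]$ and each $g_t(w)$ is bounded by $F$. Combined with $\lVert w - w^*\rVert \le R \le F$ (from $w, w^* \in \Theta$ and the diameter assumption), this yields $|\zeta_t(w)| \le 2F \cdot R \le 2F^2$.

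The main obstacle is the Lipschitz estimate, and the key observation that unlocks it is that $g_t(w)$ is affine in $w$: writing $A_t = \phi(s_t,a_t)\big(\gamma\phi(s_{t+1},a_{t+1}) - \phi(s_t,a_t)\big)^\intercal$, we get $g_t(w) - g_t(w') = A_t(w - w')$ with $\lVert A_t\rVert \le 1+\gamma \le 2$ (a rank-one outer product), and likewise $\bar g(w) - \bar g(w') = \bar A(w-w')$ with $\lVert\bar A\rVert \le 2$ by Jensen. Setting $h(w) = g_t(w) - \bar g(w)$, I would then use the add-and-subtract identity
\[
\zeta_t(w) - \zeta_t(w') = h(w)^\intercal(w - w') + \big(h(w) - h(w')\big)^\intercal (w' - w^*),
\]
bounding the first term by $\lVert h(w)\rVert \lVert w - w'\rVert \le 2F\lVert w - w'\rVert$ and the second by $\lVert h(w) - h(w')\rVert \lVert w' - w^*\rVert \le 4\lVert w - w'\rVert \cdot R$; using $R \le F$ and summing gives the claimed $6F\lVert w - w'\rVert$.

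The delicate point is precisely this decomposition: since $\zeta_t$ is quadratic in $w$ (the product of an affine slope and the linear displacement $w - w^*$), a direct Lipschitz argument fails, and one must cleanly separate the variation coming from the slope $h$ from that coming from the displacement, exploiting both the uniform bound $\lVert h\rVert \le 2F$ and the affine Lipschitz bound $\lVert h(w) - h(w')\rVert \le 4\lVert w - w'\rVert$. Everything else is routine constant-tracking, and I expect the stated constants ($2F^2$ and $6F$) to be somewhat conservative relative to what this argument actually produces.
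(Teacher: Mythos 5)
Your proof is correct, and the paper itself offers no argument for this lemma --- it is imported verbatim as Lemmas 6 and 10 of the cited TD(0) reference, whose proofs follow exactly your route (triangle inequality plus the feature bound for $\lVert g_t\rVert$, Cauchy--Schwarz for $|\zeta_t|$, and the add-and-subtract decomposition exploiting that $g_t$ and $\bar g$ are affine in $w$ for the Lipschitz constant). Your constant-tracking is consistent with how the paper uses $\lVert w\rVert \le R$ elsewhere, and you are right that the stated constants are slightly loose.
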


Then we obtain the following lemma for general nonstationary Markov chains, which differs from Lemma 9 in \cite{bhandariTD} by a factor of 2.

\begin{lemma}
\label{lemma:lemma9bhandari}
Consider two random variables $X$ and $Y$ such that 
$$X \to s_t \to s_{t + \tau} \to Y$$
forms a Markov chain for some fixed $t \geq 0$ and $\tau > 0$. Assume the Markov chain mixes at a uniform geometric rate as in Assumption \ref{ass:ergodic}. Let $X'$ and $Y'$ denote independent copies drawn from the marginal distributions of $X$ and $Y$, so $\mathbb{P})X' = \cdot, Y' = \cdot) = \mathbb{P}(X = \cdot) \otimes \mathbb{P}(Y = \cdot)$. Then, for any bounded function $v$,
$$| \mathbb{E}[v(X, Y)] - \mathbb{E}[v(X', Y')]| \leq 4 \lVert v\lVert _{\infty} (m r^{\tau}),$$
\end{lemma}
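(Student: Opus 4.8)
The plan is to reduce the claim to a total-variation decorrelation bound and then exploit geometric mixing from an arbitrary initial distribution. First I would record the elementary comparison of a joint law against the product of its marginals,
\[
|\mathbb{E}[v(X,Y)] - \mathbb{E}[v(X',Y')]| \leq 2 \lVert v \rVert_\infty \, d_{TV}\big( \mathbb{P}_{X,Y}, \, \mathbb{P}_X \otimes \mathbb{P}_Y \big),
\]
which follows by writing both expectations as sums against $v$, bounding $v$ by its supremum, and identifying the resulting $\ell_1$ distance with $2\,d_{TV}$. It therefore suffices to show that the joint law of $(X,Y)$ is within $2 m r^\tau$ of the product of its marginals.

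Next I would condition on $X$ to rewrite the joint-versus-product distance as $\mathbb{E}_X[d_{TV}(\mathbb{P}_{Y \mid X}, \mathbb{P}_Y)]$, reducing the task to bounding $d_{TV}(\mathbb{P}_{Y \mid X = x}, \mathbb{P}_Y)$ uniformly in $x$. Since $X \to s_t \to s_{t+\tau} \to Y$ is a Markov chain, $Y$ depends on the chain only through $s_{t+\tau}$, and $s_{t+\tau}$ is conditionally independent of $X$ given $s_t$. Passing both the conditional and marginal laws of $Y$ through the common fixed kernel $\mathbb{P}(Y = \cdot \mid s_{t+\tau} = \cdot)$, the data-processing inequality for total variation reduces the problem to bounding $d_{TV}(\mathbb{P}_{s_{t+\tau} \mid X = x}, \mathbb{P}_{s_{t+\tau}})$, the conditional law of $s_{t+\tau}$ given $X = x$ against its marginal.

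The key step is then to control each of these two laws against $\eta_\pi$ separately. Both $\mathbb{P}_{s_{t+\tau} \mid X = x}$ and $\mathbb{P}_{s_{t+\tau}}$ are mixtures, over the conditional respectively marginal law of $s_t$, of the $\tau$-step transition $\delta_s P^\tau$. By the uniform geometric mixing of Assumption \ref{ass:ergodic}, each $\delta_s P^\tau$ lies within $m r^\tau$ of $\eta_\pi$ in total variation, and since $d_{TV}$ is convex, every such mixture is within $m r^\tau$ of $\eta_\pi$ as well. A triangle inequality then yields $d_{TV}(\mathbb{P}_{s_{t+\tau} \mid X = x}, \mathbb{P}_{s_{t+\tau}}) \leq 2 m r^\tau$, and chaining the three reductions gives the factor $4$.

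The main obstacle, relative to Lemma 9 of \cite{bhandariTD}, is precisely this last triangle inequality. When the chain starts stationary the marginal $\mathbb{P}_{s_{t+\tau}}$ equals $\eta_\pi$ exactly, so only the conditional law must be mixed toward $\eta_\pi$ and the bound is $m r^\tau$. In the nonstationary setting the marginal is itself off-stationary, forcing us to route it through $\eta_\pi$ as well and thereby incurring the extra factor of two — exactly the discrepancy flagged in the statement. The remaining work is purely bookkeeping: verifying convexity of $d_{TV}$ under mixtures and confirming that the data-processing step applies with $Y$ a (possibly randomized) function of $s_{t+\tau}$.
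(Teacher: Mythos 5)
Your argument is correct and lands on the same essential mechanism as the paper's proof: a triangle inequality through $\eta_\pi$ applied to $d_{TV}(\mathbb{P}(s_{t+\tau}=\cdot\mid\cdot),\,\mathbb{P}(s_{t+\tau}=\cdot))$, with the off-stationary marginal of $s_{t+\tau}$ pulled toward $\eta_\pi$ by writing it as a mixture of single-state transition laws and using convexity of total variation --- this is exactly the paper's Lemma \ref{lemma:markovstationary}, and it is the step that produces the extra factor of $2$ over the stationary case. Where you differ is the front-end reduction: the paper rescales $v$ and imports the computation of Lemma 9 of \cite{bhandariTD}, which conditions on $s_t$ and yields $\sum_s \mathbb{P}(s_t=s)\,d_{TV}(\mathbb{P}(s_{t+\tau}=\cdot\mid s_t=s),\mathbb{P}(s_{t+\tau}=\cdot))$; you instead bound the difference of expectations by $2\lVert v\rVert_\infty\, d_{TV}(\mathbb{P}_{X,Y},\mathbb{P}_X\otimes\mathbb{P}_Y)$, condition on $X$, and apply data processing through the common kernel from $s_{t+\tau}$ to $Y$. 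This makes your proof self-contained (no appeal to the cited computation) at the cost of slightly more machinery. A second minor difference: the paper bounds $d_{TV}(\eta_\pi,\mathbb{P}(s_{t+\tau}=\cdot))\le m r^{t+\tau}$ by mixing over $t+\tau$ steps from time $0$, whereas you get $m r^\tau$ by mixing over $\tau$ steps from the law of $s_t$; both are then relaxed to $m r^\tau$, so the final constant $4$ is identical and either version supports the downstream use in Lemma \ref{lemma:modified11}.
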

where $\lVert v \rVert_\infty = \sup_{x} | f(x)|$.

\textit{Proof.} See Appendix \ref{sec:lemma9bhandari}. 

Now in the following key lemma, we  apply Lemma \ref{lemma:lemma9bhandari} to bound $\zeta_t(w_t)$ with respect to exponential mixing. Although we follow the proof of Lemma 11 in \cite{bhandariTD}, it is not sufficient to directly carry the factor of $2$ over from Lemma \ref{lemma:lemma9bhandari} because we  need to account for the fact that the marginal distribution of each observation $O_t$ is now time-dependent and not equal to the stationary distribution.

\begin{lemma}
\label{lemma:modified11}
Consider a non-increasing step-size sequence $\alpha_0 \geq \alpha_1 \geq ... \geq \alpha_T$. Let $\tau_0 = \tau^{\text{\tiny mix}}(\alpha_T)$. Fix any $t \leq T$ and set $t^* = \max \{ 0, t - \tau_0 \}$. Then,
$$\mathbb{E}[\zeta_t (w_t)] \leq F^2(8 + 6 \tau_0) \alpha_{t^*} + 10 F^2 mr^t.$$
\end{lemma}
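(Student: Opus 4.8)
The plan is to split $\zeta_t(w_t)$ into a term measuring how far the current iterate $w_t$ has drifted from an earlier ``frozen'' iterate $w_{t^*}$, and a decoupled term $\zeta_t(w_{t^*})$ in which the iterate is separated from the time-$t$ observation $O_t=(s_t,a_t,s_{t+1},a_{t+1})$ by $\tau_0$ mixing steps. Concretely, I would write $\mathbb{E}[\zeta_t(w_t)] \le \mathbb{E}[\zeta_t(w_{t^*})] + \mathbb{E}[|\zeta_t(w_t)-\zeta_t(w_{t^*})|]$ and handle the two pieces separately, following the structure of Lemma 11 of \cite{bhandariTD} but keeping careful track of the nonstationary marginal law of $O_t$.

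For the drift term I would use iterate stability: since $w_k\in\Theta$ and the projection is nonexpansive, each step satisfies $\|w_{k+1}-w_k\|\le \alpha_k\|g_k(w_k)\|\le \alpha_k F$ by Lemma \ref{lemma:6and10}. Because the step sizes are non-increasing and there are at most $\tau_0$ steps between $t^*$ and $t$, this telescopes to $\|w_t-w_{t^*}\|\le F\tau_0\alpha_{t^*}$. The Lipschitz bound $|\zeta_t(w)-\zeta_t(w')|\le 6F\|w-w'\|$ from Lemma \ref{lemma:6and10} then gives $\mathbb{E}[|\zeta_t(w_t)-\zeta_t(w_{t^*})|]\le 6F^2\tau_0\alpha_{t^*}$, which supplies the $6\tau_0$ part of the target coefficient.

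For the decoupled term I would apply Lemma \ref{lemma:lemma9bhandari} with $X$ the history through time $t^*$ (which determines $w_{t^*}$) and $Y=O_t$, separated by $\tau=t-t^*$ steps; the relevant function is $v=(g(w_{t^*};Y)-\bar g(w_{t^*}))^\intercal(w_{t^*}-w^*)$, whose absolute value equals $|\zeta_t(w_{t^*})|\le 2F^2$ by Lemma \ref{lemma:6and10}. Lemma \ref{lemma:lemma9bhandari} then yields $\mathbb{E}[\zeta_t(w_{t^*})]\le \mathbb{E}[v(X',Y')] + 8F^2 m r^{\tau}$. Here is where nonstationarity enters and where I expect the main obstacle: in the stationary setting of \cite{bhandariTD} the independent copy $Y'$ is distributed exactly according to the stationary law, so $\mathbb{E}[v(X',Y')]=0$ and the argument closes immediately; in our setting the marginal law of $O_t$ is time-dependent, so $\mathbb{E}[v(X',Y')]=\mathbb{E}[(\bar g_t(w_{t^*})-\bar g(w_{t^*}))^\intercal(w_{t^*}-w^*)]$ does \emph{not} vanish, where $\bar g_t(w)$ denotes the average of $g(w;\cdot)$ over the time-$t$ marginal of $O_t$.

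To control this residual I would note that the marginal law of $O_t$ and the stationary law $\eta$ differ only through the law of $(s_t,a_t)$, since both apply the same one-step kernel afterwards; hence their total-variation distance equals $d_{TV}(\mathbb{P}(s_t=\cdot,a_t=\cdot),\eta_\pi)\le m r^t$ by the uniform geometric mixing of Assumption \ref{ass:ergodic} (averaging over the initial law preserves the bound by convexity of $d_{TV}$). Since $\|g(w;\cdot)\|\le F$ and $\|w_{t^*}-w^*\|\le R\le F/2$, this gives $\mathbb{E}[v(X',Y')]\le 2F\,m r^t\cdot R\le F^2 m r^t$, precisely the extra $O(m r^t)$ term absent from the stationary analysis. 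Finally I would combine the three bounds and split into cases: when $t\ge\tau_0$ we have $\tau=\tau_0$ and $m r^{\tau_0}=m r^{\tau^{\text{\tiny mix}}(\alpha_T)}\le\alpha_T\le\alpha_{t^*}$ by the definition of the mixing time, turning $8F^2 m r^{\tau}$ into $8F^2\alpha_{t^*}$; when $t<\tau_0$ we have $t^*=0$ and $\tau=t$, so $8F^2 m r^{\tau}$ is absorbed into the $m r^t$ bucket. In either case the three contributions sum to at most $F^2(8+6\tau_0)\alpha_{t^*}+10F^2 m r^t$, the factor $10$ being a uniform upper bound covering the small-$t$ case.
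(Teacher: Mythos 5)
Your proposal is correct and follows essentially the same route as the paper's proof: the same drift-plus-decoupling decomposition via the Lipschitz bound of Lemma \ref{lemma:6and10}, the same application of Lemma \ref{lemma:lemma9bhandari} to the frozen iterate $w_{t^*}$, the same identification and total-variation control of the nonstationary residual $\mathbb{E}[v(X',Y')]$ (the paper's Lemmas \ref{lemma:pq}, \ref{lemma:markovstationary}, and \ref{lemma:stationary}), and the same case split on $t \lessgtr \tau_0$. The only difference is an immaterial constant (you get $F^2 m r^t$ for the residual where the paper gets $2F^2 m r^t$; both are absorbed into the $10F^2 m r^t$ term).
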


\textit{Proof.} See Appendix \ref{sec:modified11}.

Finally, we have the following lemma from \cite{bhandariTD} that establishes a recursion for the distance between the iterates and the limit point $w^*$.
\begin{lemma}
\label{lemma:bhand_8}
    With probability $1$, for every $t \in \mathbb{N}_0$,
    $$\lVert w^* - w_{t+1} \rVert^2 \leq \lVert w^* - w_t \rVert^2 - 2 \alpha_t (1 - \gamma) \lVert Q_{w^*} - Q_{w_t} \rVert_{\eta_{\pi}}^2 + 2 \alpha_t \zeta_t(w_t) + \alpha_t^2 F^2. $$
\end{lemma}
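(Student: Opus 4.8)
The plan is to establish the recursion pathwise, so that it holds with probability one by virtue of every bound being deterministic given the current iterate. The starting point is the non-expansiveness of the Euclidean projection onto the convex set $\Theta$. Since $w^* \in \Theta$ by hypothesis, $w^* = \mathrm{Proj}_{\Theta}[w^*]$, and as projection onto a convex set is $1$-Lipschitz,
$$\lVert w^* - w_{t+1} \rVert^2 \leq \lVert w^* - w_t - \alpha_t g_t(w_t) \rVert^2 = \lVert w^* - w_t \rVert^2 + 2 \alpha_t g_t(w_t)^\intercal (w_t - w^*) + \alpha_t^2 \lVert g_t(w_t) \rVert^2.$$
The last term is at most $\alpha_t^2 F^2$ by Lemma \ref{lemma:6and10}. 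For the cross term I would split $g_t(w_t) = \bar{g}(w_t) + (g_t(w_t) - \bar{g}(w_t))$; the second piece contributes exactly $2 \alpha_t \zeta_t(w_t)$ by the definition of $\zeta_t$, so it only remains to control the drift $2 \alpha_t \bar{g}(w_t)^\intercal(w_t - w^*)$.

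For the drift, first I would identify $w^*$ through $\bar{g}(w^*) = 0$, i.e.\ $A_\theta w^* = b$ with $b = \mathbb{E}_{\eta_\pi}[\mathcal{R}(s,a)\phi(s,a)]$ and $A_\theta$ the matrix defined before Assumption \ref{ass:boundedeigenvalue}; this is the projected Bellman fixed point of Theorem \ref{theorem:existunique}. Linearity of $\bar{g}$ then gives $\bar{g}(w_t) = -A_\theta(w_t - w^*)$, so the drift equals $-(w_t - w^*)^\intercal A_\theta (w_t - w^*)$. Writing $v = \Phi(w_t - w^*) = Q_{w_t} - Q_{w^*}$ as a vector indexed by state-action pairs and $D$ for the diagonal matrix with entries $\eta_\pi(s,a)$, one has $A_\theta = \Phi^\intercal D (I - \gamma P)\Phi$, hence the quadratic form is $v^\intercal D v - \gamma\, v^\intercal D P v$. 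Since $v^\intercal D v = \lVert Q_{w_t} - Q_{w^*} \rVert^2_{\eta_\pi}$, the desired bound $\bar{g}(w_t)^\intercal(w_t - w^*) \leq -(1-\gamma)\lVert Q_{w_t} - Q_{w^*} \rVert^2_{\eta_\pi}$ reduces to showing $v^\intercal D P v \leq \lVert v \rVert_D^2$.

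The main obstacle is exactly this last inequality, the non-expansiveness $\lVert P v \rVert_{\eta_\pi} \leq \lVert v \rVert_{\eta_\pi}$, and it is here that stationarity of $\eta_\pi$ is indispensable. I would prove it by applying Jensen's inequality row-wise --- each row of $P$ is a probability distribution, so $((Pv)(s,a))^2 \leq (P v^2)(s,a)$ where $v^2$ denotes the entrywise square --- and then invoking invariance $\eta_\pi^\intercal P = \eta_\pi^\intercal$ to get $\lVert Pv \rVert^2_{\eta_\pi} \leq \eta_\pi^\intercal P v^2 = \eta_\pi^\intercal v^2 = \lVert v \rVert^2_{\eta_\pi}$. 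A Cauchy--Schwarz step in the $D$-weighted inner product then yields $v^\intercal D P v \leq \lVert v \rVert_D \lVert Pv \rVert_D \leq \lVert v \rVert_D^2$. Assembling the projection bound, the $\zeta_t$ identity, and this drift inequality gives the stated recursion; since none of these steps involves an expectation over the sample path, the bound holds with probability one. It is worth noting that this argument is distribution-agnostic --- nowhere does it require the chain to have been initialized at $\eta_\pi$ --- so the recursion transfers verbatim to the nonstationary setting, with all the nonstationarity instead confined to the analysis of $\mathbb{E}[\zeta_t(w_t)]$ in Lemma \ref{lemma:modified11}.
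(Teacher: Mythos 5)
Your proof is correct and follows essentially the same route as the paper's, which simply defers to Lemma 8 of \cite{bhandariTD}: non-expansiveness of the projection, the split of the cross term into the drift $\bar{g}(w_t)^\intercal(w_t-w^*)$ plus $\zeta_t(w_t)$, the bound $\lVert g_t\rVert\leq F$, and the key inequality $\bar{g}(w_t)^\intercal(w_t-w^*)\leq -(1-\gamma)\lVert Q_{w^*}-Q_{w_t}\rVert^2_{\eta_\pi}$ proved via $A_\theta=\Phi^\intercal D(I-\gamma P)\Phi$, row-wise Jensen, and invariance of $\eta_\pi$ (this last step is Lemma 3 of \cite{bhandariTD}, going back to \cite{tsitsiklisTD}). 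Your closing observation --- that the recursion is pathwise and distribution-agnostic, with all nonstationarity confined to bounding $\mathbb{E}[\zeta_t(w_t)]$ --- is exactly why the paper can reuse this lemma unchanged while modifying only Lemma \ref{lemma:modified11}.
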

\textit{Proof. }See the proof of Lemma 8 in \cite{bhandariTD}.

These key lemmas allow us to proceed with the proof of Theorem \ref{theorem:bhandarialt}.

\subsection{Proof of Theorem \ref{theorem:bhandarialt}}
\begin{proof}
Rearranging the terms of the inequality in  Lemma \ref{lemma:bhand_8} and summing from $t = 0$ to $K-1$, we arrive at 
$$2 \alpha_0 (1 - \gamma) \sum_{t = 0}^{K - 1} \mathbb{E}[\lVert Q_{w^*} - Q_{w_t}\lVert ^2_{\eta_{\pi}} ] \leq \lVert w^* - w_0\lVert ^2_2 + F^2  + 2 \alpha_0 \sum_{t = 0}^{K - 1} \mathbb{E}[\zeta_t(w_t)].$$
Then we can use the bound on $\zeta_t(w_t)$ from Lemma \ref{lemma:modified11} and the fact that our step-sizes are constant to obtain 
\begin{equation*}
    \begin{aligned}
        2 \alpha_0 (1 - \gamma) \sum_{t = 0}^{K - 1} \mathbb{E}[\lVert Q_{w^*} - Q_{w_t}\lVert ^2_{\eta_{\pi}} ] &\leq \lVert w^* - w_0\lVert ^2_2 + F^2  + 2 \alpha_0 \sum_{t = 0}^{K - 1} F^2 (8 + 6 \tau_0) \alpha_0 + 2\alpha_0 \sum_{t = 0}^{K - 1} 10 F^2 mr^t \\
        &\leq \lVert w^* - w_0\lVert ^2_2 + F^2  + 2 \alpha_0^2 K F^2 (8 + 6 \tau_0) + \frac{20 F^2 m \alpha_0}{1 - r} .\\
        \end{aligned}
    \end{equation*}
Now we can divide both sides by $2 \alpha_0 (1 - \gamma)$ and substitute $\alpha_0 = \frac{1}{\sqrt{K}}$ to obtain
    \begin{equation*}
    \begin{aligned}
        \sum_{t = 0}^{K - 1} \mathbb{E}[\lVert Q_{w^*} - Q_{w_t}\lVert ^2_{\eta_{\pi}} ] &\leq \frac{\lVert w^* - w_0\lVert ^2_2 + F^2}{2 \alpha_0 (1 - \gamma)}  + \frac{\alpha_0 K F^2 (8 + 6 \tau_0)}{1 - \gamma} + \frac{10 F^2 m}{(1 - r)(1 - \gamma)} \\
        &= \frac{\sqrt{K}(\lVert w^* - w_0\lVert ^2_2 + F^2)}{2 (1 - \gamma)}  + \frac{\sqrt{K} F^2 (8 + 6 \tau_0)}{1 - \gamma} + \frac{10 F^2 m}{(1 - r)(1 - \gamma)}\\
        &= \frac{\sqrt{K}(\lVert w^* - w_0\lVert ^2_2 + 17 F^2 + 12 F^2 \tau_0)}{2 (1 - \gamma)}  + \frac{10 F^2 m}{(1 - r)(1 - \gamma)}.
    \end{aligned}
\end{equation*}
Finally, we divide both sides by $K$ and use Jensen's inequality to obtain our final result
\begin{equation*}
    \begin{aligned}
        \mathbb{E}[\lVert Q_{w^*} - Q_{\bar{w}_K}\lVert ^2_{\eta_{\pi}}]  \leq \frac{1}{K} \sum_{t = 0}^{K - 1} \mathbb{E}[\lVert Q_{w^*} - Q_{w_t}\lVert ^2_{\eta_{\pi}} ] \leq \frac{\lVert w^* - w_0\lVert ^2_2 + F^2 (17 + 12\tau_0)}{2 (1 - \gamma) \sqrt{K}}  + \frac{10 F^2 m}{(1 - r)(1 - \gamma) K}. \\
    \end{aligned} 
\end{equation*}
\end{proof}

\subsection{Proof of Lemma \ref{lemma:lemma9bhandari}}
\label{sec:lemma9bhandari}

We first require the following auxiliary lemma.
\begin{lemma}
\label{lemma:markovstationary}
For any Markov chain $\{s_t\}$ with stationary distribution $\eta$ and finite state space $\mathcal{S}$,
$$d_{TV} (\eta, \mathbb{P}(s_{t + \tau} = \cdot )) \leq 
\sup_{s \in S} d_{TV} (\eta, \mathbb{P}(s_{t + \tau} = \cdot | s_0 = s)). $$
\end{lemma}

\begin{proof}
It follows from proof by induction that for a general convex function $f$, if $f(x_n) \geq f(x_i)$ for all $x_i \in \{x_1, ... x_n\}$ then $f(x_n) \geq f(x)$ for $x = \sum_{i = 1}^n \alpha_i x_n$ where $\sum_{i = 1}^n \alpha_i = 1$. In other words, $f(x_n)$ is greater than $f(x)$ for any $x$ that is a convex combination of the other $\{x_1,... x_n\}$. 

Now let $f(x) = \frac{1}{2} \sum_{s_i \in \mathcal{S}} |x^\intercal P^{t + \tau}(s_i) - \eta(s_i) |$, which is a convex function, and let $e_i \in \mathbb{R}^{S}$ represent the unit vector that is $1$ at index $i$ and $0$ everywhere else. Then we have
$$d_{TV} (\eta, \mathbb{P}(s_{t + \tau} = \cdot )) = f(\rho_0)$$
$$d_{TV} (\eta, \mathbb{P}(s_{t + \tau} = \cdot |s_0 = s_i)) = f(e_i).$$
Since any initial state distribution $\rho_0$ will be a convex combination of the $e_i$ unit vectors, we have shown the result.
\end{proof}

Now we can proceed with the proof of Lemma \ref{lemma:lemma9bhandari}.

\begin{proof}
Let $h = \frac{v}{2 \lVert v\lVert _{\infty}}$ denote the function $v$ rescaled to take values in $[-1/2, 1/2]$. Then we can follow the steps of Lemma 9 in \cite{bhandariTD} to arrive at 
\begin{equation}
\label{eq:lemma9}
|\mathbb{E}[h(X, Y)] - \mathbb{E}[h(X', Y')]| \leq \sum_{s \in \mathcal{S}} \mathbb{P}(s_t = s) d_{TV}(\mathbb{P}(s_{t + \tau} = \cdot | s_t = s), \mathbb{P}(s_{t + \tau} = \cdot ).
\end{equation}
We can bound $d_{TV} (\mathbb{P}(s_{t + \tau} = \cdot | s_t = s), \mathbb{P}(s_{t + \tau} = \cdot ))$ as follows
\begin{equation}
\label{eq:lemma9_2}
    d_{TV} (\mathbb{P}(s_{t + \tau} = \cdot | s_t = s), \mathbb{P}(s_{t + \tau} = \cdot )) \leq d_{TV} (\mathbb{P}(s_{t + \tau} = \cdot | s_t = s), \eta_\pi) + d_{TV} (\eta_\pi, \mathbb{P}(s_{t + \tau} = \cdot ))
\end{equation}
because total variation distance is a norm and obeys the triangle inequality.
The first term on the right hand side of (\ref{eq:lemma9_2}) is bounded by exponential mixing
$$d_{TV} (\mathbb{P}(s_{t + \tau} = \cdot | s_t = s), \eta_{\pi}) \leq m r^\tau.$$
The second term can be bound as follows by Lemma \ref{lemma:markovstationary}
$$d_{TV} (\eta_{\pi}, \mathbb{P}(s_{t + \tau} = \cdot )) \leq 
\sup_{s \in S} d_{TV} (\eta_\pi, \mathbb{P}(s_{t + \tau} = \cdot | s_0 = s)) \leq m r^{t + \tau} .$$
Returning to (\ref{eq:lemma9_2}), we have
$$d_{TV} (\mathbb{P}(s_{t + \tau} = \cdot | s_t = s), \mathbb{P}(s_{t + \tau} = \cdot )) \leq m r^\tau + m r^{t + \tau} \leq 2 m r^{\tau},$$
and applying these inequalities to (\ref{eq:lemma9}), we have
$$|\mathbb{E}[v(X, Y)] - \mathbb{E}[v(X', Y')]| \leq 4 \lVert v\lVert _{\infty} mr^{\tau}.$$
\end{proof}

\subsection{Proof of Lemma \ref{lemma:modified11}}
\label{sec:modified11}

First we require the following auxiliary lemmas.

\begin{lemma}
\label{lemma:pq}
Let $P$ and $Q$ represent two different probability distributions. For some bounded function $f$, and random variable $X$, we have
$$|\mathbb{E}_P[f(X)] - \mathbb{E}_Q[f(X)]| \leq 2 \lVert f\lVert _{\infty} d_{TV}(P, Q)$$
\end{lemma}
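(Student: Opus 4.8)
The plan is to reduce the statement to the elementary identity relating the $L^1$ distance between two probability mass functions and the total variation distance, exploiting the fact that the state-action space is finite (Assumption \ref{ass:finite}). Because everything is discrete, I would write the difference of expectations as an explicit sum over the state space, bound it pointwise by $\lVert f \rVert_\infty$, and then recognize the residual sum of absolute probability differences as twice $d_{TV}(P,Q)$.

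Concretely, first I would expand
$$\mathbb{E}_P[f(X)] - \mathbb{E}_Q[f(X)] = \sum_{x} f(x)\,\bigl(P(x) - Q(x)\bigr),$$
which is valid since both expectations are finite sums against absolutely convergent weights. Next I would apply the triangle inequality and factor out the uniform bound on $f$,
$$\Bigl| \sum_{x} f(x)\,(P(x) - Q(x)) \Bigr| \leq \sum_{x} |f(x)|\,|P(x) - Q(x)| \leq \lVert f \rVert_\infty \sum_{x} |P(x) - Q(x)|.$$
Finally, invoking the standard normalization $d_{TV}(P,Q) = \tfrac{1}{2}\sum_{x} |P(x) - Q(x)|$ that is used throughout this section, the remaining sum equals $2\,d_{TV}(P,Q)$, which gives $|\mathbb{E}_P[f(X)] - \mathbb{E}_Q[f(X)]| \leq 2 \lVert f \rVert_\infty\, d_{TV}(P,Q)$ as claimed.

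I do not expect a genuine obstacle here, as this is a short and standard estimate; the only point requiring care is the normalization convention for $d_{TV}$, since the factor of $2$ in the conclusion is precisely the reciprocal of the $\tfrac{1}{2}$ appearing in the definition of total variation distance. I would note in passing that a sharper constant is available by centering: replacing $f$ with $f - c$ for a suitable constant $c$ (which leaves the difference of expectations unchanged since $P$ and $Q$ are both probability distributions) lets one replace $\lVert f \rVert_\infty$ by half the oscillation of $f$. However, the factor-$2$ version stated here is all that is needed in the subsequent application to bounding $\zeta_t(w_t)$ in Lemma \ref{lemma:modified11}, so I would keep the elementary argument above.
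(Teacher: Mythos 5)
Your proof is correct. It takes a slightly different route from the paper's: the paper proves the lemma in one line by invoking the variational (dual) characterization $d_{TV}(P,Q) = \sup_{v:\lVert v\rVert_\infty \leq 1/2} |\int v\, dP - \int v\, dQ|$ and applying it to the rescaled function $v = f/(2\lVert f\rVert_\infty)$, whereas you compute directly from the $L^1$ characterization $d_{TV}(P,Q) = \tfrac{1}{2}\sum_x |P(x)-Q(x)|$ by expanding the difference of expectations as a finite sum and applying the triangle inequality. The two characterizations are of course equivalent, so the arguments are close cousins; yours is more elementary and self-contained but leans on the discreteness of the underlying space (Assumption \ref{ass:finite}), while the paper's version holds verbatim for arbitrary probability measures. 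Since the lemma is only ever applied to distributions of observation tuples over the finite state-action space, the restriction costs nothing here. Your closing remark about centering $f$ to replace $\lVert f\rVert_\infty$ by half the oscillation is also correct, and as you note, unnecessary for the downstream use in Lemma \ref{lemma:modified11}.
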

\begin{proof}
This can be shown with the definition
$$d_{TV} (P, Q) = \sup_{v:\lVert v\lVert _\infty \leq \frac{1}{2}} |\int v dP - \int v dQ|$$
\end{proof}

\begin{lemma}
\label{lemma:stationary}
Let $O'' = (s, a, s', a')$ represent the observation tuple of consecutive state-action pairs where $(s, a)$ is drawn from the stationary distribution of the Markov chain $\eta_{\pi}$, and let $O_t = (s_t, a_t, s_{t+1}, a_{t+1})$ represent the observation tuple drawn at time $t$ from the Markov chain. Then
$$ d_{TV}(\mathbb{P}(O'' = \cdot), \mathbb{P}(O_t = \cdot)) = d_{TV}(\eta_\pi, \mathbb{P}(s_t = \cdot, a_t = \cdot)) \leq m r^t$$
    
\end{lemma}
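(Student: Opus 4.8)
The plan is to prove the two assertions separately: first the equality of the two total variation distances, and then the mixing bound on the right-hand side.

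For the equality, the key observation is that both $O''$ and $O_t$ are generated by the \emph{same} one-step transition kernel $P(s,a,s',a') = \mathcal{P}(s'|s,a)\pi(a'|s')$ applied to their respective first-coordinate marginals; they differ only in the law of that first pair. Concretely, I would write the two joint laws as products,
\begin{align*}
\mathbb{P}(O'' = (s,a,s',a')) &= \eta_\pi(s,a)\, P(s,a,s',a'),\\
\mathbb{P}(O_t = (s,a,s',a')) &= \mathbb{P}(s_t=s,a_t=a)\, P(s,a,s',a'),
\end{align*}
expand the definition of total variation distance, and factor out the shared kernel:
\[
d_{TV}(\mathbb{P}(O''=\cdot),\mathbb{P}(O_t=\cdot)) = \tfrac{1}{2}\sum_{s,a}\big|\eta_\pi(s,a)-\mathbb{P}(s_t=s,a_t=a)\big|\sum_{s',a'}P(s,a,s',a').
\]
Since $P(s,a,\cdot,\cdot)$ is a probability distribution, the inner sum over $(s',a')$ equals $1$, and what remains is exactly $d_{TV}(\eta_\pi,\mathbb{P}(s_t=\cdot,a_t=\cdot))$, giving the claimed equality.

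For the inequality, I would treat the state-action pairs $(s,a)$ as the states of an ergodic Markov chain with stationary distribution $\eta_\pi$, so that Lemma \ref{lemma:markovstationary} applies directly with the time index $t$ playing the role of the lag. That lemma bounds the distance between $\eta_\pi$ and the marginal $\mathbb{P}(s_t=\cdot,a_t=\cdot)$ under the arbitrary initial distribution $\rho_0$ by the worst case over deterministic starting pairs,
\[
d_{TV}(\eta_\pi,\mathbb{P}(s_t=\cdot,a_t=\cdot)) \leq \sup_{(s_0,a_0)} d_{TV}\big(\eta_\pi,\,\mathbb{P}(s_t=\cdot,a_t=\cdot \mid s_0,a_0)\big),
\]
and the uniform geometric mixing guaranteed by Assumption \ref{ass:ergodic} bounds each term in the supremum by $m r^t$, completing the argument.

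The only genuine subtlety is the factorization step: one must recognize that the second half of each observation tuple is governed by an identical conditional kernel and therefore contributes nothing to the discrepancy, collapsing the joint-law distance onto the first-coordinate marginal distance. Once that is in place, the bound follows immediately from Lemma \ref{lemma:markovstationary} and Assumption \ref{ass:ergodic} with no further estimates, so I expect the proof to be short.
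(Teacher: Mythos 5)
Your proof is correct. The paper actually states Lemma \ref{lemma:stationary} without giving a proof, and your argument supplies exactly the intended one: the factorization $\mathbb{P}(O_t = (s,a,s',a')) = \mathbb{P}(s_t = s, a_t = a)\,\mathcal{P}(s'|s,a)\,\pi(a'|s')$ (and likewise for $O''$ with $\eta_\pi$ in place of the time-$t$ marginal) collapses the joint total variation distance onto the first-coordinate marginals, after which Lemma \ref{lemma:markovstationary} together with the uniform geometric mixing of Assumption \ref{ass:ergodic} gives the $m r^t$ bound — the same reasoning the paper uses for the analogous term in the proof of Lemma \ref{lemma:lemma9bhandari}.
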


Now we can proceed with the proof of Lemma \ref{lemma:modified11}, which mirrors the  framework of  Lemma 11 in \cite{bhandariTD}. The main difference in our proofs is in Step 2.

\begin{proof}
\textit{Step 1: Relate $\zeta_t (w_t)$ and $\zeta_t (w_{t- \tau})$}

We apply Lemma \ref{lemma:6and10} to bound $\zeta_t$ as follows
$$|\zeta_t (w_t) - \zeta_t(w_{t - \tau}) | \leq 6F \lVert w - w_{t - \tau}\lVert  \leq 6F^2 \sum_{i = t - \tau}^{t - 1} \alpha_i$$
\begin{equation}
\label{eq:lemma11_1}
    \zeta_t(w_t) \leq \zeta_t(w_{t - \tau}) + 6F^2 \sum_{i = t - \tau}^{t - 1} \alpha_i
\end{equation}
\textit{Step 2: Bound $\mathbb{E}[\zeta_t (w_{t - \tau})]$ using Lemma \ref{lemma:lemma9bhandari} and exponential mixing}

We denote by $O_t = (s_t, a_t, s_{t+1}, a_{t + 1})$ the observation tuple at each time step, and we overload the notation of $g_t$ and $\zeta_t$ to make clear their dependence on $O_t$ as follows
$$g_t(w, O_t) = (r(s_t, a_t) + \gamma \phi(s_{t+ 1}, a_{t+1})^\intercal w - \phi (s_{t}, a_t)^\intercal w) \phi(s_t, a_t)$$
$$\zeta(w, O_t) = (g_t(w, O_t) - \bar{g}(w))^{\intercal} (w - w^*)$$
To apply Lemma \ref{lemma:lemma9bhandari}, we consider random variables $w'_{t - \tau}$ and $O'_t$ drawn independently from their marginal distributions $\mathbb{P}(w_{t - \tau} = \cdot)$ and $\mathbb{P}(O_t = \cdot)$ respectively, such that their joint distribution is defined as follows 
$$\mathbb{P}(w'_{t + \tau} = \cdot, O'_t = \cdot) = \mathbb{P}(w_{t - \tau} = \cdot) \otimes \mathbb{P} (O_t = \cdot ) $$
Note that typically the random variables $w_{t - \tau}$ and $O_t$ are not independent. Now we want to bound $\mathbb{E}[\zeta(w'_{t-\tau}, O'_t)]$. We can use the law of total expectation as follows
\begin{equation*}
    \mathbb{E}[\zeta (w'_{t-\tau}, O'_t)] = \mathbb{E}[\mathbb{E}[\zeta (w'_{t-\tau}, O'_t) | w'_{t - \tau}]] 
\end{equation*}
Now we consider the conditional expectation term $\mathbb{E}[\zeta (w'_{t-\tau}, O'_t) | w'_{t - \tau}]$. Since $w'_{t - \tau}$ and $O'_t$ are independent, we have
\begin{equation*}
    \begin{aligned}
\mathbb{E}[\zeta (w'_{t-\tau}, O'_t) | w'_{t - \tau}] &= \mathbb{E}_{O'_t \sim \mathbb{P}(O_t = \cdot)}[\zeta (w'_{t-\tau}, O'_t) | w'_{t - \tau}] \\
&= \mathbb{E}_{O'_t \sim \mathbb{P}(O_t = \cdot)}[ (g_t(w'_{t - \tau}, O'_t) - \bar{g}(w'_{t - \tau}))^\intercal(w'_{t - \tau} - w^*)| w'_{t - \tau}] \\
&= ( \mathbb{E}_{O'_t \sim \mathbb{P}(O_t = \cdot)}[ g_t(w'_{t - \tau}, O'_t)] - \bar{g}(w'_{t - \tau}))^\intercal(w'_{t - \tau} - w^*) \\
&= ( \mathbb{E}_{O'_t \sim \mathbb{P}(O_t = \cdot)}[ g_t(w'_{t - \tau}, O'_t)] - \mathbb{E}_{O'' \sim \eta_{\pi}} [g(w'_{t - \tau}, O'')])^\intercal(w'_{t - \tau} - w^*) \\
&\leq \lVert   \mathbb{E}_{O'_t \sim \mathbb{P}(O_t = \cdot)}[ g_t(w'_{t - \tau}, O'_t)] - \mathbb{E}_{O'' \sim \eta_{\pi}} [g(w'_{t - \tau}, O'')] \lVert  \cdot \lVert  w'_{t - \tau} - w^* \lVert  \\
&\leq \lVert   \mathbb{E}_{O'_t \sim \mathbb{P}(O_t = \cdot)}[ g_t(w'_{t - \tau}, O'_t)] - \mathbb{E}_{O'' \sim \eta_\pi} [g(w'_{t - \tau}, O'')] \lVert  \cdot 2R, \\
    \end{aligned}
\end{equation*}
where the last inequality is due to the fact that $\lVert w\lVert  \leq R$. Here $O'_t$ is drawn from the time-dependent marginal distribution $\mathbb{P}(O_t = \cdot)$ whereas $O''$ is drawn from the stationary distribution of the Markov chain. Then by Lemma \ref{lemma:pq} and the fact that $\lVert g_t\lVert  \leq F$ and $R \leq \frac{F}{2}$, we can bound this difference in expectation by the total variation distance between the two distributions as follows 
\begin{equation*}
    \begin{aligned}
\mathbb{E}[\zeta (w'_{t-\tau}, O'_t) | w'_{t - \tau}] \leq 2 F^2 d_{TV}(\mathbb{P}(O'' = \cdot), \mathbb{P}(O_t = \cdot)). \\
    \end{aligned}
\end{equation*}
Then by Lemma \ref{lemma:markovstationary} and Lemma \ref{lemma:stationary} we have
\begin{equation*}
    \begin{aligned}
\mathbb{E}[\zeta (w'_{t-\tau}, O'_t) | w'_{t - \tau}] \leq 2 F^2 mr^t .\\
    \end{aligned}
\end{equation*}
Now we can apply Lemma \ref{lemma:lemma9bhandari} to bound $\mathbb{E}[\zeta_t(w_{t - \tau}), O_t)]$. By Lemma \ref{lemma:6and10}, we have $|\zeta(w, O_t)| \leq 2 F^2$. Since $\theta_{t - \tau} \to s_{t - \tau} \to s_t \to O_t$ forms a Markov chain, applying Lemma \ref{lemma:lemma9bhandari} yields 
$$|\mathbb{E}[\zeta(w_{t - \tau}, O_t)] - \mathbb{E}[\zeta(w'_{t - \tau}, O'_t) ]| \leq 8 F^2 m r^{\tau} $$
\begin{equation}
\label{eq:lemma11_2}
    |\mathbb{E}[\zeta(w_{t - \tau}, O_t)] | \leq 8 F^2 m r^{\tau} + 2 F^2 mr^t.
\end{equation}


\textit{Step 3: Combine terms.}

We combine (\ref{eq:lemma11_1}) and (\ref{eq:lemma11_2}) to arrive at 
$$\mathbb{E}[\zeta_t (w_t)] \leq 8 F^2 m r^{\tau} + 2 F^2 mr^t + 6 F^2 \tau \alpha_{t - \tau}.$$
Let $\tau_0 = \tau^{\text{\tiny mix}}(\alpha_T)$. For $t \leq \tau_0$, pick $\tau = t$, to arrive at the bound
$$\mathbb{E}[\zeta_t (w_t)] \leq 8 F^2 m r^{t} + 2 F^2 mr^t + 6 F^2 t \alpha_{0} \leq 10 F^2 m r^{t}  + 6 F^2 \tau_0 \alpha_{0}.$$
Now for $ T \geq t > \tau_0$ we pick $\tau = \tau_0$ to get the bound
$$\mathbb{E}[\zeta_t (w_t)] \leq 8 F^2 \alpha_T + 2 F^2 mr^t + 6 F^2 \tau_0 \alpha_{t - \tau_0} \leq F^2(8 + 6 \tau_0) \alpha_{t - \tau_0} + 2 F^2 mr^t.$$
    
\end{proof}

\section{Noise and Bias Bounds for Actor-Critic Policy Gradient}
\label{sec:appendACnoisebias}

Our general approach for actor-critic policy gradient is similar to that of vanilla policy gradient. Once again, we bound the noise and bias --- bounding $\xi_t$ in Lemma \ref{lemma:ACboundedbiasnoise}, $p_t$ in Lemma \ref{lemma:truncboundAC}, and $d_t$ in Lemma \ref{sec:FAbiasboundAC}.

We note that for vanilla policy gradient, the noise term $\xi_{t+1}$ is random due to the sampled trajectory $\tau_t$ whereas the bias term $d_{t+1}$ is deterministic conditioned on $\mathcal{F}_t$. In contrast, for the actor-critic algorithm, $\xi_{t+1}$ depends on two random variables, $\tau_t$ and the averaged critic parameter $\bar{w}_{K,t}$, whereas the bias term depends only on $\bar{w}_{K,t}$. Moreover, $\tau_t$ and $\bar{w}_{K,t}$ are generated from independently sampled trajectories and are therefore independent. We quantify these observations in the following lemma.

\begin{lemma}
\label{lemma:ACboundedbiasnoise}
The gradient noise process $\{\xi_t\}_{t\geq 0}$ satisfies
\begin{equation*}
    \mathbb{E}[\xi_{t+1} | \mathcal{F}_t] = 0
\end{equation*}
In addition, let $\sigma = \frac{GR}{1 - \gamma}$. Then we have the following bounds
\begin{equation*}
    \lVert  \hat{G}(\theta_t; \tau_t) \lVert  \leq \sigma,
\end{equation*}
\begin{equation*}
    \mathbb{E} [\lVert  \xi_{t+1} \lVert ^2 | \mathcal{F}_t] \leq \sigma^2
\end{equation*}
\begin{equation*}
    \mathbb{E} [\lVert  \xi_{t+1} \lVert ^4  | \mathcal{F}_t] \leq 4\sigma^4
\end{equation*}
\begin{equation*}
    \mathbb{E}[\lVert d_{t+1}\rVert^2 \lVert \xi_{t+1} \rVert^2 | \mathcal{F}_{t} ] \leq \sigma^2 \mathbb{E}[\lVert d_{t+1}\rVert^2 | \mathcal{F}_t]
\end{equation*}
\end{lemma}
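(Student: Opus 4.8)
The plan is to exploit the two structural facts noted just before the statement: given $\mathcal{F}_t$, the outer trajectory $\tau_t$ and the averaged critic parameter $\bar{w}_{K,t}$ are independent (they come from separately sampled trajectories, the latter from the inner TD(0) loop of Algorithm \ref{alg:acgradientestimator}), and the bias $d_{t+1} = G_H(\theta_t) - \nabla J(\theta_t)$ is a function of $\bar{w}_{K,t}$ alone, whereas the noise $\xi_{t+1} = \hat{G}^{AC}(\theta_t;\tau_t) - G_H(\theta_t)$ carries the fluctuation due to $\tau_t$. First I would establish the uniform bound $\|\hat{G}^{AC}(\theta_t;\tau_t)\| \le \sigma$. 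Writing $Q_{\bar{w}_{K,t}}(s_h,a_h) = \bar{w}_{K,t}^\intercal \phi(s_h,a_h)$ and applying Cauchy--Schwarz with $\|\phi(s,a)\|^2 \le 1$ (Assumption \ref{ass:fullrank}) and $\|\bar{w}_{K,t}\| \le R$ (since $\bar{w}_{K,t}\in\Theta$ by convexity of the projection set, as used throughout the TD analysis, Assumption \ref{ass:boundR}) gives $|Q_{\bar{w}_{K,t}}(s_h,a_h)| \le R$; combining with $\|\nabla\log\pi_{\theta_t}(a_h|s_h)\| \le G$ (Assumption \ref{ass_1boundedreward}) and summing $\sum_{h}\gamma^h \le 1/(1-\gamma)$ yields $\|\hat{G}^{AC}\| \le GR/(1-\gamma) = \sigma$.

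The zero-mean property and the second- and fourth-moment bounds then follow from one conditioning argument. I would condition on the enlarged information $(\mathcal{F}_t,\bar{w}_{K,t})$, under which the only remaining randomness is $\tau_t$. By the definition $G_H(\theta_t) = \mathbb{E}_{\tau}[\hat{G}^{AC}(\theta_t;\tau)]$ (expectation over $\tau$ with $\bar{w}_{K,t}$ held fixed), we have $\mathbb{E}[\hat{G}^{AC}(\theta_t;\tau_t)\mid \mathcal{F}_t,\bar{w}_{K,t}] = G_H(\theta_t)$, hence $\mathbb{E}[\xi_{t+1}\mid \mathcal{F}_t,\bar{w}_{K,t}] = 0$, and taking a further expectation over $\bar{w}_{K,t}$ gives $\mathbb{E}[\xi_{t+1}\mid\mathcal{F}_t]=0$. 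Since $\hat{G}^{AC}$ is bounded by $\sigma$ for every realization, Lemma \ref{lemma:sigma} applied (conditionally on $(\mathcal{F}_t,\bar{w}_{K,t})$) with $X = \hat{G}^{AC}(\theta_t;\tau_t)$ and mean $G_H(\theta_t)$ yields $\mathbb{E}[\|\xi_{t+1}\|^2\mid\mathcal{F}_t,\bar{w}_{K,t}] \le \sigma^2$ and $\mathbb{E}[\|\xi_{t+1}\|^4\mid\mathcal{F}_t,\bar{w}_{K,t}] \le 4\sigma^4$, and the tower property removes the conditioning on $\bar{w}_{K,t}$.

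The cross-moment bound is the one genuinely new piece, and is where I would take the most care. The idea is again to condition on $(\mathcal{F}_t,\bar{w}_{K,t})$: because $d_{t+1}$ depends only on $\bar{w}_{K,t}$, it is deterministic under this conditioning, so $\|d_{t+1}\|^2$ factors out of the inner expectation. Concretely,
$$\mathbb{E}[\|d_{t+1}\|^2\|\xi_{t+1}\|^2 \mid \mathcal{F}_t] = \mathbb{E}\big[\|d_{t+1}\|^2\, \mathbb{E}[\|\xi_{t+1}\|^2 \mid \mathcal{F}_t,\bar{w}_{K,t}] \,\big|\, \mathcal{F}_t\big] \le \sigma^2\, \mathbb{E}[\|d_{t+1}\|^2 \mid \mathcal{F}_t],$$
where the inequality inserts the already-established inner bound $\mathbb{E}[\|\xi_{t+1}\|^2\mid\mathcal{F}_t,\bar{w}_{K,t}]\le\sigma^2$.

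The main obstacle is not any single calculation but the bookkeeping of the two independent sources of randomness: one must verify that $G_H(\theta_t)$ is the conditional mean of $\hat{G}^{AC}$ over $\tau_t$ \emph{with $\bar{w}_{K,t}$ frozen}, rather than a full conditional mean over both, so that $\xi_{t+1}$ is genuinely zero-mean at the inner level while $d_{t+1}$ is measurable there. Once this measurability structure is pinned down, every bound reduces to the uniform estimate $\|\hat{G}^{AC}\| \le \sigma$ together with Lemma \ref{lemma:sigma}, applied conditionally and then lifted by the tower property.
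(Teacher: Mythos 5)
Your proposal is correct and follows essentially the same route as the paper's own proof: bound $\lVert\hat{G}^{AC}\rVert$ uniformly by $GR/(1-\gamma)$ using $\lVert\phi\rVert\le 1$ and $\lVert\bar{w}_{K,t}\rVert\le R$, then condition on $\bar{w}_{K,t}$ so that $\tau_t$ is the only remaining randomness, apply Lemma \ref{lemma:sigma} conditionally, and lift by the tower property, with the cross-moment bound obtained by factoring the $\bar{w}_{K,t}$-measurable quantity $\lVert d_{t+1}\rVert^2$ out of the inner expectation. The measurability bookkeeping you emphasize is exactly the point the paper makes, so there is nothing to add.
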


\begin{proof}
As discussed earlier, $\xi_{t+1}$ depends on two random independent variables: $\tau_t$ and $\bar{w}_{K, t}$. We can overload notation and denote the gradient estimator as $\hat{G}(\theta_t ; \tau_t ; \bar{w}_{K, t})$. We therefore obtain
\begin{equation*}
    \begin{aligned}
        \mathbb{E}[\xi_{t+1} | \mathcal{F}_t] &= \mathbb{E}[\hat{G}(\theta_t ; \tau_t ; \bar{w}_{K, t}) - \mathbb{E}_{\tau} [\hat{G}(\theta_t ; \tau_t ; \bar{w}_{K, t})] | \mathcal{F}_t]\\
        &= \mathbb{E}[ \mathbb{E}[\hat{G}(\theta_t ; \tau_t ; \bar{w}_{K, t}) - \mathbb{E}_{\tau} [\hat{G}(\theta_t ; \tau_t ; \bar{w}_{K, t})] | \bar{w}_{K, t}] | \mathcal{F}_t] \\
        &= \mathbb{E}[ \mathbb{E}_\tau[\hat{G}(\theta_t ; \tau_t ; \bar{w}_{K, t})] - \mathbb{E}_{\tau} [\hat{G}(\theta_t ; \tau_t ; \bar{w}_{K, t})] | \mathcal{F}_t] = 0
    \end{aligned}
\end{equation*}
Now we want to show the bound on $\hat{G}(\theta_t ; \tau_t)$.
\begin{equation*}
    \begin{aligned}
        \lVert  \hat{G}(\theta_t ; \tau_t ; \bar{w}_{K, t})\lVert  &= \lVert \sum_{j = 0}^H \gamma^j Q_{\bar{w}_{K, t}} (s_j, a_j) \nabla \log \pi_{\theta_{t}} (a_j | s_j)\lVert  \\
        &= \lVert \sum_{j = 0}^H \gamma^j \bar{w}_{K, t} ^\intercal \phi (s_j, a_j) \nabla \log \pi_{\theta_{t}} (a_j | s_j)\lVert  \leq \frac{RG}{1 - \gamma}
    \end{aligned}
\end{equation*}
By Lemma \ref{lemma:sigma}
\begin{equation*}
    \begin{aligned}
        \mathbb{E}[\lVert \xi_{t+1} \lVert ^2 | | \bar{w}_{K, t}] &= \mathbb{E}_\tau[\lVert \xi_{t+1} \lVert ^2] \leq \sigma^2 \\ 
        \mathbb{E}[\lVert \xi_{t+1} \lVert ^2 | \mathcal{F}_t] &= \mathbb{E}[\mathbb{E}[\lVert \xi_{t+1} \lVert ^2 | \bar{w}_{K, t}] | \mathcal{F}_t] \leq \sigma^2 \\
        \mathbb{E}[\lVert \xi_{t+1} \lVert ^4 | \mathcal{F}_t] &\leq 4\sigma^2
    \end{aligned}
\end{equation*}
Finally we have
\begin{equation*}
    \begin{aligned}
        \mathbb{E}[\lVert d_{t+1}\rVert^2 \lVert \xi_{t+1} \rVert^2 | \mathcal{F}_{t} ] &= \mathbb{E}[ \mathbb{E}[\lVert d_{t+1}\rVert^2 \lVert \xi_{t+1} \rVert^2 | \bar{w}_{K, t} ] | \mathcal{F}_{t} ] \\
        &= \mathbb{E}[ \lVert d_{t+1}\rVert^2 \mathbb{E}[ \lVert \xi_{t+1} \rVert^2 | \bar{w}_{K, t} ] | \mathcal{F}_{t} ] \\
        &\leq \mathbb{E}[ \lVert d_{t+1}\rVert^2 | \mathcal{F}_{t} ]  \sigma^2
    \end{aligned}
\end{equation*}
\end{proof}

In the following lemma, we bound the bias in the actor-critic gradient estimator due to truncation of the infinite horizon. The approach is similar to the proof of Lemma \ref{lemma:boundedbias}.

\begin{lemma}
\label{lemma:truncboundAC}
For $D_p = \frac{G R}{(1 - \gamma)}$ and $H \geq \frac{\log \mu}{\log \gamma}$, the truncation bias is deterministically bounded such that
$$\lVert p_{t+1}\lVert  \leq D_p\mu $$
$$\lVert p_{t+1}\lVert ^4 \leq D_p^4\mu^4 .$$
\end{lemma}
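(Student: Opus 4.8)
The plan is to exploit the fact that $p_{t+1} = G_H(\theta_t) - G_\infty(\theta_t)$ is precisely the negative tail of the geometric series defining the infinite-horizon gradient estimator, so that the bound reduces to summing a geometric series, exactly as in the proof of Lemma \ref{lemma:boundedbias}. First I would write
\[
p_{t+1} = G_H(\theta_t) - G_\infty(\theta_t) = -\,\mathbb{E}_{\tau}\Big[\sum_{k = H+1}^\infty \gamma^k \nabla \log \pi_{\theta_t}(a_k | s_k)\, Q_{\bar{w}_{K,t}}(s_k, a_k)\Big],
\]
and then apply the triangle inequality, pulling the norm inside both the expectation and the sum.

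The key per-term estimate combines two bounds: the score-function bound $\lVert \nabla \log \pi_{\theta}(a | s) \rVert \leq G$ from Assumption \ref{ass_1boundedreward}, and the uniform bound $|Q_w(s,a)| = |w^\intercal \phi(s,a)| \leq \lVert w \rVert\, \lVert \phi(s,a) \rVert \leq R$, which follows from $\lVert w \rVert \leq R$ (Assumption \ref{ass:boundR}, as already used in Lemma \ref{lemma:6and10}) together with $\lVert \phi(s,a) \rVert^2 \leq 1$ (Assumption \ref{ass:fullrank}). Each summand is therefore bounded by $\gamma^k G R$, uniformly over the trajectory and over $\bar{w}_{K,t} \in \Theta$, giving
\[
\lVert p_{t+1} \rVert \leq G R \sum_{k = H+1}^\infty \gamma^k = \frac{G R\, \gamma^{H+1}}{1 - \gamma} \leq \frac{G R}{1 - \gamma}\, \gamma^H = D_p\, \gamma^H,
\]
with $D_p = \frac{G R}{1 - \gamma}$.

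Finally, to convert the $\gamma^H$ decay into the claimed $\mu$-scaling, I would choose $H$ so that $\gamma^H \leq \mu$; taking logarithms and recalling $\log \gamma < 0$, this is exactly $H \geq \frac{\log \mu}{\log \gamma}$, yielding $\lVert p_{t+1} \rVert \leq D_p \mu$. The fourth-power bound $\lVert p_{t+1} \rVert^4 \leq D_p^4 \mu^4$ then follows immediately by raising this deterministic inequality to the fourth power.

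I expect no serious obstacle here, as the argument is a direct adaptation of Lemma \ref{lemma:boundedbias}. The only point requiring care is that the bound must hold uniformly in the random critic parameter $\bar{w}_{K,t}$; this is handled by noting that $\bar{w}_{K,t} = \frac{1}{K}\sum_{k=0}^{K-1} w_k$ is a convex combination of the projected iterates $w_k \in \Theta$ and hence itself lies in the convex set $\Theta$, so the estimate $|Q_{\bar{w}_{K,t}}| \leq R$ holds deterministically. Unlike the vanilla case, the actor-critic estimator carries a single discount weight $\gamma^k$ per term rather than a nested reward-to-go sum, so no additional $(\tfrac{1}{1-\gamma} + H)^{1/2}$ polynomial factor appears and the tail collapses to a clean geometric series.
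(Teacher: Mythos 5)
Your proposal is correct and follows essentially the same route as the paper's proof: identify $p_{t+1}$ with the tail of the geometric series, bound each summand by $\gamma^k G R$ using $\lVert \nabla \log \pi_\theta \rVert \le G$ and $|Q_{\bar{w}_{K,t}}(s,a)| = |\bar{w}_{K,t}^\intercal \phi(s,a)| \le R$, sum the series to get $D_p \gamma^H$, and choose $H \ge \frac{\log\mu}{\log\gamma}$. Your added remarks (that $\bar{w}_{K,t}\in\Theta$ by convexity so the bound is uniform in the critic parameter, and that no $(\tfrac{1}{1-\gamma}+H)^{1/2}$ factor arises here unlike in Lemma \ref{lemma:boundedbias}) are correct refinements of what the paper states more tersely.
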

\begin{proof}
\begin{equation*}
    \begin{aligned}
        \lVert p_{t+1}\lVert  &= \lVert G_H(\theta_t) - G_{\infty}(\theta_t)\lVert  \\
        &= \lVert \mathbb{E}[\sum_{j = 0}^H \gamma^j Q_{\bar{w}_{K,t}} (s_j, a_j) \nabla \log \pi_{\theta_{t}} (a_j | s_j)] - \mathbb{E}[\sum_{j = 0}^\infty \gamma^j Q_{\bar{w}_{K, t}} (s_j, a_j) \nabla \log \pi_{\theta_{t}} (a_j | s_j)] \lVert \\
        &= \lVert \mathbb{E}[\sum_{j = H}^\infty \gamma^j Q_{\bar{w}_{K, t}} (s_j, a_j) \nabla \log \pi_{\theta_{t}} (a_j | s_j)] \lVert  \\
        &= \lVert \mathbb{E}[\sum_{j = H}^\infty \gamma^j \bar{w}_{K,t}^\intercal \phi(s_j, a_j) \nabla \log \pi_{\theta_{t}} (a_j | s_j)] \lVert  \\
        &\leq R G  \sum_{j = H}^\infty \gamma^j \leq \frac{G R}{(1 - \gamma)}  \gamma^H
    \end{aligned}
\end{equation*}
\end{proof}

\subsection{Proof of Lemma \ref{lemma:FAbiasboundAC}}
\label{sec:FAbiasboundAC}

Although Theorem \ref{theorem:bhandarialt} establishes the convergence of TD(0) in terms of $Q_{\bar{w}_K}$ under constant step sizes, we actually require a stronger convergence result showing the direct convergence of $\bar{w}_K$ to $w^*$ in order to bound the critic approximation bias. In the following proofs, we use core results proved in Appendix \ref{appendix:td(0)} to prove an alternate fourth-moment bound under diminishing step sizes. These convergence results are formalized in Lemma \ref{lemma:fourthmomentAC}.

\begin{lemma}
\label{lemma:fourthmomentAC}
    Suppose $\bar{w}_K$ is generated by $K$ steps of the Projected TD(0) algorithm with $w^* \in \Theta$ and diminishing step-size $\alpha_t~=~\frac{1}{(t + 1) \varsigma}$. Then
$$\mathbb{E}[\lVert w^* - \bar{w}_{K}\lVert ^4] \leq \frac{\log^2 K}{K} \cdot \left(\frac{192 F^2 R^2}{\varsigma^2 \log^2(r^{-1})} + O(\frac{1}{\log K}) + O(\frac{1}{\log^2 K})\right) $$
\end{lemma}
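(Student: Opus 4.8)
The plan is to convert the second-order recursion of Lemma~\ref{lemma:bhand_8} into a strongly monotone contraction of the squared error and then lift the resulting rate to the fourth moment of the Polyak average. Write $u_t = \lVert w_t - w^*\rVert^2$. Retracing the proof of Lemma~\ref{lemma:bhand_8}, the negative term $-2\alpha_t(1-\gamma)\lVert Q_{w^*}-Q_{w_t}\rVert_{\eta_{\pi}}^2$ originates from $2\alpha_t\,\bar{g}(w_t)^\intercal(w_t-w^*)$, and since $\bar{g}$ is affine with fixed point $w^*$ one has $\bar{g}(w_t) = -A_\theta(w_t-w^*)$. Rather than lower-bounding the quadratic form by the projected Bellman error, I would invoke Assumption~\ref{ass:boundedeigenvalue}: $(w_t-w^*)^\intercal A_\theta(w_t-w^*) = \tfrac12(w_t-w^*)^\intercal(A_\theta+A_\theta^\intercal)(w_t-w^*) \ge \tfrac{\varsigma}{2}u_t$. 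With $\lVert g_t\rVert \le F$ from Lemma~\ref{lemma:6and10}, this yields the recursion $u_{t+1} \le (1-\varsigma\alpha_t)u_t + 2\alpha_t\zeta_t(w_t) + \alpha_t^2 F^2$, where $\zeta_t$ is the Markov-sampling bias.

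Next, with $\alpha_t = \frac{1}{(t+1)\varsigma}$ we have $1-\varsigma\alpha_t = \frac{t}{t+1}$, so multiplying by $(t+1)$ telescopes the homogeneous part; taking expectations gives $k\,\mathbb{E}[u_k] \le \frac{2}{\varsigma}\sum_{t=0}^{k-1}\mathbb{E}[\zeta_t(w_t)] + \frac{F^2}{\varsigma^2}H_k$ with $H_k=\sum_{j=1}^{k}1/j$. The bias sum is controlled by Lemma~\ref{lemma:modified11}, which applies to the non-increasing sequence $\{\alpha_t\}$ with $\tau_0 = \tau^{\text{\tiny mix}}(\alpha_{K-1}) = \tau^{\text{\tiny mix}}(\frac{1}{K\varsigma})$: the geometric term sums to $O(1)$, while $\sum_t \alpha_{\max(0,t-\tau_0)} \le \frac{\tau_0+1+H_k}{\varsigma}$, and multiplying by the factor $(8+6\tau_0)$ isolates a leading $\tau_0^2$ contribution. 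Using the exponential-mixing estimate $\tau_0 \le \frac{\log(mK\varsigma)}{\log(r^{-1})} = \frac{\log K}{\log(r^{-1})}(1+o(1))$, this produces a per-iterate second-moment bound $\mathbb{E}[u_k] = O\big(\frac{\log^2 K}{k\,\varsigma^2\log^2(r^{-1})}\big)$, which already exhibits the $\log^2 K/\log^2(r^{-1})$ structure and the numerical constant of the statement.

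Finally I would pass to the averaged iterate and the fourth power. Since $w^*,w_t\in\Theta$ with diameter at most $R$, every iterate obeys $u_t\le R^2$, hence $u_t^2 \le R^2 u_t$; and by convexity of $\lVert\cdot\rVert^4$ with $\bar{w}_K = \frac1K\sum_t w_t$, we get $\mathbb{E}[\lVert w^*-\bar{w}_K\rVert^4] \le \frac1K\sum_{t=0}^{K-1}\mathbb{E}[u_t^2] \le \frac{R^2}{K}\sum_{t=0}^{K-1}\mathbb{E}[u_t]$, after which substituting the per-iterate bound and collecting the dominant $\tau_0^2$ terms should give the constant $\frac{192 F^2 R^2}{\varsigma^2\log^2(r^{-1})}$ together with the $O(1/\log K)$ and $O(1/\log^2 K)$ corrections coming from the $\tau_0$-versus-$H_k$ cross terms and the geometric remainder. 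The hard part will be this last step: naively chaining $u_t^2\le R^2u_t$ and then averaging a $\Theta(1/t)$ second-moment bound risks producing an extra logarithmic factor (degrading $\log^2 K$ to $\log^3 K$ through $\frac1K\sum_t \frac1t \sim \frac{\log K}{K}$), so securing the stated $\log^2 K$ rate requires a tighter argument — most plausibly a genuine fourth-moment recursion obtained by squaring the per-step inequality and bounding the cross term $\mathbb{E}[u_t\,\zeta_t(w_t)]$ by a mixing argument in the spirit of Lemma~\ref{lemma:modified11} (rather than the crude $\lvert\zeta_t\rvert\le 2F^2$), so that $\mathbb{E}[u_t^2]$ decays fast enough to make $\frac1K\sum_t\mathbb{E}[u_t^2]=O(\log^2 K/K)$. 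Verifying that the $\tau_0$-dependent correction terms are genuinely lower order, and pinning down the exact constant, is where most of the bookkeeping will lie.
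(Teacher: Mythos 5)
Your second-moment preliminaries are sound and in fact coincide with the paper's own machinery: the strong-monotonicity step via Assumption~\ref{ass:boundedeigenvalue} is exactly Lemma~\ref{lemma:helperTD}, and the bias sum $\sum_t \mathbb{E}[\zeta_t(w_t)]$ is controlled by Lemma~\ref{lemma:modified11} with $\tau_0 \leq \frac{\log(m\varsigma K)}{\log(r^{-1})}$, just as in the paper. The problem is the final step, and you diagnose it correctly yourself: chaining $u_t^2 \leq R^2 u_t$ with the per-iterate bound $\mathbb{E}[u_t] = O(\log^2 K / t)$ and then averaging gives $\frac{R^2}{K}\sum_t \mathbb{E}[u_t] = O(R^2 \log^3 K / K)$, one logarithmic factor worse than the claim. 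So the route you actually carry out does not prove the lemma.

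The repair you gesture at in your last sentence --- square the per-step inequality to get a genuine fourth-moment recursion --- is precisely what the paper does, but you leave it unexecuted and you misattribute where the gain comes from. It is not that the cross term $\mathbb{E}[u_t\,\zeta_t(w_t)]$ needs a finer mixing estimate: the paper bounds it crudely as $O(R^2)\cdot\mathbb{E}[|\zeta_t(w_t)|]$ and then applies Lemma~\ref{lemma:modified11}, exactly as you already do at second order. The gain comes from the structure of the recursion. Squaring doubles the linear coefficient, yielding $\mathbb{E}[u_{t+1}^2] \leq (1 - 2\varsigma\alpha_t)\,\mathbb{E}[u_t^2] + e_t$ with $e_t = O(\alpha_t R^2 |\zeta_t(w_t)|) + O(\alpha_t^2)$. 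Peeling off one factor $\varsigma\alpha_t u_t^2$ and dividing by $\varsigma\alpha_t = \frac{1}{t+1}$ gives $\mathbb{E}[u_t^2] \leq t\,\mathbb{E}[u_t^2] - (t+1)\,\mathbb{E}[u_{t+1}^2] + (t+1)e_t$, whose right-hand side telescopes when summed over $t = 0,\dots,K-1$. This bounds the time average $\frac{1}{K}\sum_t \mathbb{E}[u_t^2]$ \emph{directly} by $\frac{1}{K}\sum_t (t+1)e_t = O\bigl(\frac{R^2}{\varsigma K}\sum_t \mathbb{E}[|\zeta_t(w_t)|]\bigr) + O\bigl(\frac{1}{K}\sum_t \frac{1}{t+1}\bigr) = O(\log^2 K / K)$, with no intermediate per-iterate bound and hence no extra $\log K$ from summing $1/t$. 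Your second-moment recursion, whose contraction factor is only $(1 - \varsigma\alpha_t)$, cannot be telescoped this way with the same step size --- it only yields the endpoint bound $\mathbb{E}[u_K] = O(\log^2 K / K)$ --- which is exactly why your averaging step leaks a logarithm. Carrying out the squared recursion with the constants of Lemmas~\ref{lemma:6and10} and~\ref{lemma:modified11} is then pure bookkeeping and produces the stated $\frac{192 F^2 R^2}{\varsigma^2 \log^2(r^{-1})}$.
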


\textit{Proof:} See Appendix \ref{sec:fourthmomentAC}

Lemma \ref{lemma:finalfourthmomentACmu} follows from Lemma \ref{lemma:fourthmomentAC}. 

\begin{lemma}
\label{lemma:finalfourthmomentACmu}
    For $K = O(\frac{\log^2(\mu^{-4})}{\mu^4})$ as $\mu \to 0$ iterations of Algorithm \ref{alg:acgradientestimator} and
$$D = \left(\frac{192 F^2 R^2}{\varsigma^2 \log^2(r^{-1})} + O(\frac{1}{\log \mu^{-4}}) \right)^{-1/4}$$
we have
$$\mathbb{E}[\lVert w^* - \bar{w}_{K}\lVert ^4] \leq D^4 \mu^4 $$
\end{lemma}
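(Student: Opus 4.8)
Lemma~\ref{lemma:finalfourthmomentACmu} is a corollary of Lemma~\ref{lemma:fourthmomentAC}, and I would prove it by substituting the prescribed inner-loop length $K = O(\log^2(\mu^{-4})/\mu^4)$ into the fourth-moment bound and performing an asymptotic expansion as $\mu \to 0$. Abbreviate the leading constant by $B = \frac{192 F^2 R^2}{\varsigma^2 \log^2(r^{-1})}$, so that Lemma~\ref{lemma:fourthmomentAC} reads $\mathbb{E}[\lVert w^* - \bar{w}_K \rVert^4] \leq \frac{\log^2 K}{K}\big(B + O(\frac{1}{\log K}) + O(\frac{1}{\log^2 K})\big)$. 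The entire task reduces to showing that, for this schedule, the prefactor $\frac{\log^2 K}{K}$ is $\Theta(\mu^4)$ with a controllable $(1+o(1))$ factor, and that the bracketed residuals collapse into the single $O(\frac{1}{\log \mu^{-4}})$ correction that appears inside $D$.

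First I would expand $\log K$. From $K = \Theta(\log^2(\mu^{-4})/\mu^4)$ we get $\log K = \log(\mu^{-4}) + 2\log\log(\mu^{-4}) + O(1) = \log(\mu^{-4})(1 + o(1))$, since $\log\log(\mu^{-4})/\log(\mu^{-4}) \to 0$. This gives two things at once: $\frac{1}{\log K} = O(\frac{1}{\log \mu^{-4}})$, which rewrites the residuals of Lemma~\ref{lemma:fourthmomentAC} in the form used by $D$; and $\log^2 K = \log^2(\mu^{-4})(1+o(1))$. The decisive step is then to fix the implied constant in $K$: taking $K = \Theta(B^2 \log^2(\mu^{-4})/\mu^4)$ yields $\frac{\log^2 K}{K} = \frac{\mu^4}{B^2}(1+o(1))$. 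Multiplying this prefactor against the bracket $B + O(\frac{1}{\log\mu^{-4}})$ produces $\mathbb{E}[\lVert w^* - \bar{w}_K \rVert^4] \leq \frac{\mu^4}{B}(1 + o(1)) = \mu^4\big(B + O(\frac{1}{\log\mu^{-4}})\big)^{-1}$, which is exactly $D^4 \mu^4$ for $D = (B + O(\frac{1}{\log\mu^{-4}}))^{-1/4}$. The final bound of Lemma~\ref{lemma:finalfourthmomentACmu} (and likewise $D_q^4\mu^4$ in Lemma~\ref{lemma:FAbiasboundAC}, after the extra factor $G^4$) then follows.

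I expect the main obstacle to be the implicit, self-referential nature of the schedule: $K$ is chosen so that $\frac{\log^2 K}{K} \approx \frac{\mu^4}{B^2}$, but $K$ itself enters both numerator and denominator through $\log K$, so this is a Lambert-$W$-type relation rather than a closed-form substitution. The clean way to resolve it is the same dominant-balance/asymptotic-expansion argument used in the proof of Lemma~\ref{lemma:biasmu}: posit $\log K \sim \log(\mu^{-4})$ to leading order, then verify that the neglected $\log\log(\mu^{-4})$ corrections are genuinely lower order and can be absorbed into the $O(\frac{1}{\log\mu^{-4}})$ term of $D$. A secondary subtlety worth flagging is that the implied constant in $K = O(\log^2(\mu^{-4})/\mu^4)$ must scale like $B^2$ for the reciprocal form of $D$ to emerge; keeping careful track of this constant is precisely what turns the bracketed $B$ in Lemma~\ref{lemma:fourthmomentAC} into the $B^{-1}$ sitting inside $D^4$.
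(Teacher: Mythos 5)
Your reduction to Lemma~\ref{lemma:fourthmomentAC} and the dominant-balance treatment of $\log K$ are exactly the paper's route: the paper sets $\epsilon=\mu^2$, asks for $K$ with $\log K \leq \epsilon\sqrt{K}$ (equivalently $\log^2 K / K \leq \mu^4$), and solves $\log K = \epsilon\sqrt{K}$ asymptotically via the ansatz $K = k_1/\epsilon^2$ to obtain $k_1 \approx \log^2(\epsilon^{-2})$, i.e.\ $K = O(\log^2(\mu^{-4})/\mu^4)$. Up to that point your plan and the paper coincide.

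Where you diverge is the last step, and that step has a genuine problem. Writing $B=\frac{192F^2R^2}{\varsigma^2\log^2(r^{-1})}$ as you do, you tune the implied constant in $K$ to scale like $B^2$ in order to manufacture the reciprocal $B^{-1}$ demanded by the stated exponent $-1/4$. This contradicts the companion statement in Lemma~\ref{lemma:FAbiasboundAC} that the $O(\cdot)$ in $K$ does not hide dependencies on other constants, and it also does not quite close: with $K=\Theta(B^2\log^2(\mu^{-4})/\mu^4)$ one has $\log^2 K = \log^2(\mu^{-4})\bigl(1+O(\log\log(\mu^{-4})/\log(\mu^{-4}))\bigr)$, and that $\log\log/\log$ correction is strictly larger than the $O(1/\log\mu^{-4})$ slack available inside $D$, so your claimed identity with $D^4\mu^4$ does not follow. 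The paper's own proof never produces a reciprocal at all: it simply guarantees $\log^2 K/K\leq \mu^4$ for the stated schedule and multiplies by the bracket $B+O(1/\log K)$, which yields $D^4 = B + O(1/\log\mu^{-4})$, i.e.\ $D=\bigl(B+O(1/\log\mu^{-4})\bigr)^{1/4}$ with a \emph{positive} fourth root. The $-1/4$ in the lemma statement is best read as a sign typo; this reading is also what the use of $D_q$ in Theorem~\ref{theorem:acepsilon} requires, since a worse critic-approximation constant should shrink the admissible step size $\mu$, not enlarge it. If you drop the $B^2$ tuning and instead just verify the inequality $\log^2 K/K\leq\mu^4$ for the prescribed $K$, your argument collapses to the paper's and is correct.
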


\textit{Proof.} See Appendix \ref{sec:whatdoievencallthese}.

Now we can proceed with the proof of Lemma \ref{lemma:FAbiasboundAC}.
\begin{proof}
We can characterize the bias by the quality of the critic approximation achieved by the TD(0) algorithm. 
 We can "roll up" the temporal summation as follows
\begin{equation*}
    \begin{aligned}
        q_{t+1} &= G_\infty - \nabla J(\theta_t) \\
        &= \mathbb{E}_{\pi, \rho_0} [\sum_{k = 0}^\infty \gamma^k Q_{\bar{w}_{K,t}}(s_k, a_k) \nabla \log \pi_{\theta} (a_k | s_k)] - \nabla J(\theta) \\
    &= \mathbb{E}_{\pi, \rho_0} [\sum_{k = 0}^\infty \gamma^k Q_{\bar{w}_{K, t}}(s_k, a_k) \nabla \log \pi_{\theta} (a_k | s_k)] - \mathbb{E}_{\pi, \rho_0} [\sum_{k = 0}^\infty \gamma^k \nabla \log \pi(a_k|s_k) Q_\gamma^\pi (s_k, a_k)] \\
    &= \mathbb{E}_{\pi, \rho_0} [\sum_{k = 0}^\infty \gamma^k  \nabla \log \pi_{\theta} (a_k | s_k) (Q_{\bar{w}_{K, t}}(s_k, a_k) - Q_\gamma^\pi (s_k, a_k))]  \\
   &= \sum_{s \in \mathcal{S}} \sum_{a \in \mathcal{A}} \sum_{k = 0}^\infty \mathbb{P} (s_k = s | s_0) \pi(a | s) \gamma^k  \nabla \log \pi_{\theta} (a | s) (Q_{\bar{w}_{K, t}}(s, a) - Q_\gamma^\pi (s, a)) \\
   &= \sum_{s \in \mathcal{S}} \sum_{a \in \mathcal{A}}  d_\gamma^\pi(s) \pi(a | s)  \nabla \log \pi_{\theta} (a | s) (Q_{\bar{w}_{K, t}}(s, a) - Q_\gamma^\pi (s, a)), \\
    \end{aligned}
\end{equation*}
where $d^\pi_\gamma(s) = \sum_{k = 0}^\infty \gamma^k \mathbb{P} (s_k = s | s_0)$ denotes the discounted state visitation measure. Since we have
\begin{equation*}
\sum_{s \in \mathcal{S}} \sum_{a \in \mathcal{A}}  d_\gamma^\pi(s) \pi(a | s) = \sum_{s \in \mathcal{S}}  d_\gamma^\pi(s) = \frac{1}{1 - \gamma},
\end{equation*}
then by Jensen's inequality, we can obtain the following bound 
\begin{equation*}
    \begin{aligned}
        \lVert q_{t+1}\lVert ^4 &\leq (1 - \gamma) \sum_{s \in \mathcal{S}} \sum_{a \in \mathcal{A}}  d_\gamma^\pi(s) \pi(a | s)  \lVert \nabla \log \pi_{\theta} (a | s) (Q_{\bar{w}_{K, t}}(s, a) - Q_\gamma^\pi (s, a))\lVert ^4 \\
        &\leq (1 - \gamma)  \sum_{s \in \mathcal{S}} \sum_{a \in \mathcal{A}}  d_\gamma^\pi(s) \pi(a | s)  G^4 \lVert Q_{\bar{w}_{K, t}}(s, a) - Q_\gamma^\pi (s, a)\lVert ^4 \\
        &= (1 - \gamma)  \sum_{s \in \mathcal{S}} \sum_{a \in \mathcal{A}}  d_\gamma^\pi(s) \pi(a | s)  G^4 \lVert \bar{w}_{K, t}^{\intercal} \phi(s, a) - {w^*}^{\intercal} \phi(s, a)\lVert ^4 \\
        &\leq (1 - \gamma)  \sum_{s \in \mathcal{S}} \sum_{a \in \mathcal{A}}  d_\gamma^\pi(s) \pi(a | s)  G^4 \lVert \phi(s, a)\lVert ^4\cdot\lVert \bar{w}_{K, t}  - {w^*}\lVert ^4 \\
        &\leq (1 - \gamma)  \sum_{s \in \mathcal{S}} \sum_{a \in \mathcal{A}}  d_\gamma^\pi(s) \pi(a | s)  G^4 \cdot\lVert \bar{w}_{K, t}  - {w^*}\lVert ^4 \\
        &\leq G^4 \cdot\lVert \bar{w}_{K, t}  - {w^*}\lVert ^4. \\
    \end{aligned}
\end{equation*}
Now we take expectation on both sides with respect to the TD(0) algorithm that occurs between timestep $t$ and $t+1$ to obtain
\begin{equation*}
    \mathbb{E}[\lVert q_{t+1}\lVert ^4] \leq   G^4 \cdot \mathbb{E}[\lVert \bar{w}_{K, t}  - {w^*}\lVert ^4].
\end{equation*}
Finally, we can apply Lemma \ref{lemma:finalfourthmomentACmu} to achieve the final result.
\end{proof}

\subsection{Proof of Key Lemmas}
\subsubsection{Proof of Lemma \ref{lemma:fourthmomentAC}}
\label{sec:fourthmomentAC}
We first require the following auxiliary lemma.

\begin{lemma}
    \label{lemma:helperTD}
    Recall the definition of $A_\theta = \mathbb{E} _ {(s, a) \sim \eta_\theta, (s', a') \sim P(s, a, \cdot)}[\phi(s, a) (\phi(s, a) - \gamma \phi(s', a'))^{\intercal}]$. Then 
$$(w^* - w_t)^\intercal\bar{g}(w_t) \geq \frac{1}{2} \lambda_{min}(A_\theta + A_\theta^T) \lVert w^* - w_t\lVert ^2 \geq \frac{1}{2} \varsigma \lVert w_t - w^*\lVert ^2$$
where $\varsigma$ is defined in Assumption \ref{ass:boundedeigenvalue}.
\end{lemma}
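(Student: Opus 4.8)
The plan is to show that the expected semi-gradient $\bar{g}$ is an affine function of $w$ whose linear part is exactly $-A_\theta$, and that it vanishes at the limit point $w^*$, so that $\bar{g}(w_t)$ collapses to $A_\theta(w^* - w_t)$. The two inequalities then follow from a symmetrization argument together with Assumption~\ref{ass:boundedeigenvalue}.

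First I would split the definition of $\bar{g}(w)$ into the reward term and the term linear in $w$. Writing $b = \mathbb{E}_{\eta_\pi}[\mathcal{R}(s,a)\phi(s,a)]$ and recalling $A_\theta = \mathbb{E}[\phi(s,a)(\phi(s,a) - \gamma\phi(s',a'))^\intercal]$, the definition of $\bar{g}$ gives $\bar{g}(w) = b - A_\theta w$, since after taking the expectation the factor $(\gamma\phi(s',a') - \phi(s,a))^\intercal w$ multiplying $\phi(s,a)$ is precisely $-A_\theta w$.

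Next I would use that $w^*$ is the TD(0) limit point. By Theorem~\ref{theorem:existunique}, $w^*$ is the unique solution of the projected Bellman equation $\Phi w = \mathrm{Proj}_\Phi(\mathbb{T}^\pi \Phi w)$. Expressing the projection in the $\eta_\pi$-weighted inner product and imposing orthogonality of the residual $\mathbb{T}^\pi \Phi w^* - \Phi w^*$ to the column space of $\Phi$ yields exactly $A_\theta w^* = b$, which is the standard Tsitsiklis--Van Roy characterization of the linear TD fixed point. Equivalently, $\bar{g}(w^*) = 0$. Substituting into the previous step gives $\bar{g}(w_t) = b - A_\theta w_t = A_\theta w^* - A_\theta w_t = A_\theta(w^* - w_t)$.

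Finally, I would form the inner product and symmetrize:
$$(w^* - w_t)^\intercal \bar{g}(w_t) = (w^* - w_t)^\intercal A_\theta (w^* - w_t) = \tfrac{1}{2}(w^* - w_t)^\intercal (A_\theta + A_\theta^\intercal)(w^* - w_t),$$
since the skew-symmetric part of $A_\theta$ contributes nothing to the quadratic form. Bounding this Rayleigh quotient below by $\tfrac{1}{2}\lambda_{min}(A_\theta + A_\theta^\intercal)\lVert w^* - w_t\rVert^2$ and then invoking Assumption~\ref{ass:boundedeigenvalue} delivers the claimed chain of inequalities. The only nonroutine point is the second step --- verifying that the projected Bellman fixed point coincides with the root of $\bar{g}$, i.e.\ that $A_\theta w^* = b$ --- but this is exactly the content of the projection characterization underlying Theorem~\ref{theorem:existunique}, so no work beyond unwinding definitions is required.
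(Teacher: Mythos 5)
Your proposal is correct and follows essentially the same route as the paper: both reduce $(w^*-w_t)^\intercal\bar{g}(w_t)$ to the quadratic form $(w^*-w_t)^\intercal A_\theta(w^*-w_t)$ using the affine structure of $\bar{g}$ and the fact that $\bar{g}(w^*)=0$, then symmetrize and apply Assumption~\ref{ass:boundedeigenvalue}. The only difference is presentational --- you justify $\bar{g}(w^*)=0$ explicitly via the projected Bellman equation of Theorem~\ref{theorem:existunique}, whereas the paper invokes it implicitly by writing $\bar{g}(w_t)=\bar{g}(w_t)-\bar{g}(w^*)$ and cancelling the reward terms.
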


\begin{proof}
\begin{equation*}
    \begin{aligned}
        (w^* - w_t)^\intercal\bar{g}(w_t) &= (w^* - w_t)^\intercal(\bar{g}(w_t) - \bar{g}(w^*)) \\
         &= (w^* - w_t)^\intercal\mathbb{E}[(\gamma \phi(s', a')^\intercal w_t - \phi (s, a)^\intercal w_t - \gamma \phi(s', a')^\intercal w^* - \phi (s, a)^\intercal w^*) \phi(s, a)]\\
         &= (w^* - w_t)^\intercal\mathbb{E}[(\gamma \phi(s', a') - \phi (s, a))^\intercal  (w_t - w^*) \phi(s, a)] \\
         &= (w^* - w_t)^\intercal A_\theta (w^* - w_t) \\
         &= \frac{1}{2} (w^* - w_t)^\intercal (A_\theta + A_\theta^T) (w^* - w_t)\\
         &\geq \frac{1}{2} \lambda_{min}(A_\theta + A_\theta^T) \lVert w^* - w_t\lVert ^2
    \end{aligned}
\end{equation*}
\end{proof}

Now we can proceed with the proof of Lemma \ref{lemma:fourthmomentAC}.

\begin{proof}
\begin{equation*}
    \begin{aligned}
        \lVert  w^* - w_{t+1}\lVert ^2 &= \lVert  w^* - Proj_{\Theta}(w_{t} + \alpha_t g_t(w_t))\lVert ^2 \\
        &= \lVert  Proj_{\Theta}(w^*) - Proj_{\Theta}(w_{t} + \alpha_t g_t(w_t))\lVert ^2 \\
        &\leq \lVert w^* - w_{t} - \alpha_t g_t(w_t)\lVert ^2 \\
        &= \lVert w^* - w_{t}\lVert ^2 - 2 \alpha_t g_t(w_t)^{\intercal}(w^* - w_t)+ \alpha_t^2 \lVert g_t(w_t)\lVert ^2 \\
        &= \lVert w^* - w_{t}\lVert ^2 - 2 \alpha_t \bar{g}(w_t)^{\intercal}(w^* - w_t) + 2 \alpha_t \zeta_t(w_t) + \alpha_t^2 F^2\\
    \end{aligned}
\end{equation*}
So we have the following fourth moment bound (since both sides of the inequality are positive):
\begin{equation*}
    \begin{aligned}
        \lVert  w^* - w_{t+1}\lVert ^4 &\leq [\lVert w^* - w_{t}\lVert ^2 - 2 \alpha_t \bar{g}(w_t)^{\intercal}(w^* - w_t) + 2 \alpha_t \zeta_t(w_t) + \alpha_t^2 F^2 \big]^2\\
        &= \lVert w^* - w_{t}\lVert ^4 + 4 \alpha_t^2 (\bar{g}(w_t)^{\intercal}(w^* - w_t) )^2 + 4 \alpha_t^2 (\zeta_t(w_t))^2 + \alpha_t^4 F^4 \\
        &- 4 \alpha_t\lVert w^* - w_{t}\lVert ^2 (\bar{g}(w_t)^{\intercal}(w^* - w_t) ) + 4 \alpha_t \lVert w^* - w_{t}\lVert ^2 \zeta_t(w_t) + 2 \alpha_t^2 F^2 \lVert w^* - w_{t}\lVert ^2 \\
        &- 8 \alpha_t^2 (\bar{g}(w_t)^{\intercal}(w^* - w_t) ) \zeta_t(w_t) - 4 \alpha_t^3 F^2  (\bar{g}(w_t)^{\intercal}(w^* - w_t) ) + 4 \alpha_t^3 F^2 \zeta_t(w_t) \\
    \end{aligned}
\end{equation*}
By Lemma \ref{lemma:helperTD}, 
\begin{equation*}
    \begin{aligned}
        \lVert  w^* - w_{t+1}\lVert ^4 \leq& \lVert w^* - w_{t}\lVert ^4 + 4 \alpha_t^2 (\bar{g}(w_t)^{\intercal}(w^* - w_t) )^2 + 4 \alpha_t^2 (\zeta_t(w_t))^2 + \alpha_t^4 F^4 \\
        &- 2 \alpha_t\lVert w^* - w_{t}\lVert ^2 \varsigma \lVert w^* - w_t\lVert ^2 + 4 \alpha_t \lVert w^* - w_{t}\lVert ^2 \zeta_t(w_t) + 2 \alpha_t^2 F^2 \lVert w^* - w_{t}\lVert ^2 \\
        &+ 4 \alpha_t^2 \varsigma \lVert w^* - w_t\lVert ^2 |\zeta_t(w_t)| - 2 \alpha_t^3 F^2  \varsigma \lVert w^* - w_t\lVert ^2 + 4 \alpha_t^3 F^2 \zeta_t(w_t) \\
        \leq& (1 - 2 \alpha_t \varsigma) \lVert w^* - w_{t}\lVert ^4 + 4 \alpha_t^2 (\bar{g}(w_t)^{\intercal}(w^* - w_t) )^2 + 4 \alpha_t^2 (\zeta_t(w_t))^2 + \alpha_t^4 F^4 \\
        & + (4 \alpha_t +  4 \alpha_t^2 \varsigma)\lVert w^* - w_{t}\lVert ^2 |\zeta_t(w_t)| + (2 \alpha_t^2 F^2 - 2 \alpha_t^3 F^2  \varsigma ) \lVert w^* - w_{t}\lVert ^2 + 4 \alpha_t^3 F^2 \zeta_t(w_t) .\\
    \end{aligned}
\end{equation*}
We can utilize the bounds $\lVert w\lVert  \leq R$, $|\zeta_t(w) | \leq 2F^2$, $\lVert g_t(w)\lVert  \leq F$ from Lemma \ref{lemma:6and10} to arrive at
\begin{equation*}
    \begin{aligned}
        \lVert  w^* - w_{t+1}\lVert ^4 \leq& (1 - 2 \alpha_t \varsigma) \lVert w^* - w_{t}\lVert ^4 + 16 \alpha_t^2 F^2 R^2 + 16 \alpha_t^2 F^4 + \alpha_t^4 F^4 \\
        & + (4 \alpha_t +  4 \alpha_t^2 \varsigma)\lVert w^* - w_{t}\lVert ^2 |\zeta_t(w_t)| + (2 \alpha_t^2 F^2 - 2 \alpha_t^3 F^2  \varsigma ) 4R^2 + 8 \alpha_t^3 F^4 \\
        \leq& (1 - 2 \alpha_t \varsigma) \lVert w^* - w_{t}\lVert ^4 + (8 \alpha_t +  8 \alpha_t^2 \varsigma)R^2|\zeta_t(w_t)| + 24 \alpha_t^2 F^2 R^2 + 16 \alpha_t^2 F^4 \\
        &+ \alpha_t^4 F^4  - 8 \alpha_t^3 F^2  \varsigma  R^2 + 8 \alpha_t^3 F^4 .\\
    \end{aligned}
\end{equation*}
After some rearrangement of terms, we have
\begin{equation*}
    \begin{aligned}
          \alpha_t \varsigma \lVert w^* - w_{t}\lVert ^4 \leq& (1 -  \alpha_t \varsigma) \lVert w^* - w_{t}\lVert ^4 - \lVert  w^* - w_{t+1}\lVert ^4 + ( 8\alpha_t +  8 \alpha_t^2 \varsigma)R^2|\zeta_t(w_t)| \\
        &+ 24 \alpha_t^2 F^2 R^2 + 16 \alpha_t^2 F^4 + \alpha_t^4 F^4  - 8 \alpha_t^3 F^2  \varsigma  R^2 + 8 \alpha_t^3 F^4 \\
        \mathbb{E}[\lVert w^* - w_{t}\lVert ^4] \leq& (\frac{1}{\alpha_t \varsigma } -  1) \mathbb{E}[\lVert w^* - w_{t}\lVert ^4] - \frac{1}{\alpha_t \varsigma }\mathbb{E}[\lVert  w^* - w_{t+1}\lVert ^4] + (\frac{8}{\varsigma} +  8 \alpha_t)R^2 \mathbb{E}[|\zeta_t(w_t)|] \\
        &+ \frac{24 \alpha_t F^2 R^2}{\varsigma} + \frac{16 \alpha_t F^4}{\varsigma} + \frac{\alpha_t^3 F^4}{\varsigma}  - 8 \alpha_t^2 F^2  R^2 + \frac{8 \alpha_t^2 F^4 }{\varsigma}.\\
    \end{aligned}
\end{equation*}
Let $\alpha_t = \frac{1}{(t + 1) \varsigma}$, then we have 
\begin{equation*}
    \begin{aligned}
        \mathbb{E}[\lVert w^* - w_{t}\lVert ^4] \leq& t \mathbb{E}[\lVert w^* - w_{t}\lVert ^4] - (t + 1)\mathbb{E}[\lVert  w^* - w_{t+1}\lVert ^4] + (\frac{8}{\varsigma} +  \frac{8}{(t + 1) \varsigma})R^2 \mathbb{E}[|\zeta_t(w_t)|] \\
        &+ \frac{24 F^2 R^2}{\varsigma^2 (t + 1) } + \frac{16 F^4}{\varsigma^2 (t + 1)} + \frac{ F^4}{\varsigma^4 (t + 1)^3}  - \frac{8 F^2  R^2}{(t + 1)^2 \varsigma^2 } + \frac{8 F^4 }{\varsigma^3 (t + 1)^2}.\\
    \end{aligned}
\end{equation*}
Then we sum on either side from $0$ to $K - 1$ and divide by $K$, using the facts that $\sum_{t = 1}^K \frac{1}{t^2} \leq~\frac{\pi^2}{6}$ and $\sum_{t = 1}^K \frac{1}{t} = \log(K) + O(1)$ to conclude
\begin{equation*}
    \begin{aligned}
        \frac{1}{K} \sum_{t = 0}^{K -1} \mathbb{E}[\lVert w^* - w_{t}\lVert ^4] \leq& \frac{1}{K} \sum_{t = 0}^{K -1} [t \mathbb{E}[\lVert w^* - w_{t}\lVert ^4] - (t + 1)\mathbb{E}[\lVert  w^* - w_{t+1}\lVert ^4] ] + \frac{1}{K} \sum_{t = 0}^{K -1} (\frac{8}{\varsigma} +  \frac{8}{(t + 1) \varsigma})R^2 \mathbb{E}[|\zeta_t(w_t)|] \\
        &+ \frac{1}{K} \sum_{t = 0}^{K -1} [\frac{24 F^2 R^2}{\varsigma^2 (t + 1) } + \frac{16 F^4}{\varsigma^2 (t + 1)} + \frac{ F^4}{\varsigma^4 (t + 1)^2}  + \frac{8 F^4 }{\varsigma^3 (t + 1)^2} ]\\
        \leq& \frac{\lVert w_1 - w^*\lVert ^4}{K}  + \frac{1}{K} \sum_{t = 0}^{K -1} (\frac{8}{\varsigma} +  \frac{8}{(t + 1) \varsigma})R^2 \mathbb{E}[|\zeta_t(w_t)|] + \frac{1}{K} \sum_{t = 0}^{K -1} [\frac{24 F^2 R^2}{\varsigma^2 (t + 1) } + \frac{16 F^4}{\varsigma^2 (t + 1)} ]\\
        &+ \frac{4 F^4 \pi^2 }{3 \varsigma^3 K} + \frac{F^4 \pi^2 }{6 \varsigma^4 K }\\
        \leq& \frac{\lVert w_1 - w^*\lVert ^4}{K}  + \frac{1}{K} \sum_{t = 0}^{K -1} \frac{16 R^2}{\varsigma} \mathbb{E}[|\zeta_t(w_t)|] + (\frac{24 F^2 R^2 + 16 F^4}{\varsigma^2 }) \frac{\log K + O(1)}{K} \\
        &+ \frac{4 F^4 \pi^2 }{3 \varsigma^3 K } + \frac{F^4 \pi^2 }{6 \varsigma^4 K}.\\
    \end{aligned}
\end{equation*}
To bound the summation on the right hand side, we can apply the results of Lemma \ref{lemma:modified11} for $t \leq K-1$, with $\tau_0~=~\tau^{\text{\tiny mix}} (\alpha_{T-1}) \leq \frac{\log(m \varsigma T)}{\log(r^{-1})}$. Then we have
\begin{equation*}
    \begin{aligned}
        \sum_{t = 0}^{K - 1}\mathbb{E}[|\zeta_t (w_t)|] &\leq F^2(8 + 6 \tau_0)  \sum_{t = 0}^{K - 1}\alpha_{t^*} + 10 F^2 m \sum_{t = 0}^{K - 1} r^t \\
        &= F^2(8 + 6 \tau_0) \sum_{t = 0}^{\tau_0} \alpha_0 + F^2(8 + 6 \tau_0) \sum_{t = \tau_0 + 1}^{K-1} \alpha_t +  10 F^2 m \sum_{t = 0}^{K - 1} r^t \\
        &\leq F^2(8 + 6 \tau_0) \sum_{t = 0}^{\tau_0} \alpha_0 + F^2(8 + 6 \tau_0) \sum_{t = \tau_0 + 1}^{K-1} \alpha_t +  \frac{10 F^2 m}{1 - r} \\
        &= F^2(8 + 6 \tau_0) \tau_0 \alpha_0 + F^2(8 + 6 \tau_0) \sum_{t = \tau_0 + 1}^{K-1} \alpha_t +  \frac{10 F^2 m}{1 - r} \\
        &\leq \frac{8F^2 }{\varsigma} \frac{\log(m \varsigma K)}{\log(r^{-1})} + \frac{6 F^2}{\varsigma }\frac{\log^2(m \varsigma K)}{\log^2(r^{-1})} + \frac{F^2}{\varsigma} (8 + 6 \frac{\log(m \varsigma K)}{\log(r^{-1})})(\log K + O(1)) + \frac{10 F^2 m}{1 - r}\\
        &\leq (\frac{6 F^2}{\varsigma \log^2 (r^{-1})} + \frac{6 F^2}{\varsigma \log (r^{-1})}) \log^2 K + O(\log(K)) + O(1) \\
        &\leq \frac{12 F^2}{\varsigma \log^2 (r^{-1})} \log^2 K + O(\log(K)) + O(1) .\\
    \end{aligned}
\end{equation*}
So we have for the original expression 
\begin{equation*}
    \begin{aligned}
         \frac{1}{K} \sum_{t = 0}^{K -1} \mathbb{E}[\lVert w^* - w_{t}\lVert ^4] \leq&  (\frac{192 F^2 R^2}{\varsigma^2 \log^2(r^{-1})} + O(\frac{1}{\log K}) + O(\frac{1}{\log^2 K})) \cdot \frac{\log^2 K}{K},
         \end{aligned}
\end{equation*}
and by Jensen's inequality, we obtain
\begin{equation*}
    \mathbb{E}[\lVert w^* - \bar{w}_{K}\lVert ^4] \leq \frac{1}{K} \sum_{t = 0}^{K -1} \mathbb{E}[\lVert w^* - w_{t}\lVert ^4] \leq 
    (\frac{192 F^2 R^2}{\varsigma^2 \log^2(r^{-1})} + O(\frac{1}{\log K}) + O(\frac{1}{\log^2 K})) \cdot \frac{\log^2 K}{K} .\\
\end{equation*}
\end{proof}

\subsubsection{Proof of Lemma \ref{lemma:finalfourthmomentACmu}}
\label{sec:whatdoievencallthese}

\textit{Proof.} We establish a bound on $\mathbb{E}[\lVert w^* - \bar{w}_{K}\lVert ^4]$ via Lemma \ref{lemma:fourthmomentAC}. Let $\epsilon = \mu^2$. Then we want to find $K$ large enough such that 
\begin{equation*}
\log(K) \leq \epsilon  \sqrt{K}    
\end{equation*}
We look for the asymptotic solution to 
\begin{equation*}
\log(K) = \epsilon \sqrt{K}    
\end{equation*}
in the form $K = \frac{k_1}{\epsilon^2}$. Plugging this in, we get
\begin{equation*}
\log(k_1) + \log(\frac{1}{\epsilon^2}) = \sqrt{k_1}    
\end{equation*}
By the method of dominant balance, we have
\begin{equation*}
    \begin{aligned}
        \sqrt{k_1} &\approx \log (\frac{1}{\epsilon^2}) \\
        k_1 &\approx \log^2(\frac{1}{\epsilon^2})
    \end{aligned}
\end{equation*}
So we have $K = O(\frac{ \log^2(\frac{1}{\epsilon^2})}{\epsilon^2})$.

\section{Policy Gradient Theorem}
\label{PGT}

The original policy gradient theorem derived in \cite{1992SuttonOG} addresses the gradient of the value function, which is a slightly different objective that we denote as $\tilde{J}(\theta)$ as follows
$$\tilde{J}(\theta) = V^{\pi_\theta}(s_0) = \mathbb{E}_{\pi_\theta}[\sum_{k = 0}^\infty \gamma^k \mathcal{R}(s_k, a_k) | s_0] $$
Then from \cite{1992SuttonOG} we have the original policy gradient theorem:
$$\nabla \tilde{J} (\theta) = \sum_{s \in \mathcal{S}} d^\pi (s) \sum_{a \in \mathcal{A}}\nabla \pi_\theta (s, a) Q^{\pi_\theta} (s, a) $$
Where
$$d^{\pi} (s) = \sum_{k = 0}^\infty \gamma^k Pr(s_k = s | s_0, \pi)$$
We consider instead the expectation of the value function over an initial state distribution:
$$J(\theta) = \mathbb{E}_{\rho_0}[V^{\pi_\theta}(s_0)]$$
Then
$$\nabla J(\theta) = \sum_{s \in \mathcal{S}} d^{\pi_\theta, \rho_0} (s) \sum_{a \in \mathcal{A}} \nabla \pi_\theta (a | s) Q ^\pi (s, a)$$
Where
$$d^{\pi_\theta, \rho_0} (s) = \sum_{k = 0}^\infty \gamma^k \mathbb{E}_{\rho_0}[Pr(s_k = s | s_0, \pi_\theta)]$$
When we implement the ``log-likelihood trick", we have
$$\nabla J(\theta) = \sum_s d^{\pi_\theta, \rho_0} \sum_a  \pi_\theta (a | s) \nabla \log \pi_\theta (a | s) Q ^{\pi_\theta} (s, a)$$
We can ``unroll" this result as in \cite{wu22sensitivity} to acquire the temporal formulation:
$$= \mathbb{E}_{\pi, \rho_0} [\sum_{k = 0}^\infty \gamma^k \nabla \log \pi(a_k|s_k) Q^\pi (s_k, a_k)]$$
To derive the GPOMDP estimator from this result, we use the definition of $Q^\pi$
$$Q^\pi(s, a) = \mathbb{E}_\pi[\sum_{t = 0}^\infty \gamma^t \mathcal{R}(s_t, a_t)| s_0 = s, a_0 = a] = \mathbb{E}_\pi[\sum_{t = 0}^\infty \gamma^t \mathcal{R}(s_{t+k}, a_{t+k})| s_k = s, a_k = a]$$
\begin{equation*} \begin{aligned}
\nabla J(\theta) &= \mathbb{E}_{\pi, \rho_0} [\sum_{k = 0}^\infty \gamma^k \nabla \log \pi(a_k|s_k) \mathbb{E}_\pi[\sum_{t = 0}^\infty \gamma^t R(s'_{t+k}, a'_{t+k}) | s'_k = s_k, a'_k = a_k]] \\
&= \mathbb{E}_{\pi, \rho_0} [\sum_{k = 0}^\infty \mathbb{E}_\pi[\sum_{t = 0}^\infty \gamma^{k + t} \nabla \log \pi(a_k|s_k) \mathcal{R}(s'_{t+k}, a'_{t+k}) | s'_k = s_k, a'_k = a_k]] \\
&= \mathbb{E}_{\pi, \rho_0} [\sum_{k = 0}^\infty \sum_{t = 0}^\infty \gamma^{k + t} \nabla \log \pi(a_k|s_k) \mathcal{R}(s_{t+k}, a_{t+k}) ] \\
&= \mathbb{E}_{\pi, \rho_0} [\sum_{k = 0}^\infty \sum_{t = k}^\infty \gamma^{t} \nabla \log \pi(a_k|s_k) \mathcal{R}(s_{t}, a_{t}) ] 
\end{aligned} \end{equation*} 
And so we end with an unbiased estimator of the policy gradient 
$$\nabla J(\theta) = \mathbb{E}_{\tau \sim p (\cdot | \theta)} [ \nabla \log p_\theta(\tau) \mathcal{R}(\tau)].$$


\end{document}